\documentclass[sigconf]{aamas}
\usepackage[utf8]{inputenc}
\usepackage[T1]{fontenc}
\usepackage{hyperref}
\usepackage{url}
\usepackage{booktabs}
\usepackage{amsfonts}
\usepackage{nicefrac}
\usepackage{microtype}
\usepackage{xcolor}

\usepackage{algorithm}
\usepackage[noend]{algpseudocode}

\usepackage{multirow}
\usepackage{amsthm}
\usepackage{amsmath}
\usepackage{mathtools}

\usepackage{wrapfig}
\usepackage[skip=6pt]{caption}

\newtheorem{theorem}{Theorem}
\newtheorem{corollary}{Corollary}
\newtheorem{lemma}{Lemma}
\newtheorem{proposition}{Proposition}
\newtheorem{definition}{Definition}
\newtheorem{assumption}{Assumption}
\newtheorem*{remark}{Remark}

\usepackage{balance}
\usepackage{etoolbox}

\title{Decentralized Federated Policy Gradient with Byzantine Fault-Tolerance and Provably Fast Convergence}

\author{Philip Jordan}
\affiliation{
  \institution{ETH Zürich}
  \country{Switzerland}}
\email{jordanph@ethz.ch}

\author{Florian Grötschla}
\affiliation{
  \institution{ETH Zürich}
  \country{Switzerland}}
\email{fgroetschla@ethz.ch}

\author{Flint Xiaofeng Fan}
\affiliation{
  \institution{National University of Singapore}
  \country{Singapore}}
\email{fxf@u.nus.edu}

\author{Roger Wattenhofer}
\affiliation{
  \institution{ETH Zürich}
  \country{Switzerland}}
\email{wattenhofer@ethz.ch}

\begin{abstract}
In \emph{Federated Reinforcement Learning (FRL)}, agents aim to collaboratively learn a common task, while each agent is acting in its local environment without exchanging raw trajectories. Existing approaches for FRL either (a) do not provide any fault-tolerance guarantees (against misbehaving agents), or (b) rely on a trusted central agent (a single point of failure) for aggregating updates. We provide the first decentralized Byzantine fault-tolerant FRL method. Towards this end, we first propose a new centralized Byzantine fault-tolerant policy gradient (PG) algorithm that improves over existing methods by relying only on assumptions standard for non-fault-tolerant PG. Then, as our main contribution, we show how a combination of robust aggregation and Byzantine-resilient agreement methods can be leveraged in order to eliminate the need for a trusted central entity. Since our results represent the first sample complexity analysis for Byzantine fault-tolerant decentralized federated non-convex optimization, our technical contributions may be of independent interest. Finally, we corroborate our theoretical results experimentally\footnote{Our code is available at \url{https://github.com/philip-jordan/decentralized-byzantine-RL}.} for common RL environments, demonstrating the speed-up of decentralized federations w.r.t.\ the number of participating agents and resilience against various Byzantine attacks.
\end{abstract}

\keywords{Federated Reinforcement Learning; Policy Gradient; Byzantine Fault-Tolerance; Decentralization; Robust Optimization}

\newcommand{\R}{\mathbb{R}}
\newcommand{\N}{\mathbb{N}}
\newcommand{\E}{\mathbb{E}}
\newcommand{\pr}{\text{Pr}}

\DeclareMathOperator*{\argmin}{arg\,min}

\begin{document}

\pagestyle{fancy}
\fancyhead{}

\maketitle

\section{Introduction}
Many real-world reinforcement learning (RL) systems consist of a group of agents (e.g.\ a fleet of autonomous vehicles), in which all agents aim to learn the same task, each in its local environment. Since RL models often suffer from poor sample complexity, collaboration is highly desirable. However, as in the autonomous driving example, trajectories of environment interactions may be made up of large amounts of video and sensor data (too large to transfer between agents), and possibly also with privacy restrictions. This motivates the need for distributed algorithms that can leverage the power of collaboration without sharing raw trajectories. In the broader machine learning context, this setting has been widely studied under the name \emph{Federated Learning} (FL) \cite{kairouz2021advances,li2020federated}, and has inspired \emph{Federated Reinforcement Learning} (FRL) \cite{qi2021federated, zhuo2019federated, fan2021fault,fan2023fedhql} as an analogous concept in RL.

Policy gradient methods are among the most popular algorithms in model-free RL. Existing work studies how to generalize such approaches to FRL in a fully trusted setting \cite{jiang2022mdpgt}. In many practical situations, however, there may not be any guarantee on the trustworthiness of information provided by the participating agents, be it due to e.g.\ communication failure, or malicious attempts at trying to prevent the system from learning. Methods that tolerate the presence of some fraction of Byzantine agents have previously been proposed and demonstrated to defend against some attacks in practice \cite{fan2021fault}. As we are going to discuss in related work, so far, FRL algorithms need additional non-standard assumptions regarding gradient estimation noise.

Moreover, a crucial limitation of previous methods for achieving Byzantine fault-tolerance is the need for one trusted party responsible for aggregating updates, filtering out potentially malicious inputs, and broadcasting results back to all participants. Introducing such a single point of failure seems like a high price to pay for achieving Byzantine resilience---and is going against the very idea of a trustless and robust design. Hence we are aiming for a decentralized system, i.e., a system in which Byzantine behavior of \emph{any} participant can be tolerated, as long as only a reasonable number of such bad actors occur simultaneously. Algorithms achieving both Byzantine fault-tolerance and decentralization have previously been proposed for general non-convex optimization \cite{el2021collaborative} but analyzed only w.r.t.\ infinite-time asymptotic behavior. We propose a novel method and give finite-time sample complexity guarantees for decentralized federated PG with Byzantine fault-tolerance.

More concretely, \textbf{our contributions} can be summarized as follows:
\begin{itemize}
\item As a starting point, we provide a new centralized Byzantine fault-tolerant federated PG algorithm \textsc{ByzPG}, and prove competitive sample complexity guarantees under assumptions that are standard in non-fault-tolerant PG literature. In particular, unlike previous approaches, we do not rely on deterministic bounds on gradient estimation noise.
\item For our main contribution, we extend the above (centralized) approach to the significantly more challenging decentralized setting: With \textsc{DecByzPG}, we propose a decentralized Byzantine fault-tolerant PG method. To the best of our knowledge, this is the first decentralized Byzantine fault-tolerant algorithm for non-convex optimization with sample complexity guarantee.
\item Technically, we leverage the favorable interplay of fault-tolerant aggregation and agreement mechanisms that so far have only been studied in separation. Key to our analysis is a novel lemma on the concentration of random parameter vectors that helps control the bias incurred from agreement.
\item To corroborate our theoretical results regarding both \textsc{ByzPG} and \textsc{DecByzPG} experimentally, we demonstrate speed-up as more agents join a federation, as well as tolerance against different Byzantine attack scenarios.
\end{itemize}

The rest of this paper is organized as follows: In Section \ref{sec:bg}, we provide the necessary background on RL while covering related work on FRL and fault-tolerance. Section \ref{sec:setup} introduces our setup regarding communication, the Byzantine failure model, and technical assumptions needed for the sample complexity analysis. As a warm-up, in Section \ref{sec:ByzPG}, we introduce our centralized algorithm \textsc{ByzPG} which in Section \ref{sec:DecByzPG} is generalized to the decentralized \textsc{DecByzPG}. Finally, our experiments are described and evaluated in Section \ref{sec:exp}.

\section{Background \& Related work}
\label{sec:bg}

\textbf{Reinforcement learning (RL) and policy gradient (PG).} The RL setup is commonly modeled as a Markov Decision Process (MDP, see also \cite{sutton2018reinforcement}) $\mathcal M:= \{\mathcal{S}, \mathcal{A}, \mathcal{P}, \mathcal{R}, \gamma, \rho\}$ with state space $\mathcal{S}$, action space $\mathcal{A}$, transition dynamics $\mathcal{P}(s^\prime \mid s, a)$, and reward $\mathcal{R}: \mathcal{S} \times \mathcal{A} \mapsto[0, R]$ where $R > 0$, $\gamma \in (0,1)$ is the discount factor, and $\rho$ is the initial state distribution.
Let $\pi$ denote the policy controlling an agent's behavior, i.e., $\pi(a \mid s)$ is the probability that the agent chooses action $a$ at state $s$.
A trajectory $\tau:= \{s_0, a_0, \ldots, s_{H-1}, a_{H-1}\}$ is a sequence of state-action pairs followed by an agent according to a stationary policy $\pi$, where $s_0 \sim \rho$. We define $\mathcal{R}(\tau) :=\sum_{t=0}^{H-1} \gamma^{t} \mathcal{R}\left(s_{t}, a_{t}\right)$ as the cumulative discounted reward for a trajectory $\tau$. Note that here we study episodic MDPs with fixed trajectory horizon $H$.

PG methods are popular in model-free RL \cite{schulman2015TRPO,schulman2017PPO}.
Compared to deterministic value-function-based methods, PG is more effective when applied to tasks with high-dimensional or infinite action spaces. Let $\pi_{\theta}$ denote the policy parameterized by $\theta \in \mathbb{R}^d$, and $p(\tau \mid \pi_{\theta})$ the trajectory distribution induced by policy $\pi_{\theta}$. The expected discounted future reward when following $\pi_\theta$ is given by~$J(\theta) := \E_{\tau \sim p(\cdot \mid \theta)}\left[\mathcal{R}(\tau) \mid \mathcal M \right]$ whose gradient w.r.t.\ $\theta$ is
\begin{align*}
\nabla_{\theta} J(\theta)&=\int_{\tau} \mathcal{R}(\tau) \nabla_{\theta} p(\tau \mid \theta) \,d\tau \\
&=\mathbb{E}_{\tau \sim p(\cdot \mid \theta)}\left[\nabla_{\theta} \log p(\tau \mid \theta) \mathcal{R}(\tau) \mid \mathcal M\right].
\stepcounter{equation}\tag{\theequation}\label{def-full-gradient}
\end{align*}
Hence we can use gradient ascent in order to optimize $J(\theta)$ over $\theta \in \R^d$.
Since \eqref{def-full-gradient} involves computing an integral over all possible trajectories, we typically use stochastic gradient ascent. In each iteration, a batch of trajectories $\{\tau_i\}_{i=1}^M$ is sampled at the current policy $\theta$. Then, the policy is updated by $\theta \leftarrow \theta + \eta \widehat{\nabla}_{M} J\left(\theta\right)$, where $\eta$ is the step size and $\widehat{\nabla}_{M} J(\theta)$ is an estimate of \eqref{def-full-gradient} based on the sampled trajectories $\{\tau_i\}_{i=1}^M$: $\widehat{\nabla}_{M} J(\theta)=\frac{1}{M} \sum_{i=1}^{M} \nabla_{\theta} \log p\left(\tau_{i} \mid \theta \right) \mathcal{R}\left(\tau_{i}\right)$.
Commonly used policy gradient estimators, e.g.\ REINFORCE \cite{williams1992REINFORCE} and GPOMDP \cite{baxter2001GPOMDP}, can be written as
\begin{align*}
\widehat{\nabla}_{M} J(\theta)=\frac{1}{M} \sum_{i=1}^{M} g(\tau_i \mid \theta)
\end{align*}
where $\tau_{i}=\{s_0^i, a_0^i, \ldots, s_{H-1}^{i}, a_{H-1}^{i}\}$ and $g(\tau_i \mid \theta)$ is an unbiased estimate of $\nabla_{\theta} \log p(\tau_{i} \mid  \theta) \mathcal{R}(\tau_{i})$. For more details on gradient estimation and sampling, we refer to Appendix \ref{app:sampling}.

\textbf{Non-convex optimization.} Despite PG's additional challenges of non-stationarity and the non-finite-sum structure, improvements in convergence results in non-convex optimization have generally led to similar progress for optimizing the non-concave $J(\theta)$ in PG. The $\mathcal O( \varepsilon^{-4})$ sample complexity for reaching an $\varepsilon$-stationary point, i.e., $\theta$ such that $\mathbb{E}[\|\nabla J(\theta)\|^2] \leq \epsilon^2$, of SGD \cite{reddi2016stochastic} and vanilla PG \cite{papini2021safe} has been lowered to $\mathcal O( \varepsilon^{-10/3})$ by SVRG, and SVRPG \cite{xu2020improved} respectively. These methods rely on an inner loop that reuses old gradient estimates for reduced variance which in the case of SVRPG is implemented via importance sampling. The recently proposed PAGE estimator \cite{li2021page}, and its PG adaptation PAGE-PG \cite{gargiani2022page}, replace the inner loop by a probabilistic switch, lowering the sample complexity to $\mathcal O( \varepsilon^{-3})$.

\textbf{Fault-tolerance.} Byzantine fault-tolerance \cite{lamport2019byzantine} has long been established as the strongest notion of resilience against arbitrary failure or deliberate manipulation of distributed systems. Regarding previous work in the federated optimization literature, we distinguish between the rather common \emph{centralized}, and the far less studied \emph{decentralized}, sometimes called collaborative, setting.
\begin{enumerate}
\item[(a)] \emph{Centralized:} In the presence of a trusted coordinator, Byzantine fault-tolerant non-convex optimization has been widely studied---with approaches differing mostly in terms of filtering techniques and problem assumptions \cite{blanchard2017machine-Krum,alistarh2018byzantine,allen2020byzantine-iclr}. We refer to \cite{farhadkhani2022byzantine} for an overview of such commonly used Byzantine-resilient methods for aggregating potentially malicious updates at a central server. Regarding Byzantine-tolerant PG, \cite{fan2021fault} shows promising empirical results. However, theoretical guarantees are proven only under deterministic noise bounds which makes results difficult to appreciate in comparison with non-fault-tolerant methods that do not rely on such assumptions. Recently, \cite{gorbunov2023variance} has proposed a non-convex optimization algorithm leveraging the favorable interplay of certain robust aggregators and the above-mentioned variance-reduced PAGE estimator. Our centralized \textsc{ByzPG} extends their ideas into the PG setting, with a modified algorithm and tightened analysis.
\item[(b)] \emph{Decentralized:} In the PG context, there is no previous work studying decentralized Byzantine-tolerant methods. More generally, \cite{el2021collaborative} proposes a fault-tolerant algorithm for decentralized non-convex optimization. While convergence is only proven in an infinite-time asymptotic sense, their notion of averaging agreement is shown to be of crucial importance for decentralized learning. Indeed, the notion of $\epsilon$-approximate Byzantine agreement on $d$-dimensional inputs had previously been proposed \cite{mendes2013multidimensional,mendes2015multidimensional}. However, unlike averaging agreement, such methods show poor applicability in our setting since the fraction of tolerable Byzantines goes to zero as $d$ increases.
\end{enumerate}

\section{Setup and Assumptions}
\label{sec:setup}
\subsection{Distributed Computing Setup}
Communication is assumed to happen in a round-based, synchronous, all-to-all manner among $K$ agents, and the exchange of raw trajectories is prohibited. In particular, our algorithms will only involve sending current values of local policy parameters and respective gradients.

In order to model both system failure as well as malicious agent behavior, we tolerate a fraction of Byzantine agents, in particular:
\begin{assumption}[Byzantine agents]
\label{ass:byz-fraction}
Let $\alpha_{\max}=1/2$ in the centralized setting, and $\alpha_{\max}=1/4$ in the decentralized setting, respectively. Denote by $\mathcal H_t \subset [K]$ the set of honest (i.e.\ non-Byzantine) agents in iteration $t$ of an algorithm. Then there exist $\alpha,\bar\epsilon>0$ such that $\bar\alpha:=\alpha + \bar\epsilon < \alpha_{\max}$ and for all $t$, $\mathcal |\mathcal H_t| \geq (1-\alpha)K$.
\end{assumption}

We point out that $\mathcal H_t$ may be different for each iteration $t$, hence it is of no use for any agent to remember past communication in order to infer who might be Byzantine.

Instead of sending updates as prescribed by our algorithms, Byzantine agents may send arbitrary values. In particular, these values may be chosen by an omniscient entity with access to all information (e.g.\ agents' local state, messages that have been sent, the definition of the algorithm, who is Byzantine, etc.) and controlling all Byzantine agents. This means Byzantine agents may collude or base their behavior on any other non-public information. However, Byzantine agents are not omnipotent, e.g.\ they cannot interfere in communication between honest agents by changing or delaying messages. Moreover, we assume that Byzantines cannot alter local state, not even their own state. In the centralized case, this assumption does not change anything, since our algorithm \textsc{ByzPG} only maintains cross-iteration state at the trusted central agent. In the decentralized case, however, corrupted local state may otherwise be passed on from a Byzantine agent to an honest agent across iterations.
Note also that in particular, any agent not sending messages in the required format or omitting updates, potentially due to failure of the communication network, can be modeled as Byzantine.

\subsection{Reinforcement Learning Assumptions}

Our theoretical analysis aims to bound the required number of sampled trajectories required per agent in order to reach an $\epsilon$-stationary solution. In the centralized case, this refers to the central agent finding $\theta \in \R^d$ such that $\|\nabla J ( \theta ) \| \leq \epsilon$ which can then be broadcast to all participants. A generalized solution concept for the decentralized setting is presented in Section \ref{sec:DecByzPG}.

In the following, we state the set of assumptions our analysis is based on, which is standard in the study of PG, see e.g.\ \cite{papini2018stochastic,xu2020improved,yuan2020stochastic,gargiani2022page}. In particular, we do not require a more restrictive version of Assumption \ref{ass:fin-var} made in \cite{fan2021fault}. Hence, our sample complexity results are amenable to comparison with non-fault tolerant counterparts.

Note that we are assuming homogeneity of all agents' local environments, and all agents hence share the same objective $J(\cdot)$.
\begin{assumption}[Log-policy gradient norm]\label{ass:bounded-policy}
For any $a \in \mathcal{A}$ and $s \in \mathcal{S}$, there exists a constant $G>0$ such that for any $\theta \in \mathbb{R}^d$ we have $\left\|\nabla_\theta \log \pi_\theta(a \mid s)\right\| \leq G$.
\end{assumption}

\begin{assumption}[Log-policy smoothness]\label{ass:smoothness}
For any $\theta \in \mathbb{R}^d$, $\pi_\theta$ is twice differentiable, and for any $a \in \mathcal{A}$ and $s \in \mathcal{S}$, there exists a constant $M>0$ such that $\left\|\nabla_\theta^2 \log \pi_\theta(a \mid s)\right\| \leq M$.
\end{assumption}

\begin{assumption}[Gradient estimator variance]\label{ass:fin-var}
There exists a constant $\sigma>0$ such that for any $\theta \in \mathbb{R}^d$, we have $\operatorname{Var}\left[g(\tau \mid \theta)\right] = \mathbb{E}\|g(\tau\mid\theta) - \nabla J(\theta)\|^2 \leq \sigma^2$.
\end{assumption}
\begin{assumption}[Importance weight variance]\label{ass:fin-imp-var}
For any policy pair $\theta_a, \theta_b \in \mathbb{R}^d$ and $\tau \sim p\left(\cdot \mid \theta_b\right)$, the importance weight $\omega\left(\tau \mid \theta_b, \theta_a\right)=\frac{p\left(\tau \mid \theta_a\right)}{p\left(\tau \mid \theta_b\right)}$ is well-defined. In addition, there exists a constant $W>0$ such that $\operatorname{Var}\left[\omega\left(\tau \mid \theta_b, \theta_a\right)\right] \leq W \| \theta_a-\theta_b \|^2$.
\end{assumption}
For completeness, we restate the following commonly used proposition from \cite{xu2020improved}.
\begin{proposition}\label{prop:smoothness}
Under the above assumptions \ref{ass:bounded-policy}, \ref{ass:smoothness}, \ref{ass:fin-var}, and \ref{ass:fin-imp-var}, with $g(\tau\mid\theta)$ denoting the REINFORCE or GPOMDP gradient estimator, we have for all $\theta,\theta_1,\theta_2 \in \R^d$:
\begin{enumerate}
\item $\left\| g_k(\tau \mid \theta) \right\| \leq C_g$ with $C_g=HG(R+|C_b|)/(1-\gamma)$ and $C_b$ is the baseline reward,
\item $\left\| g(\tau\mid\theta_1)-g(\tau\mid\theta_2) \right\| \leq L_g \left\| \theta_1-\theta_2 \right\|$ with $L_g=HM(R+|C_b|)/(1-\gamma)$, and
\item $J(\theta)$ is $L$-smooth with $L=HR(M+ HG^2)/(1-\gamma)$.
\end{enumerate}
\end{proposition}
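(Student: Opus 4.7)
The plan is to exploit the fact that, for the REINFORCE or GPOMDP estimator, $g(\tau \mid \theta)$ factors as $\sum_{t=0}^{H-1} \nabla_\theta \log \pi_\theta(a_t \mid s_t)\,\Phi_t(\tau)$, where $\Phi_t(\tau)$ is a scalar depending only on the (possibly baseline-corrected) discounted cumulative reward along $\tau$; since per-step rewards lie in $[0,R]$ and are geometrically discounted, $|\Phi_t(\tau)| \leq (R+|C_b|)/(1-\gamma)$ uniformly. Crucially, $\Phi_t(\tau)$ does not depend on $\theta$, so all $\theta$-dependence sits in the log-policy gradient factors, and claims (1) and (2) reduce to statements about those.

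For (1), apply the triangle inequality to the sum over $t$, bound each log-policy gradient by $G$ via Assumption \ref{ass:bounded-policy}, and bound each $|\Phi_t(\tau)|$ as above; the $H$ terms sum to $C_g = HG(R+|C_b|)/(1-\gamma)$. For (2), $g(\tau\mid\theta_1) - g(\tau\mid\theta_2)$ only modifies the log-policy gradient factors; Assumption \ref{ass:smoothness} makes $\nabla_\theta \log \pi_\theta(a\mid s)$ Lipschitz in $\theta$ with constant $M$ (integrate the Hessian along the segment from $\theta_2$ to $\theta_1$), so each of the $H$ summands contributes at most $M(R+|C_b|)/(1-\gamma)\cdot\|\theta_1-\theta_2\|$, summing to $L_g$.

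For (3), I would compute $\nabla^2 J(\theta)$ from $\nabla J(\theta) = \int \mathcal{R}(\tau)\,\nabla_\theta p(\tau\mid\theta)\,d\tau$ and apply the log-derivative identity $\nabla_\theta^2 p = p\bigl[\nabla_\theta^2 \log p + \nabla_\theta \log p\,(\nabla_\theta \log p)^\top\bigr]$ to obtain
\begin{align*}
\nabla^2 J(\theta) = \E_{\tau\sim p(\cdot\mid\theta)}\!\Bigl[\mathcal{R}(\tau)\bigl(\nabla_\theta^2 \log p(\tau\mid\theta) + \nabla_\theta \log p(\tau\mid\theta)(\nabla_\theta \log p(\tau\mid\theta))^\top\bigr)\Bigr].
\end{align*}
Since $\log p(\tau\mid\theta) = \log\rho(s_0) + \sum_t \log\mathcal{P}(s_{t+1}\mid s_t,a_t) + \sum_t \log\pi_\theta(a_t\mid s_t)$ and only the last sum depends on $\theta$, Assumption \ref{ass:bounded-policy} gives $\|\nabla_\theta \log p\|\leq HG$ and Assumption \ref{ass:smoothness} gives $\|\nabla_\theta^2 \log p\|\leq HM$. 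Combined with $|\mathcal{R}(\tau)|\leq R/(1-\gamma)$ (geometric sum), this bounds $\|\nabla^2 J(\theta)\|$ by $(R/(1-\gamma))(HM + H^2G^2) = HR(M+HG^2)/(1-\gamma) = L$.

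The main (still routine) obstacle is the bookkeeping in (3): justifying differentiation under the integral sign (uniform boundedness of the integrand over finite-horizon trajectories suffices), correctly applying the log-derivative identity to rewrite $\nabla_\theta^2 p$, and observing that the $\theta$-independent factors $\rho$ and $\mathcal{P}$ cleanly drop out so that only the $H$ policy terms remain in the per-trajectory bounds.
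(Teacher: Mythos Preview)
Your proof sketch is correct and follows the standard route for these results. However, note that the paper does not actually prove Proposition~\ref{prop:smoothness}: it is explicitly introduced as ``restate[d]'' from \cite{xu2020improved} and given without proof. So there is no paper-proof to compare against; your argument is essentially the one that appears in the cited reference (and earlier in, e.g., the SVRPG literature), namely: factor the estimator as a sum over $H$ time steps of a log-policy gradient times a $\theta$-independent reward weight, bound the weight by a geometric series, and invoke Assumptions~\ref{ass:bounded-policy} and~\ref{ass:smoothness} termwise for (1) and (2); for (3), differentiate under the integral, use the log-derivative identity for $\nabla_\theta^2 p$, and sum the per-step bounds on $\nabla_\theta \log p$ and $\nabla_\theta^2 \log p$ over $H$ steps.
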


\section{Centralized Byzantine-tolerant federated PG}
\label{sec:ByzPG}

In this section, we describe \textsc{ByzPG}, given by Algorithm \ref{alg:ByzPG}, our centralized method for Byzantine fault-tolerant PG. This also serves as a warm-up for introducing parts of our method that are going to reappear in Section \ref{sec:DecByzPG}. Note that in Algorithm~\ref{alg:ByzPG}, $\operatorname{Be} \left( p \right)$ denotes a Bernoulli distribution with success probability $p$ and $\mathcal U(S)$ denotes a uniform distribution over a finite set $S$.

\begin{algorithm}[ht]
\caption{\textsc{ByzPG} at server agent}
\label{alg:ByzPG}
\begin{algorithmic}[1]
\State \textbf{input:} $\theta_0 \in \mathbb R^d$, large batch size $N$, small batch size $B$, step size $\eta$, probability $p \in (0,1]$
\For{$t=0$ {\bfseries to} $T-1$}
\State $c_t \leftarrow \operatorname{\textbf{Sample}} \left( \operatorname{Be} \left( p \right) \right)$ \label{line:sample}
\If{$c_t=1$ or $t=0$}
\For{worker agent $k \in [K]$ in parallel}
\State sample trajectories $\{\tau_{t,i}^{(k)}\}_{i=1}^{N}$ from $p(\cdot \mid \theta_{t})$
\State $\widetilde{v}_t^{(k)}=\frac{1}{N}\sum_{i=1}^N g(\tau_{t,i}^{(k)} \mid \theta_t)$
\EndFor
\State $v_t \leftarrow \operatorname{\textbf{Aggregate}}\left( \langle \widetilde{v}_t^{(k)} \rangle_{k=1}^{K} \right)$
\Comment{$\widetilde{v}_t^{(k)}$ received from worker agent $k$, $\forall k \in [K]$}
\Else
\State sample trajectories $\{\tau_{t,i}\}_{i=1}^{B}$ from $p(\cdot \mid \theta_{t})$
\State $v_t=\frac{1}{B}\sum_{i=1}^B g(\tau_{t,i} \mid \theta_t )+v_{t-1}-\frac{1}{B}\sum_{i=1}^B g^{\omega_{\theta_t}}\left( \tau_{t,i} \mid \theta_{t-1} \right)$
\EndIf
\State $\theta_{t+1}=\theta_t+\eta v_t$
\Comment{broadcast $\theta_{t+1}$ to worker agents}
\EndFor
\State \textbf{output:} $\theta_{\widehat{T}}$ with $\widehat{T} \sim \mathcal U \left([T]\right)$
\end{algorithmic}
\end{algorithm}

\subsection{Method}
Instead of the usual inner loop seen in variance-reduced methods such as SVRPG \cite{papini2018stochastic}, we probabilistically switch between update types, as in PAGE-PG \cite{gargiani2022page}. Concretely, in each iteration, we either (a) sample a large batch of $N$ trajectories at $\theta_t$ for gradient estimation, or (b) sample a small batch of $B$ trajectories and use a variance-reduced estimator incorporating the previous iteration's gradient estimate, and employ importance sampling to correct for non-stationarity. For details on gradient estimation, importance sampling, and the definition of $g^{\omega_{\theta_t}}\left( \tau_{t,i} \mid \theta_{t-1} \right)$ we refer to Appendix \ref{app:sampling}. Note that with probability $p$ we use (a), and (b) otherwise---except for the first iteration, in which only (a) is well-defined. Furthermore, (a) is performed at all worker agents in parallel which have previously received the current parameters $\theta_t$ from the server agent. Then, the individual estimates $\widetilde{v}_t^k$ are aggregated at the server via the \textbf{Aggregate} subroutine described below. In the case of (b), we sample and estimate the gradient only at the server, hence there is no need for aggregation.

We point out that loopless variance-reduction has previously been used in conjunction with Byzantine fault-tolerant aggregation in \cite{gorbunov2023variance}'s \textsc{Byz-VR-Marina}. However, \textsc{ByzPG} distinguishes itself in two ways:
\begin{enumerate}
\item \textsc{ByzPG} only samples at workers in case (a) while \textsc{Byz-VR-Marina} does so in either case. Our analysis suggests that unlike in case (a), in (b), the bias introduced by Byzantine filtering outweighs the benefits from the reduced variance of the aggregated sample. In our PG setting, this modification is key to achieving sample complexity competitive with non-fault-tolerant methods.
\item \textsc{Byz-VR-Marina} is designed for general non-convex finite-sum optimization. \textsc{ByzPG} handles the additional challenges of non-stationarity and not having access to the full gradient by relying on importance sampling and switching between a large and small batch size.
\end{enumerate}

The following notion of robust aggregation specifying our requirements on the \textbf{Aggregate} subroutine is adopted from \cite{gorbunov2023variance} and has first appeared in a similar form in \cite{karimireddy2021learning}.

\begin{definition}[robust aggregation]\label{def:aggregate}
Let $C_{ra} > 0$ and $\alpha \in [0,1/2)$. A function $\operatorname{\textbf{Aggregate}}:(\mathbb R^d)^K \to \mathbb R^d$ is an $(\alpha,C_{ra})$-robust aggregator if for any tuple of inputs $\langle \theta^{(k)} \rangle_{k \in [K]}$ with $\theta^{(k)} \in \mathbb R^d$, and for any $\mathcal H \subseteq K$ with $|\mathcal H| \geq (1-\alpha)K$, denoting $\hat{\theta} := \operatorname{\textbf{Aggregate}}(\theta^{(1)},\dots,\theta^{(K)})$, it holds that
\begin{align*}
\mathbb E \left[ \| \hat{\theta}-\bar{\theta} \|^2 \right] \leq \frac{C_{ra} \alpha}{|\mathcal H|(|\mathcal H|-1)} \sum_{i,l \in \mathcal H} \mathbb E \left[ \| \theta^{(i)}-\theta^{(l)} \|^2 \right]
\end{align*}
where $\bar{\theta}:=\frac{1}{|\mathcal H|}\sum_{i \in \mathcal H} \theta^{(i)}$. Expectations are taken over the randomness of the input.
\end{definition}
Known implementations satisfying Definition \ref{def:aggregate} are discussed in Appendix \ref{app:impl-aggregation}. In particular, there exist $(\alpha,C_{ra})$-robust aggregators for constant $C_{ra}$ and any $\alpha \in [0,1/2)$.

\subsection{Convergence Analysis and Sample Complexity}
Next, we present the convergence guarantees for \textsc{ByzPG}, with proofs deferred to Appendix \ref{app:ByzPG-proofs}.
\begin{theorem}\label{thm:ByzPG}
Let Assumptions \ref{ass:bounded-policy}, \ref{ass:smoothness}, \ref{ass:fin-var}, and \ref{ass:fin-imp-var} hold.
Suppose \textbf{Aggregate} is an $(\alpha,C_{ra})$-robust aggregator with constant $C_{ra} > 0$ and $\alpha$ satisfying Assumption \ref{ass:byz-fraction}. Then the following holds for the output of \textsc{ByzPG}, i.e., Algorithm \ref{alg:ByzPG}: For $\eta = \Theta(\min\{ \sqrt{pK},1/L\})$, there exists a constant $C>0$, such that for any $T \geq 1$,
\begin{align*}
\E \left[\| \nabla J \left( \theta_{\widehat{T}} \right) \|^2 \right] \leq \frac{2 \E \left[ \Phi_0 \right]}{\eta T} + \frac{C \sigma^2}{N} \left( \alpha+\frac{1}{K} \right)
\end{align*}
with $\Phi_0 := J^{*} - J \left( \theta_0 \right) + \frac{\eta}{p} \left\| v_0 - \nabla J \left( \theta_0 \right) \right\|^2$ and $J^{*}:=\max_{\theta \in \R^d} J(\theta)$.
\end{theorem}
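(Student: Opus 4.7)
The plan is to follow the standard PAGE-style loopless variance-reduction analysis (as in Gargiani et al.\ and \cite{gorbunov2023variance}), suitably adapted to absorb the aggregation bias introduced by the Byzantine filter. The central object is the Lyapunov function already defined in the statement,
\[
\Phi_t := J^* - J(\theta_t) + \frac{\eta}{p}\|v_t - \nabla J(\theta_t)\|^2,
\]
and the goal is a one-step descent inequality of the form
\[
\E[\Phi_{t+1} \mid \mathcal F_t] \leq \Phi_t - \tfrac{\eta}{2}\|\nabla J(\theta_t)\|^2 + \frac{C\eta\sigma^2}{N}\bigl(\alpha + 1/K\bigr)
\]
for a suitable constant $C>0$. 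Telescoping over $t=0,\dots,T-1$, dividing by $\eta T$, and using that $\widehat T\sim\mathcal U([T])$ will then yield the stated bound.

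For the descent of the first summand of $\Phi_t$, I would apply $L$-smoothness of $J$ from Proposition \ref{prop:smoothness} to obtain
\[
J^* - J(\theta_{t+1}) \leq J^* - J(\theta_t) - \eta\langle\nabla J(\theta_t), v_t\rangle + \tfrac{L\eta^2}{2}\|v_t\|^2,
\]
and expand the inner product via $\langle a,b\rangle = \tfrac12(\|a\|^2+\|b\|^2-\|a-b\|^2)$ in order to isolate $\|\nabla J(\theta_t)\|^2$, $\|v_t\|^2$, and the variance $\|v_t-\nabla J(\theta_t)\|^2$. The bulk of the work then is to control $\E[\|v_{t+1}-\nabla J(\theta_{t+1})\|^2\mid \mathcal F_t]$ by splitting on the Bernoulli switch $c_t$.

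On the event $\{c_t=1\}$ (probability $p$), $v_{t+1}=\textbf{Aggregate}(\widetilde v_{t+1}^{(k)})$. Writing $\bar v_{t+1}$ for the mean over honest workers, a bias/variance split gives
\[
\E\|v_{t+1}-\nabla J(\theta_{t+1})\|^2 \leq 2\,\E\|v_{t+1}-\bar v_{t+1}\|^2 + 2\,\E\|\bar v_{t+1}-\nabla J(\theta_{t+1})\|^2.
\]
The second term is at most $\sigma^2/((1-\alpha)KN) = O(\sigma^2/(KN))$ by Assumption \ref{ass:fin-var} and independence across honest workers. The first term is controlled by Definition \ref{def:aggregate}: each pairwise distance $\E\|\widetilde v_{t+1}^{(i)}-\widetilde v_{t+1}^{(l)}\|^2$ is at most $4\sigma^2/N$ by independence, yielding $\E\|v_{t+1}-\bar v_{t+1}\|^2 = O(C_{ra}\alpha\sigma^2/N)$ and hence an overall $O\!\left(\tfrac{\sigma^2}{N}(\alpha+\tfrac{1}{K})\right)$. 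On the complementary event $\{c_t=0\}$, $v_{t+1}$ follows the PAGE-PG recursion with importance weights, and a standard calculation exploiting that $g^{\omega_{\theta_t}}(\tau\mid\theta_{t-1})$ is unbiased for $\nabla J(\theta_t)$, together with Assumptions \ref{ass:smoothness}, \ref{ass:fin-imp-var} and Proposition \ref{prop:smoothness}, gives
\[
\E\|v_{t+1}-\nabla J(\theta_{t+1})\|^2 \leq \|v_t - \nabla J(\theta_t)\|^2 + \frac{C'}{B}\|\theta_{t+1}-\theta_t\|^2
\]
for a constant $C'$ depending on $L_g$, $G$, $C_g$, $W$, and $H$.

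Combining both cases with weights $p$ and $1-p$, multiplying by $\eta/p$, and adding the smoothness bound, the coefficient of $\|v_t-\nabla J(\theta_t)\|^2$ becomes non-positive because of the $\tfrac{\eta}{p}(1-p)$-versus-$\tfrac{\eta}{p}$ cancellation (which is precisely why the prefactor $\eta/p$ is placed in $\Phi_t$), while the coefficient of $\|v_t\|^2$ stays non-positive whenever $\eta\leq 1/L$ and $\eta^2(1-p)/(pB)$ stays bounded; substituting $\|\theta_{t+1}-\theta_t\|^2 = \eta^2\|v_t\|^2$ and balancing $B$ with the step size forces the choice $\eta=\Theta(\min\{\sqrt{pK},1/L\})$ and leaves exactly the target bias term $O(\eta\sigma^2(\alpha+1/K)/N)$. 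The main obstacle I anticipate is this last accounting step: the Byzantine-induced error from case (a) is injected only every $\sim 1/p$ iterations and must be prevented from being amplified as it propagates through the $(1-p)$-weighted recursion of case (b), which is what ties together the specific coefficient $\eta/p$ in $\Phi_t$, the $\sqrt{pK}$ scaling of the step size, and the modification (relative to \textsc{Byz-VR-Marina}) that workers sample only in case (a), so that the aggregation bias does not leak into every step.
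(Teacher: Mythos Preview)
Your plan mirrors the paper's proof almost exactly: the same Lyapunov function $\Phi_t$, the smoothness descent (the paper's Lemma~\ref{lem:ByzPG-appl-smoothness}), the coin-flip split for the variance recursion (the paper's Lemmas~\ref{lem:ByzPG-distortion-aux} and~\ref{lem:ByzPG-distortion}), the robust-aggregator bound in case~(a), the PAGE-PG recursion in case~(b), and the final telescoping. Two small slips: the coin that governs $v_{t+1}$ is $c_{t+1}$, not $c_t$; and $g^{\omega_{\theta_t}}(\tau\mid\theta_{t-1})$ is unbiased for $\nabla J(\theta_{t-1})$, not $\nabla J(\theta_t)$.

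There is, however, one genuine gap in your closing step. From your case-(b) bound
\[
\E\|v_{t+1}-\nabla J(\theta_{t+1})\|^2 \le \|v_t-\nabla J(\theta_t)\|^2 + \tfrac{C'}{B}\|\theta_{t+1}-\theta_t\|^2,
\]
the resulting coefficient in front of $\|\theta_{t+1}-\theta_t\|^2$ in the descent inequality is $A=\Theta(1/(pB))$, so non-positivity of $1-L\eta-A\eta^2$ forces only $\eta=O(\sqrt{pB})$, not the theorem's $\eta=\Theta(\sqrt{pK})$. Your sentence ``balancing $B$ with the step size forces the choice $\eta=\Theta(\min\{\sqrt{pK},1/L\})$'' does not follow from what you derived. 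The paper obtains the $K$-dependence by asserting in Lemma~\ref{lem:ByzPG-distortion-aux} (part~3) that the case-(b) variance carries an additional factor $1/K$, namely $(C_p/(BK)+L^2/K)\|\theta_t-\theta_{t-1}\|^2$, which then gives $A=\Theta(1/(pK))$ in Lemma~\ref{lem:ByzPG-distortion} and hence the stated step size. Since in \textsc{ByzPG} only the server samples in case~(b) (no averaging over $K$ workers), you should look carefully at where this $1/K$ is supposed to come from; as written, your argument cannot reach the theorem's step-size scaling, and the paper's justification for that factor is deferred to the decentralized analysis rather than argued directly for the centralized algorithm.
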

\begin{corollary}\label{cor:ByzPG}
In the setting of Theorem \ref{thm:ByzPG}, by choosing $p=1/N$, the expected number of trajectories that need to be sampled per agent to achieve $\E[ \| \nabla J \left( \theta_{\widehat{T}} \right) \|^2] \leq \epsilon^2$ is
\begin{align*}
\mathcal O \left( \frac{\alpha^{1/2}}{K^{1/2}\epsilon^3}+\frac{1}{K \epsilon^3} \right).
\end{align*}
\end{corollary}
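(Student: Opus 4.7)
The plan is to invoke Theorem \ref{thm:ByzPG} and balance the two error terms to hit the target accuracy $\epsilon^2$, then translate the iteration count into a per-agent trajectory count. First, I would set the statistical term $C\sigma^2(\alpha+1/K)/N$ to at most $\epsilon^2/2$, forcing the large batch size $N = \Theta(\sigma^2(\alpha+1/K)/\epsilon^2)$. Second, with $p = 1/N$ as prescribed, we have $\sqrt{pK} = \sqrt{K/N}$, which is of order $\epsilon\sqrt{K/(\sigma^2(\alpha+1/K))}$; for any fixed $\sigma, K, L, \alpha$ and sufficiently small $\epsilon$ this is at most $1/L$, so Theorem \ref{thm:ByzPG} allows the choice $\eta = \Theta(\sqrt{K/N})$. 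Enforcing $2\E[\Phi_0]/(\eta T) \leq \epsilon^2/2$ then yields $T = \Theta(\sqrt{N/K}/\epsilon^2)$.

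Next, I would count trajectories per worker agent across the $T$ iterations. Inspecting Algorithm \ref{alg:ByzPG}, a worker only samples in iterations in which $c_t = 1$ or $t=0$, and contributes $N$ trajectories in each such iteration (the small-batch branch is executed at the server only). By linearity of expectation, the expected number of trajectories per worker is
\begin{align*}
N \cdot \bigl(1 + (T-1)p\bigr) \;=\; N + (T-1) \;=\; \Theta(N + T),
\end{align*}
since $p = 1/N$. Substituting $N$ and $T$ and using $\sqrt{a+b} \leq \sqrt{a} + \sqrt{b}$ gives
\begin{align*}
T \;=\; \Theta\!\left(\frac{\sqrt{(\alpha + 1/K)/K}}{\epsilon^3}\right) \;=\; \mathcal{O}\!\left(\frac{\alpha^{1/2}}{K^{1/2}\epsilon^3} + \frac{1}{K\epsilon^3}\right),
\end{align*}
while $N = \mathcal{O}((\alpha + 1/K)/\epsilon^2)$ is of lower order in $1/\epsilon$, so the $T$-term dominates and we recover the claimed rate.

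The main item to be careful about is the regime check $\sqrt{pK} \leq 1/L$ in the choice of $\eta$: I need to verify it holds in the target $N$, so that the faster step size $\Theta(\sqrt{pK})$ (rather than the fallback $\Theta(1/L)$) is active, which is exactly what delivers the $\epsilon^{-3}$ dependence instead of $\epsilon^{-4}$. The only other subtlety is accounting: since workers are idle in the variance-reduced branch, the per-agent sample count is $N + T$ and not, for instance, $T B + N$ (the $B$ trajectories per small-batch iteration are paid by the server only and do not enter the per-worker complexity).
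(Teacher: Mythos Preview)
Your proof is correct and follows essentially the same approach as the paper: fix $N$ to control the statistical term, pick $T$ via the step size $\eta=\Theta(\sqrt{pK})$, and translate iterations into expected trajectory counts. The only minor difference is in the accounting---the paper counts $T(pN+(1-p)B)=\Theta(T)$ trajectories per agent (lumping server and worker sampling together), whereas you count $N+(T-1)=\Theta(N+T)$ for workers alone; both are $\Theta(T)$ in the relevant regime, and your explicit check that $\sqrt{pK}\le 1/L$ for small $\epsilon$ is something the paper leaves implicit.
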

Observe that in particular, if $\alpha=0$, we need $\mathcal O(K^{-1} \epsilon^{-3})$ trajectories in expectation, and for constant $\alpha > 0$, we need $\mathcal O(K^{-1/2}\epsilon^{-3})$ trajectories in expectation. We hence recover the SOTA sample complexity of PAGE-PG \cite{gargiani2022page} (which is proven under assumptions equivalent to ours) for $K=1$, and asymptotically improve for larger $K$, despite the presence of Byzantines.

\section{Decentralized Byzantine-tolerant federated PG}\label{sec:DecByzPG}
\begin{algorithm*}
\caption{\textsc{DecByzPG} at the $k$-th agent}
\label{alg:DecByzPG}
\begin{algorithmic}[1]
\State \textbf{input:} $\theta_0 \in \mathbb R^d$, large batch size $N$, small batch size $B$, step size $\eta$, probability $p \in (0,1]$
\State initialize $\theta_0^{(k)}=\theta_{0}$
\For{$t=0$ {\bfseries to} $T-1$}
\State $c_t \leftarrow \operatorname{\textbf{Common-Sample}} \left( \operatorname{Be} \left( p \right) \right)$
\State sample trajectories $\{\tau_{t,i}^{(k)}\}_{i=1}^{M}$ from $p(\cdot \mid \theta_{t}^{(k)})$ where $M = \begin{cases}N &\text{ if $c_t=1$ or $t=0$}\\B & \text{ else}\end{cases}$
\vspace{4pt}
\State $\widetilde{v}_t^{(k)}=\begin{cases}
\frac{1}{N} \sum_{i=1}^N g(\tau_{t,i}^{(k)} \mid \theta_t^{(k)}) \quad\quad\quad\quad\quad\quad\quad\quad\quad\quad\quad\quad\quad\quad\quad\;\;\text{if $c_t=1$ or $t=0$}  \\[4pt]
\frac{1}{B} \sum_{i=1}^B g(\tau_{t,i}^{(k)} \mid \theta_t^{(k)})+\frac{1}{\eta}(\theta_{t}^{(k)}-\theta_{t-1}^{(k)})-\frac{1}{B} \sum_{i=1}^B g^{\omega_{\theta_t^{(k)}}}( \tau_{t,i}^{(k)} \mid \theta^{(k)}_{t-1}) \quad\text { else}
\end{cases}$
\vspace{4pt}
\State $v_t^{(k)} \leftarrow \operatorname{\textbf{Aggregate}}\left( \langle \widetilde{v}_t^{(k')} \rangle_{k'=1}^{K} \right)$
\State $\widetilde{\theta}^{(k)}_{t+1}=\theta_t^{(k)}+\eta v_t^{(k)}$
\vspace{2pt}
\State $\theta_{t+1}^{(k)} \leftarrow \textbf{Avg-Agree}_{\kappa}\left( \langle \widetilde{\theta}_{t+1}^{(k')} \rangle_{k'=1}^K \right)$
\EndFor
\vspace{4pt}
\State \textbf{output:} $\theta_{\widehat{T}}^{(k)}$ with $\widehat{T} \sim \textbf{Common-Sample}\left( \mathcal U \left( [T] \right) \right)$
\end{algorithmic}
\end{algorithm*}

\subsection{Method}
In the decentralized setting, instead of having a centrally maintained $\theta_t \in \R^d$, the state at each iteration $t$ is given by a tuple $\langle \theta^{(k)}_t \rangle_{k \in [K]}$ of each agent's local parameters with $\theta^{(k)}_t \in \R^d$. We are interested in the following solution concept.

\begin{definition}[$K$-agent $\alpha$-tolerant $\epsilon$-approximate solution]
\label{def:solution}
For $\epsilon>0$, we call $\langle \theta^{(k)} \rangle_{k \in [K]}$ with $\theta^{(k)} \in \R^d$ a $K$-agent $\alpha$-tolerant $\epsilon$-stationary point if $\exists \mathcal G \subset [K]$ such that $|\mathcal G| \geq (1-\alpha)K$ and $\forall k \in \mathcal G$, we have $\|\nabla J ( \theta^{(k)} ) \| \leq \epsilon$. We say a decentralized algorithm achieves a $K$-agent $\alpha$-tolerant $\epsilon$-approximate solution in $T$ rounds if $\exists \mathcal G_T \subset [K]$ such that $|\mathcal G_T| \geq (1-\alpha)K$ and $\forall k \in \mathcal G_T$, we have $\mathbb{E} [ \|\nabla J ( \theta_T^{(k)} ) \|^2 ] \leq \epsilon^2$, where $\theta_T^{(k)}$ is the output of agent $k$ after $T$ rounds and the expectation is taken w.r.t.\ all randomness of the algorithm.
\end{definition}

As a first step towards decentralizing \textsc{ByzPG}, suppose all agents simultaneously execute \textsc{ByzPG}, each with itself in the role of the server, and denote the $k$-th agent's resulting local parameters in iteration $t$ by $\tilde{\theta}^{(k)}_t$. Since Byzantines may send inconsistent gradient estimates to different agents, already after the first iteration, we may have $\widetilde{\theta}_1^{(k)}\not=\widetilde{\theta}_1^{(k')}$ for $k \not= k'$. Such disagreement on parameters across agents may be detrimental to convergence at each agent.
As a remedy, we adopt the notion of averaging agreement that has been proposed by \cite{el2021collaborative} in the context of Byzantine fault-tolerant collaborative learning.

\begin{definition}[Averaging Agreement]\label{def:agree}
Let \textbf{Avg-Agree}$_{\kappa}$ be a decentralized algorithm that as input receives $\langle \theta^{(k)} \rangle_{k \in [K]}$ where $\theta^{(k)} \in \mathbb R^d$ is known only to agent $k$. Under Assumption \ref{ass:byz-fraction}, let $\mathcal G_t \subseteq \mathcal H_t$ be such that $|\mathcal G_t| \geq (1-\bar\alpha)K$. Suppose after $\kappa$ rounds of communication, where $\kappa \in \mathbb N$ is a parameter of the algorithm, \textbf{Avg-Agree}$_{\kappa}$ terminates with output $\langle \hat{\theta}^{(k)} \rangle_{k \in \mathcal G_t}$ in the form of $\hat{\theta}^{(k)} \in \mathbb R^d$ being known to agent $k$. Then, we say \textbf{Avg-Agree}$_{\kappa}$ achieves $C_{avg}$-averaging agreement for some $C_{avg}>0$, if for any input it is guaranteed that
\begin{align*}
\max_{i,l \in \mathcal G_t} \| \hat{\theta}^{(i)}-\hat{\theta}^{(l)} \| &\leq \frac{\max_{i,l \in \mathcal G_t} \| \theta^{(i)}-\theta^{(l)} \|}{2^\kappa} \quad\text{ and} \\
\| \bar{\hat{\theta}} - \bar{\theta} \| &\leq C_{avg} \cdot \max_{i,l \in \mathcal G_t} \| \theta^{(i)}-\theta^{(l)} \|
\end{align*}
where $\bar{\theta}=\frac{1}{|\mathcal G_t|}\sum_{k \in \mathcal G_t} \theta^{(k)}$ and $\bar{\hat{\theta}}=\frac{1}{\left| \mathcal G_t \right|}\sum_{k \in \mathcal G_t} \hat{\theta}^{(k)}$.
\end{definition}

Known implementations satisfying the above definition are stated and discussed in Appendix \ref{app:impl-agreement}. Our algorithm \textsc{DecByzPG}, as described by Algorithm \ref{alg:DecByzPG}, employs an \textbf{Avg-Agree}$_\kappa$ subroutine at the end of each iteration to ensure averaging agreement on agents' local parameters.

We point out that while \cite{el2021collaborative} makes use of averaging agreement in a similar context, their analysis does not yield sample complexity results. Our improved results rely upon the following insights:
\begin{enumerate}
\item Careful analysis of bias and variance of the \emph{realized} gradient estimates, which we define as $\hat{v}^{(k)}_t:=\frac{1}{\eta} \left( \theta_{t+1}^{(k)}-\theta_t^{(k)} \right)$, reveal that variance-reduced methods combined with the notion of robust aggregation from Definition \ref{def:aggregate} show favorable interplay with averaging agreement. In particular, the low variance of intermediate estimates $\widetilde{v}^{(k)}_t$ and $v^{(k)}_t$ keep the bias introduced by \textbf{Avg-Agree}$_\kappa$ small.
\item Controlling this bias introduced by \textbf{Avg-Agree}$_\kappa$ further requires a bound on the expected diameter of agents' parameters before agreement, i.e., the $\tilde{\theta}^{(k)}_{t+1}$'s. We leverage the fact that only the diameter of some large subset of parameters needs to be bounded, allowing us to apply strong concentration bounds instead of a weak union bound.
\end{enumerate}

In place of \textbf{Sample} in Line \ref{line:sample} of Algorithm \ref{alg:ByzPG}, \textsc{DecByzPG} requires a distributed Byzantine fault-tolerant sampling procedure. While such implementations have been studied in theory \cite{cachin2000random}, in practice, we may simply use a pseudorandom generator with a seed derived from the common initialization $\theta_0$.

\subsection{Convergence Analysis and Sample Complexity}
We next present sample complexity guarantees for \textsc{DecByzPG}, and provide a proof sketch outlining key ideas required for the analysis.
\begin{theorem}\label{thm:DecByzPG}
Let Assumptions \ref{ass:bounded-policy}, \ref{ass:smoothness}, \ref{ass:fin-var}, and \ref{ass:fin-imp-var} hold.
Suppose \textbf{Aggregate} is an $(\alpha,C_{ra})$-robust aggregator for constant $C_{ra} > 0$ and $\alpha$ as in Assumption \ref{ass:byz-fraction}. Let further \textbf{Avg-Agree}$_\kappa$ achieve $C_{avg}$-averaging agreement for constant $C_{avg}>0$. For $A=\Theta \left( \frac{\alpha}{p^2}+\frac{1}{pK} \right)$, choose $\eta = \frac{1}{2}\min \left\{ \frac{1}{\sqrt{A}}, \frac{1}{L} \right\}$, and $\kappa=\Theta \left( \log\frac{NK}{p^2} \right)$. Then the following holds for the output of \textsc{DecByzPG}, i.e., Algorithm \ref{alg:DecByzPG}: There exists a constant $C>0$ such that for any $T \geq 1$, $\exists \mathcal G_{\widehat{T}} \subset [K]$ with $|\mathcal G_{\widehat{T}}| \geq (1-\bar\alpha)K$ and $\forall k \in \mathcal G_{\widehat{T}}$,
\begin{align*}
\E \left[ \left\| \nabla J \left( \theta^{(k)}_{\widehat{T}} \right) \right\|^2 \right] \leq \frac{4 \E \left[ \Phi_0 \right]}{\eta T} + \frac{C\sigma^2}{N} \left( \alpha+\frac{1}{K} \right)+\mathcal O \left( 2^{-\kappa} \right)
\end{align*}
where we define $\Phi_0:=J^{*} - J \left(\theta_0\right)+\frac{2\eta}{p} \left\| \frac{1}{K}\sum_{k \in [K]}\hat{v}_0^{(k)}-\nabla J \left( \theta_0 \right) \right\|^2$ with $J^{*}:=\max_{\theta \in \R^d} J(\theta)$.
\end{theorem}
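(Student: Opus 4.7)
The plan is to adapt the descent-lemma argument underlying Theorem~\ref{thm:ByzPG} to the decentralized setting by tracking a virtual average of agents' parameters over a ``good'' subset $\mathcal{G}_t \subseteq \mathcal{H}_t$ with $|\mathcal{G}_t| \geq (1-\bar\alpha)K$ (as in Definition~\ref{def:agree}), and then translating the resulting average-parameter bound into the per-agent bound required by the theorem. Throughout, I would work with the \emph{realized} gradient estimates $\hat{v}_t^{(k)} := \frac{1}{\eta}(\theta_{t+1}^{(k)}-\theta_t^{(k)})$, which already fold in both the robust-aggregation step and the subsequent \textbf{Avg-Agree}$_\kappa$ step. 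Their subset average $\bar{\hat{v}}_t := \frac{1}{|\mathcal{G}_t|}\sum_{k\in\mathcal{G}_t}\hat{v}_t^{(k)}$ plays the role of $v_t$ in the centralized proof, and the subset-averaged parameter $\bar\theta_t := \frac{1}{|\mathcal{G}_t|}\sum_{k \in \mathcal{G}_t}\theta_t^{(k)}$ plays the role of $\theta_t$.

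Using the $L$-smoothness of $J$ from Proposition~\ref{prop:smoothness}, I would write the standard descent inequality
\[
J(\bar\theta_{t+1}) \geq J(\bar\theta_t) + \eta\langle\nabla J(\bar\theta_t),\bar{\hat{v}}_t\rangle - \tfrac{L\eta^2}{2}\|\bar{\hat{v}}_t\|^2,
\]
and then define the potential $\Phi_t := J^{*} - J(\bar\theta_t) + \frac{2\eta}{p}\|\bar{\hat{v}}_{t-1} - \nabla J(\bar\theta_{t-1})\|^2$, mirroring \textsc{ByzPG}. The core of the argument is to decompose the error
\[
\bar{\hat{v}}_t - \nabla J(\bar\theta_t) = \underbrace{(\bar{\hat{v}}_t - \bar v_t)}_{\text{agreement bias}} + \underbrace{(\bar v_t - \bar{\tilde v}_t)}_{\text{aggregation bias}} + \underbrace{(\bar{\tilde v}_t - \nabla J(\bar\theta_t))}_{\text{sampling error}},
\]
and bound each piece. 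The sampling error is handled exactly as in Theorem~\ref{thm:ByzPG}, yielding an $\mathcal{O}(\sigma^2/(NK))$ contribution in the large-batch case and, via Assumption~\ref{ass:fin-imp-var} and the importance-weight correction, a variance-reduced $\mathcal{O}(L_g^2 \|\theta_t^{(k)}-\theta_{t-1}^{(k)}\|^2/B)$ contribution in the small-batch case. The aggregation bias is controlled by Definition~\ref{def:aggregate}, which bounds it by $C_{ra}\alpha$ times the honest agents' pairwise distances among $\tilde v_t^{(k)}$, and these are in turn controlled by the same variance terms.

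The main obstacle is bounding the agreement bias, which by Definition~\ref{def:agree} is at most $C_{avg}$ times the pre-agreement diameter $\max_{k,l\in\mathcal{G}_t}\|\tilde\theta_{t+1}^{(k)} - \tilde\theta_{t+1}^{(l)}\|$. A naive union bound over all $K$ honest pairs would inject a $\sqrt{\log K}$ factor into the sampling-error estimate and spoil the target rate; this is precisely where the paper's advertised ``novel concentration lemma for random parameter vectors'' enters. I would prove that after robust aggregation, the $\tilde v_t^{(k)}$'s, and hence the $\tilde\theta_{t+1}^{(k)}$'s, cluster tightly around a common center except for a small fraction of outliers, so that the diameter restricted to a subset of size $\geq (1-\bar\alpha)K$ can be bounded in expectation by a polynomial in $\sigma$, $1/\sqrt{N}$, $1/\sqrt{B}$, $1/\sqrt{K}$, and $\eta/p$ without logarithmic overhead. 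The choice $\kappa = \Theta(\log(NK/p^2))$ then ensures the $2^{-\kappa}$ contraction of \textbf{Avg-Agree}$_\kappa$ dominates these diameter bounds, leaving an $\mathcal{O}(2^{-\kappa})$ residual.

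Finally, telescoping $\mathbb{E}[\Phi_t - \Phi_{t+1}]$ over $t=0,\dots,T-1$, substituting the step size $\eta = \tfrac{1}{2}\min\{1/\sqrt{A}, 1/L\}$, and averaging via $\widehat T \sim \mathcal{U}([T])$ yields the target bound on $\mathbb{E}\|\nabla J(\bar\theta_{\widehat T})\|^2$. To move from the subset-average parameter to an individual agent $k \in \mathcal{G}_{\widehat T}$, I would note that the averaging-agreement contraction in Definition~\ref{def:agree} guarantees $\|\theta_{\widehat T}^{(k)} - \bar\theta_{\widehat T}\| = \mathcal{O}(2^{-\kappa})$ post-agreement; $L$-smoothness of $\nabla J$ then gives $\|\nabla J(\theta_{\widehat T}^{(k)}) - \nabla J(\bar\theta_{\widehat T})\|^2 = \mathcal{O}(2^{-\kappa})$, which is absorbed into the stated $\mathcal{O}(2^{-\kappa})$ term and establishes the per-agent guarantee for every $k \in \mathcal{G}_{\widehat T}$.
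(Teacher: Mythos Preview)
Your proposal is correct and follows essentially the same route as the paper: a potential/Lyapunov argument on the subset average $\bar\theta_t$, the three-way split of $\bar{\hat v}_t$'s error into agreement bias, aggregation bias, and sampling error (the paper's Lemma~\ref{lem:DecByzPG-distortion-aux}), the concentration argument that avoids a union-bound penalty on the pre-agreement diameter (Lemma~\ref{lem:diam}), the $\mathcal O(2^{-\kappa})$ diameter control (Lemma~\ref{lem:diam-error}), telescoping, and finally the per-agent transfer via $L$-smoothness.

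Two small deviations from the paper are worth flagging. First, the paper defines the potential with the error term $\bigl\|\bar{\hat v}_t-\tfrac{1}{|\mathcal G_t|}\sum_{i\in\mathcal G_t}\nabla J(\theta_t^{(i)})\bigr\|^2$ rather than your $\|\bar{\hat v}_{t-1}-\nabla J(\bar\theta_{t-1})\|^2$; beyond the index shift (presumably a typo), the use of the \emph{average of individual gradients} instead of $\nabla J(\bar\theta_t)$ is deliberate, since each $\tilde v_t^{(i)}$ is unbiased for $\nabla J(\theta_t^{(i)})$, not for $\nabla J(\bar\theta_t)$. Your decomposition therefore carries a small extra bias in the ``sampling error'' piece, but it is bounded by $L^2\mathcal E_t^\Delta=\mathcal O(2^{-\kappa})$ and absorbed, so the argument still goes through. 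Second, the concentration lemma in the paper is applied to the \emph{post}-aggregation vectors $v_t^{(i)}$ (bounding $\|v_t^{(i)}-\bar{\tilde v}_t\|^2$ via Definition~\ref{def:aggregate}, then Markov plus Chernoff on the fraction of outliers), not to the pre-aggregation $\tilde v_t^{(i)}$ as your wording suggests; this is exactly the mechanism you describe, just at the next stage of the pipeline.
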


\begin{corollary}\label{cor:DecByzPG}
In the setting of Theorem \ref{thm:DecByzPG}, by choosing $p=1/N$ and $\kappa=\Theta \left( \max \left\{ \log \left( NK \right),\log \left( \epsilon^{-1} \right) \right\} \right)$, the expected number of trajectories that need to be sampled per agent, to achieve a $K$-agent $\bar\alpha$-tolerant $\epsilon$-approximate solution as in Definition \ref{def:solution}, is
\begin{align*}
\mathcal O \left( \frac{\alpha^{3/2}}{\epsilon^4}+\frac{\alpha^{1/2}}{K\epsilon^4}+\frac{\alpha^{1/2}}{K^{1/2}\epsilon^3}+\frac{1}{K\epsilon^3} \right).
\end{align*}
\end{corollary}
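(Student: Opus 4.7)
The corollary is essentially a mechanical unpacking of Theorem~\ref{thm:DecByzPG}: the plan is to choose the algorithm parameters $N$, $\kappa$, and $T$ so that each of the three terms on the right-hand side of the theorem's bound is at most a constant fraction of $\epsilon^2$, and then multiply the resulting iteration count by the expected per-agent sampling cost per iteration. The existence of the honest subset $\mathcal{G}_{\widehat{T}}$ with $|\mathcal{G}_{\widehat{T}}| \geq (1-\bar\alpha)K$ required by Definition~\ref{def:solution} is already part of the theorem's conclusion, so no additional combinatorial argument is needed.

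First, to force the statistical term $\frac{C\sigma^2}{N}(\alpha + 1/K)$ below $\epsilon^2/3$, I pick $N = \Theta((\alpha + 1/K)/\epsilon^2)$. Next, the agreement residual $\mathcal{O}(2^{-\kappa})$ drops below $\epsilon^2/3$ when $\kappa \geq \Omega(\log \epsilon^{-1})$. Since Theorem~\ref{thm:DecByzPG} already imposes $\kappa = \Theta(\log(NK/p^2))$, which with $p = 1/N$ reduces to $\Theta(\log(NK))$ (the logarithm absorbs the polynomial factor in $N$), taking the maximum yields $\kappa = \Theta(\max\{\log(NK), \log(\epsilon^{-1})\})$ as stated.

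Now I compute the iteration count. Plugging $p = 1/N$ into $A = \Theta(\alpha/p^2 + 1/(pK))$ gives $A = \Theta(\alpha N^2 + N/K)$, hence $1/\eta = \Theta(\max\{\sqrt{\alpha}\,N,\; \sqrt{N/K},\; L\})$. For asymptotically small $\epsilon$ (and thus large $N$) with $L$ constant, the $L$ branch is dominated, so the condition $\frac{4\E[\Phi_0]}{\eta T} \leq \epsilon^2/3$ translates into $T = \Theta((\sqrt{\alpha}\,N + \sqrt{N/K})/\epsilon^2)$. Substituting $N = \Theta((\alpha + 1/K)/\epsilon^2)$ and distributing gives
\[
T = \Theta\left(\frac{\alpha^{3/2}}{\epsilon^4} + \frac{\alpha^{1/2}}{K\epsilon^4} + \frac{\alpha^{1/2}}{K^{1/2}\epsilon^3} + \frac{1}{K\epsilon^3}\right).
\]

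Finally, at each iteration an honest agent samples $N$ trajectories with probability $p = 1/N$ and $B = \Theta(1)$ trajectories otherwise, so the expected per-iteration sampling cost per agent is $pN + (1-p)B = \Theta(1)$, and the expected total number of sampled trajectories per agent is $\Theta(T)$, matching the corollary. The only non-routine step is verifying the case analysis on which of $\sqrt{\alpha}\,N$, $\sqrt{N/K}$, and $L$ dominates $1/\eta$: for the chosen $N$, one checks $N \gg L^2/\alpha$ and $N \gg L^2 K$ in the relevant regime so the $L$ branch is inactive; beyond that, the derivation is algebraic substitution.
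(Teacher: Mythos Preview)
Your proof is correct and follows essentially the same route as the paper: choose $N=\Theta((\alpha+1/K)/\epsilon^2)$ to kill the variance term, take $\kappa$ large enough for the agreement residual, read off $T$ from the step-size constraint $\eta=\Theta(1/\sqrt{A})$ with $p=1/N$, and observe that the per-iteration expected sampling cost is $pN+(1-p)B=\Theta(1)$ so the total is $\Theta(T)$. The only cosmetic difference is that the paper writes $1/\eta$ as $\max\{\sqrt{\alpha}/p,\,1/\sqrt{pK}\}$ before substituting $p=1/N$, whereas you substitute first and write $\sqrt{\alpha}\,N+\sqrt{N/K}$; these are equivalent up to constants, and both treatments implicitly drop the $L$ branch as a lower-order term in the small-$\epsilon$ regime.
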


In particular, if $\alpha=0$, we need $\mathcal O (K^{-1}\epsilon^{-3})$ trajectories in expectation which matches with our respective result from Corollary \ref{cor:ByzPG}. The same sample complexity has been obtained in \cite{jiang2022mdpgt} for a momentum-based decentralized PG method that, however, lacks fault-tolerance. For constant $\alpha > 0$, we need $\mathcal O(\epsilon^{-4})$ trajectories in expectation which in our setting matches for example the complexity of single-agent vanilla PG \cite{papini2021safe}. If, e.g., a constant number of agents are Byzantine, i.e., $\alpha=\Theta(K^{-1})$, we get a complexity of $\mathcal O(K^{-3/2}\epsilon^{-4}+K^{-1}\epsilon^{-3})$. Hence asymptotic speed-up w.r.t.\ the number of agents is possible despite the presence of Byzantine agents.

\begin{remark}
Besides sample complexity, we prefer algorithms with low communication complexity. Due to \textbf{Avg-Agree}$_{\kappa}$, each of the $T$ iterations of \textsc{DecByzPG} involves $\kappa=\Theta( \max \{ \log (NK),\log (\epsilon^{-1})\})$ rounds of all-to-all communication, each consisting of $\mathcal O(K^2)$ messages containing a vector in $\R^d$. We point out that the logarithmic number of rounds is crucial for the practicality of our decentralized algorithm, as otherwise the cost of communication may outweigh the benefits of the lower sample complexity gained from collaboration.
\end{remark}

Due to space constraints, the full proofs of Theorem \ref{thm:DecByzPG} and Corollary \ref{cor:DecByzPG}, as well as all required lemmas are deferred to Appendix \ref{app:DecByzPG-proofs}. Here, we want to focus on one key argument of the proof responsible for controlling the diameter of agents' local parameters. Before stating and proving the two respective lemmas, we introduce additional notation: Recall that $\mathcal H_t \subset [K]$ is the set of \emph{honest}, i.e., non-Byzantine agents as in Assumption \ref{ass:byz-fraction} with $|\mathcal H_t|\geq (1-\alpha) K$. In addition, with $\bar\alpha=\alpha+\bar\epsilon < \alpha_{\max} = 1/4$, denote the diameter of a tuple of vectors by $\Delta_2 \left( \cdot \right)^2$, e.g., for some $S \subseteq [K]$, let
\begin{align*}
\Delta_2 \left( \langle \theta_t^{(i)} \rangle_{i \in S} \right) := \max_{i,j \in S} \| \theta_t^{(i)}-\theta_t^{(j)} \|
\end{align*}
and consider the set
\begin{align*}
\mathcal G_t := \argmin_{S \subset \mathcal H_t, |S| \geq (1-\bar\alpha)K} \Delta_2 \left( \langle \tilde{\theta}_t^{(i)} \rangle_{i \in S} \right) \subset \mathcal H_t
\end{align*}
which we will call the set of \emph{good} agents. As we will show below, the diameter of good agents' parameters exhibits good concentration in the sense that we obtain stronger bounds as would hold for the expected diameter of all honest agents' parameters. The diameter of good agents' parameters after agreement will frequently occur as an error term which we denote by
\begin{align}
\label{eqn:diam-bound}
\mathcal E^{\Delta}_t:=\Delta_2 \left( \langle \theta_t^{(i)} \rangle_{i \in \mathcal G_t} \right)^2.
\end{align}
Finally, we abbreviate
\begin{align*}
\widetilde{\mathcal T}_{1,t} &:= \frac{1}{|\mathcal G_t|(|\mathcal G_t|-1)} \sum_{i,l \in \mathcal G_t} \E \left[ \| \tilde{v}_t^{(i)}-\tilde{v}_t^{(l)} \|^2 \mid c_t=1 \right], \\
\widetilde{\mathcal T}_{0,t} &:= \frac{1}{|\mathcal G_t|(|\mathcal G_t|-1)} \sum_{i,l \in \mathcal G_t} \E \left[ \| \tilde{v}_t^{(i)}-\tilde{v}_t^{(l)} \|^2 \mid c_t=0 \right].
\end{align*}
The following lemma bounds the diameter of good agents' parameters after aggregation and before agreement in iteration $t$, distinguishing between the two cases given by the probabilistic switch.
\begin{lemma}\label{lem:diam}
For any $\bar{\epsilon}>0$, it holds that
\begin{align*}
\E \left[ \Delta_2 \left( \langle \tilde{\theta}_{t+1}^{(i)} \rangle_{i \in \mathcal G_t} \right)^2 \mid c_t=1 \right] &\leq 2 \E[\mathcal E^\Delta_t] + \frac{10 \eta^2C_{ra}\alpha\widetilde{\mathcal T}_{1,t}}{\bar{\epsilon}},\\
\E \left[ \Delta_2 \left( \langle \tilde{\theta}_{t+1}^{(i)} \rangle_{i \in \mathcal G_t} \right)^2 \mid c_t=0 \right] &\leq 2 \E[\mathcal E^\Delta_t] + \frac{10 \eta^2C_{ra}\alpha\widetilde{\mathcal T}_{0,t}}{\bar{\epsilon}}.
\end{align*}
\end{lemma}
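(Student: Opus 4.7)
The plan is to decompose the diameter of $\langle\tilde\theta_{t+1}^{(i)}\rangle_{i\in\mathcal G_t}$ into a contribution from the previous round and one from the aggregated updates $v_t^{(i)}$, then bound the latter via Definition \ref{def:aggregate}. From Line 8 of Algorithm \ref{alg:DecByzPG}, for any $i,l\in\mathcal G_t$ one has $\tilde\theta_{t+1}^{(i)}-\tilde\theta_{t+1}^{(l)} = (\theta_t^{(i)}-\theta_t^{(l)}) + \eta(v_t^{(i)}-v_t^{(l)})$. Applying Young's inequality $\|a+b\|^2 \leq (1+\bar\epsilon)\|a\|^2+(1+1/\bar\epsilon)\|b\|^2$ and bounding $(1+\bar\epsilon)\leq 2$ and $(1+1/\bar\epsilon)\leq 2/\bar\epsilon$ (both valid since $\bar\epsilon<\alpha_{\max}=1/4$), then taking the maximum over $i,l\in\mathcal G_t$, yields the leading $2\,\E[\mathcal E_t^\Delta]$ term and leaves $(2\eta^2/\bar\epsilon)\,\E[\max_{i,l\in\mathcal G_t}\|v_t^{(i)}-v_t^{(l)}\|^2]$ as the residual to be controlled. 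This already explains the structural $1/\bar\epsilon$ and the factor $2$ on $\mathcal E_t^\Delta$ that appear in both bounds.

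To handle the residual, I would use the fact that under the protocol honest agents broadcast identical $\tilde v_t^{(k)}$ to every peer, so each $v_t^{(i)}$ is \textbf{Aggregate} applied to a common honest portion plus an $i$-dependent Byzantine portion. Writing $\bar v_t := |\mathcal H_t|^{-1}\sum_{k\in\mathcal H_t}\tilde v_t^{(k)}$, the triangle inequality gives $\max_{i,l\in\mathcal G_t}\|v_t^{(i)}-v_t^{(l)}\|^2 \leq 4\max_{i\in\mathcal G_t}\|v_t^{(i)}-\bar v_t\|^2$. The standard robust aggregators referenced in Appendix \ref{app:impl-aggregation} satisfy Definition \ref{def:aggregate} pointwise in the honest inputs, with a RHS that does not depend on $i$; this lets me pull the maximum inside the expectation without paying a union-bound factor in $|\mathcal G_t|$, producing a bound of the form $C_{ra}\alpha$ times an average of $\E[\|\tilde v_t^{(k)}-\tilde v_t^{(k')}\|^2]$ over honest pairs. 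Splitting along $c_t\in\{0,1\}$ then selects the two branches of Line 6 of Algorithm \ref{alg:DecByzPG} and routes the analysis through $\widetilde{\mathcal T}_{1,t}$ or $\widetilde{\mathcal T}_{0,t}$, depending on whether the large-batch or the variance-reduced estimator was in use.

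The main obstacle is the mismatch between the index sets: Definition \ref{def:aggregate} naturally averages pairwise differences over $\mathcal H_t$, whereas $\widetilde{\mathcal T}_{c,t}$ is defined as an average over $\mathcal G_t\subseteq\mathcal H_t$. The cleanest remedy is to treat the non-good honest agents $\mathcal H_t\setminus\mathcal G_t$ as part of the adversarial set when invoking the aggregator; since $|\mathcal G_t|\geq(1-\bar\alpha)K$ and Assumption \ref{ass:byz-fraction} enforces the slack $\bar\alpha<\alpha_{\max}$, this inflated effective adversary of size at most $\bar\alpha K$ is still within the aggregator's tolerance, so Definition \ref{def:aggregate} can be applied with $\mathcal G_t$ playing the role of $\mathcal H$ and the sum appearing on the RHS is exactly $|\mathcal G_t|(|\mathcal G_t|-1)\widetilde{\mathcal T}_{c,t}$. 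Reconciling the resulting combinatorial ratios with the constants $2/\bar\epsilon$ from Young's inequality and $4$ from the triangle bound on the $v$-diameter is what produces the constant $10$ in the stated bound, and I expect this constant-tracking step, together with verifying that the effective aggregator constant remains $C_{ra}\alpha$ after promoting $\mathcal H_t\setminus\mathcal G_t$ to the adversary, to be the most delicate part of the argument.
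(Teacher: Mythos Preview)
Your decomposition of $\tilde\theta_{t+1}^{(i)}-\tilde\theta_{t+1}^{(l)}$ and the device of promoting $\mathcal H_t\setminus\mathcal G_t$ into the adversary so that Definition~\ref{def:aggregate} can be invoked with $\mathcal G_t$ in the role of $\mathcal H$ are both sound. The gap is the step where you ``pull the maximum inside the expectation without paying a union-bound factor.'' Definition~\ref{def:aggregate} only gives $\E[\|v_t^{(i)}-\bar{\tilde v}_t\|^2]\le C_{ra}\alpha\,\widetilde{\mathcal T}_{c,t}$ for each fixed $i$; it says nothing about $\E[\max_i\|v_t^{(i)}-\bar{\tilde v}_t\|^2]$. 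Your claim that the aggregator bound holds ``pointwise in the honest inputs'' would require an almost-sure version of Definition~\ref{def:aggregate}, but the aggregators in Appendix~\ref{app:impl-aggregation} attain the $(\alpha,C_{ra})$ property only through bucketing, which injects independent internal randomness at every agent. Conditional on the honest $\tilde v_t^{(k)}$'s, the quantities $\|v_t^{(i)}-\bar{\tilde v}_t\|^2$ are still nondegenerate random variables sharing a common expectation bound but no common almost-sure bound, so the expectation of their maximum can exceed that common bound by a factor growing with $K$. This is exactly the union-bound loss you say you avoid, and with it the constant tracking cannot land on $10/\bar\epsilon$.

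The paper sidesteps this by \emph{not} bounding the full maximum over $\mathcal G_t$. For each $i$ it applies Markov's inequality at the threshold $\tfrac{2}{\bar\epsilon}\,\overline{\mathcal T}_i$ (this, not Young's inequality, is the source of the $1/\bar\epsilon$), obtaining $\Pr[X_i=1]\le\bar\epsilon/2$, and then a Chernoff bound shows that more than $(\alpha+\bar\epsilon)K$ indices are ``bad'' only with probability $e^{-\bar\epsilon K/6}$. Hence some subset $S_t\subset\mathcal G_t$ of size $\ge(1-\bar\alpha)K$ has $v$-diameter at most $\tfrac{2}{\bar\epsilon}\overline{\mathcal T}$; the rare bad event is handled by the crude $K\overline{\mathcal T}$ bound together with $Ke^{-\bar\epsilon K/6}\le 6/(e\bar\epsilon)$, giving $\tfrac{5}{\bar\epsilon}\overline{\mathcal T}$ in total. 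The final split of $\tilde\theta_{t+1}$ uses the plain inequality $\|a+b\|^2\le 2\|a\|^2+2\|b\|^2$, which is why the constants $2$ and $10/\bar\epsilon$ appear separately. The freedom to pass to a subset is precisely what the definition of $\mathcal G_{t+1}$ as an $\argmin$ over large subsets exploits downstream; by insisting on the full $\max$ over $\mathcal G_t$, your approach forfeits that freedom and is forced to confront the union bound.
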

\begin{proof}
First, we define
\begin{align*}
S_t := \argmin_{S \subset \mathcal G_t, |S| \geq (1-(\alpha+\bar{\epsilon}))K} \Delta_2 \left( \langle v^{(i)}_t \rangle_{i \in S_t} \right) \subset \mathcal H_t \subset [K]
\end{align*}
and aim to bound $\E\left[ \Delta_2(\langle v^{(i)}_t \rangle_{i \in S_t})^2 \right]$. Observe that we have
\begin{align*}
\Delta_2 \left( \langle v^{(i)}_t \rangle_{i \in S_t} \right)^2 \leq \max_{i \in S_t} \| \bar{\tilde{v}}_t-v^{(i)}_t \|^2.
\end{align*}
For $i \in \mathcal H_t$, let $\overline{\mathcal T}_i$ be such that $\E[ \| \bar{\tilde{v}}_t-v^{(i)}_t \|^2] \leq \overline{\mathcal T}_i$ (we will plug in the right bound $\overline{\mathcal T}_i$ later) where $\bar{\tilde{v}}_t:= \frac{1}{|\mathcal G_t|} \sum_{i \in \mathcal G_t} \tilde{v}^{(i)}_t$, and let $X_i$ be indicator random variables for the events
\begin{align*}
\| \bar{\tilde{v}}_t-v^{(i)}_t \|^2 \geq \frac{2}{\bar\epsilon} \cdot \overline{\mathcal T}_i.
\end{align*}
Let $X=\sum_{i \in \mathcal H_t} X_i$. Our goal is to upper bound $\E X_i=\Pr \left[ X_i=1 \right]$ in order to use Chernoff concentration bounds on $X$.

By Lemma \ref{lem:markov} (see Appendix \ref{app:useful}), we get
\begin{align*}
\pr \left[ X_i=1 \right] &= \pr \left[ \| \bar{\tilde{v}}_t-v^{(i)}_t \|^2 \geq \frac{2}{\bar \epsilon} \cdot \overline{\mathcal T}_i \right] \\
&\leq \pr \left[ \| \bar{\tilde{v}}_t-v^{(i)}_t \|^2 \geq \frac{2}{\bar \epsilon} \cdot \E \left[ \| \bar{\tilde{v}}_t-v^{(i)}_t \|^2 \right] \right] \\
&= \pr \left[ \| \bar{\tilde{v}}_t-v^{(i)}_t \| \geq \sqrt{\frac{2}{\bar \epsilon}} \cdot \sqrt{\E \left[ \| \bar{\tilde{v}}_t-v^{(i)}_t \|^2 \right]} \right] \\
&\leq \frac{\E \left[ \| \bar{\tilde{v}}_t-v^{(i)}_t \|^2 \right]}{\frac{2}{\bar \epsilon} \E \left[ \| \bar{\tilde{v}}_t-v^{(i)}_t \|^2 \right]} \\
&= \frac{\bar \epsilon}{2}.
\end{align*}
Let $E$ be the ``bad case'', i.e., the event that we have $X \geq (\bar{\epsilon}+\alpha)K$. By Lemma \ref{lem:chernoff} (see Appendix \ref{app:useful}), with $\delta=1+\frac{2\alpha}{\bar \epsilon}$ and $\hat{p}$ as bounded above, we get
\begin{align*}
\pr[E] = \pr \left[ X \geq (\bar{\epsilon}+\alpha)K \right] &= \pr \left[ X \geq \left( 1+\delta \right) \frac{\bar{\epsilon}K}{2} \right] \\
&\leq \exp \left( -\frac{\delta^2 \bar \epsilon K}{4+2\delta} \right) \\
&= \exp \left( -\frac{K(2\alpha+\bar\epsilon)^2}{4\alpha+6\bar\epsilon} \right) \\
&\leq \exp \left( -\frac{\bar\epsilon K}{6} \right).
\end{align*}
With $\overline{\mathcal T}:=\max_{i \in \mathcal H_t} \overline{\mathcal T}_i$, by the law of total expectation, we then have
\begin{align*}
\E \left[ \Delta_2 \left( \langle v^{(i)}_t \rangle_{i \in S_t} \right)^2 \right] &\leq \E \left[ \Delta_2 \left( \langle v^{(i)}_t \rangle_{i \in S_t} \right)^2 \mid \bar{E} \right] \cdot \underbrace{\pr \left[ \bar{E} \right]}_{\leq 1} \\
&\quad\quad+ \E \left[ \Delta_2 \left( \langle v^{(i)}_t \rangle_{i \in S_t} \right)^2 \mid E \right] \cdot \pr \left[ E \right] \\
&\leq \frac{2}{\bar{\epsilon}} \overline{\mathcal T} + K \overline{\mathcal T} \cdot \exp \left( -\frac{\bar{\epsilon}K}{6} \right) \\
&\leq \frac{5}{\bar{\epsilon}} \overline{\mathcal T}
\end{align*}
where in the first step, for the expectation conditioned on $E$ we union-bound the $\max$ by introducing a factor $K$, and in the second step we use the fact that the function $f(x)=xe^{-\beta x}$ has a global maximum with value $\frac{1}{\beta e}$.

Remains to use this bound on $\E[ \Delta_2(\langle v^{(i)}_t \rangle_{i \in S_t})^2]$ in order to obtain the desired bound on $\E[ \Delta_2(\langle \tilde{\theta}^{(i)}_{t+1} \rangle_{i \in \mathcal G_t})^2]$ which follows straightforwardly since
\begin{align*}
&\E \left[ \Delta_2 \left( \langle \tilde{\theta}^{(i)}_{t+1} \rangle_{i \in \mathcal G_t} \right)^2 \right] \\
= \;&\E \left[ \max_{i,l \in \mathcal G_t} \| \tilde{\theta}_{t+1}^{(i)}-\tilde{\theta}_{t+1}^{(l)} \|^2 \right]\\
\leq \;&2 \E \left[ \max_{i,l \in \mathcal G_t} \| \theta_{t}^{(i)}-\theta_{t}^{(l)} \|^2 \right]+2\eta^2 \E \left[ \max_{i,l \in \mathcal G_t} \| v_t^{(i)}-v_t^{(l)}\|^2 \right] \\
\leq \;&2 \E[\mathcal E^\Delta_t] + 2 \eta^2 \E \left[ \Delta_2 \left( \langle v^{(i)}_t \rangle_{i \in S_t} \right)^2 \right] \\
\leq \;&2 \E[\mathcal E^\Delta_t] + \frac{10\eta^2}{\bar{\epsilon}}  \overline{\mathcal T}.
\end{align*}
What can we plug in for $\overline{\mathcal T}$? Observe that $v_t^{(i)}$ is the result of aggregation of inputs with average $\bar{\tilde{v}}_t:= \frac{1}{|\mathcal G_t|} \sum_{i \in \mathcal G_t} \tilde{v}^{(i)}_t$. Therefore, by Definition \ref{def:aggregate}, for any $i \in \mathcal G_t$, we have
\begin{align*}
\E \left[ \| v_t^{(i)}-\bar{\tilde{v}}_t \|^2 \right]
&\leq \frac{C_{ra} \alpha}{|\mathcal G_t|(|\mathcal G_t|-1)} \sum_{i,l \in \mathcal G_t} \E \left[ \| \tilde{v}_t^{(i)}-\tilde{v}_t^{(l)} \|^2 \right].
\end{align*}
Thus, we can distinguish between conditioning our expectation on $c_t=0$ and $c_t=1$ and using the respective bounds $C_{ra}\alpha \widetilde{\mathcal T}_{0,t}$ and $C_{ra}\alpha \widetilde{\mathcal T}_{1,t}$, the result follows.
\end{proof}

With this bound on the diameter of intermediate local parameters $\tilde\theta_t^{(i)}$ at hand, we can now derive the desired bound on parameters after agreement that only depends on the averaging agreement parameter $\kappa$.
\begin{lemma}\label{lem:diam-error}
For all iterations $t \leq T$, there exists $\overline{\mathcal E}^{\Delta,\kappa}$ such that $\E [\mathcal E^{\Delta}_t] \leq \overline{\mathcal E}^{\Delta,\kappa}$ and
\begin{align*}
\overline{\mathcal E}^{\Delta,\kappa} \leq \mathcal O \left( 2^{-\kappa} \right).
\end{align*}
\end{lemma}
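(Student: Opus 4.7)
The plan is to derive a contractive recursion for $\overline{E}_t := \E[\mathcal{E}^\Delta_t]$ with steady state on the order of $2^{-\kappa}$, and then to unroll it from the trivial initial condition $\overline{E}_0=0$ (since all agents start from the common $\theta_0$, so $\mathcal{E}^\Delta_0=0$).

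First, I would invoke the contraction guarantee of Definition \ref{def:agree} on the good set $\mathcal{G}_{t+1}$, which is a valid input for \textbf{Avg-Agree}$_\kappa$ since $\mathcal{G}_{t+1}\subseteq\mathcal{H}_{t+1}$ with $|\mathcal{G}_{t+1}|\geq(1-\bar\alpha)K$. This gives
\begin{align*}
\mathcal{E}^\Delta_{t+1} \;\leq\; 4^{-\kappa}\cdot\Delta_2\!\left(\langle \tilde{\theta}_{t+1}^{(i)}\rangle_{i\in\mathcal{G}_{t+1}}\right)^2.
\end{align*}
By the argmin definition of $\mathcal{G}_{t+1}$, the diameter on the right is no larger than the diameter over any other valid subset of $\mathcal{H}_{t+1}$ of size $(1-\bar\alpha)K$. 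Using the slack $\bar\epsilon=\bar\alpha-\alpha$, the intersection $\mathcal{G}_t\cap\mathcal{H}_{t+1}$ has size at least $(1-2\alpha-\bar\epsilon)K$; padding it up to a valid subset of $\mathcal{H}_{t+1}$ and invoking a variant of Lemma \ref{lem:diam} (whose proof goes through for any good set containing the core $\mathcal{G}_t$) then yields, after taking the total expectation over $c_t$,
\begin{align*}
\overline{E}_{t+1} \;\leq\; 4^{-\kappa}\!\left(2\overline{E}_t + \tfrac{10\eta^2 C_{ra}\alpha}{\bar\epsilon}\!\left[p\widetilde{\mathcal{T}}_{1,t} + (1-p)\widetilde{\mathcal{T}}_{0,t}\right]\right).
\end{align*}

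Next, I would bound the noise terms $\widetilde{\mathcal{T}}_{c_t,t}$ separately using the RL assumptions. For $c_t=1$, independence of the $N$ sampled trajectories together with Assumption \ref{ass:fin-var} and the Lipschitz bound on $g(\tau\mid\cdot)$ from Proposition \ref{prop:smoothness} gives $\widetilde{\mathcal{T}}_{1,t}\leq\mathcal{O}(\sigma^2/N)+\mathcal{O}(L_g^2\,\overline{E}_t)$. For $c_t=0$, expanding the PAGE-style estimator, applying Assumption \ref{ass:fin-imp-var} and Proposition \ref{prop:smoothness}, and using the trivial inequality $\|(\theta_t^{(i)}-\theta_{t-1}^{(i)})-(\theta_t^{(l)}-\theta_{t-1}^{(l)})\|^2\leq 2\mathcal{E}^\Delta_t+2\mathcal{E}^\Delta_{t-1}$ for the variance-reduction offset, yields $\widetilde{\mathcal{T}}_{0,t}\leq\mathcal{O}(\sigma^2/B)+\mathcal{O}((L_g^2+\eta^{-2})(\overline{E}_t+\overline{E}_{t-1}))$. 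Crucially, the $\eta^{-2}$ factor cancels against the $\eta^2$ prefactor from Lemma \ref{lem:diam}, leaving only an $\mathcal{O}(\alpha/\bar\epsilon)$ coupling coefficient on $\overline{E}_t$ and $\overline{E}_{t-1}$.

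Plugging the variance bounds back in produces a two-step recursion $\overline{E}_{t+1}\leq 4^{-\kappa}(A_1\overline{E}_t+A_2\overline{E}_{t-1}+B)$ whose coefficients $A_1,A_2,B$ are polynomial in $(N,K,\sigma,L,\alpha,\bar\epsilon,\eta,p)$. Choosing $\kappa=\Theta(\log(NK/p^2))$ as in Theorem \ref{thm:DecByzPG} drives $4^{-\kappa}\max\{A_1,A_2\}$ below a fixed constant less than $1/2$, making the recursion contractive; a standard induction from $\overline{E}_0=0$ then gives the uniform bound $\overline{E}_t\leq\mathcal{O}(4^{-\kappa}B)=\mathcal{O}(2^{-\kappa})$ after absorbing polynomial factors into the asymptotic notation. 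The main obstacle is the bookkeeping of $\widetilde{\mathcal{T}}_{0,t}$: the PAGE estimator mixes the current and previous cross-agent dispersion through importance weights, so one must simultaneously verify that the $\eta^{-2}$ factors cancel, track the $\alpha$-dependence so that the coupling coefficient does not blow up, and argue rigorously that Lemma \ref{lem:diam} transfers from $\mathcal{G}_t$ to $\mathcal{G}_{t+1}$ under a possibly time-varying set $\mathcal{H}_t$ of honest agents.
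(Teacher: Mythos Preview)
Your approach matches the paper's: both proceed by induction from $\mathcal{E}^\Delta_0=0$, invoke Lemma~\ref{lem:diam} to bound the pre-agreement diameter via $\widetilde{\mathcal{T}}_{1,t}$ and $\widetilde{\mathcal{T}}_{0,t}$, and then apply the $2^{-\kappa}$ contraction of averaging agreement. The paper's argument is terser: rather than tracking an explicit two-step recursion in $\overline{E}_t$, it simply observes (invoking Lemma~\ref{lem:DecByzPG-variance-bound}) that the pre-agreement diameter is loosely $\mathcal{O}(1)$ in problem constants, so one application of Definition~\ref{def:agree} immediately yields $\mathcal{O}(2^{-\kappa})$ without needing the specific choice of $\kappa$ for contraction. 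Your more explicit recursion is equally valid, though your bound on $\widetilde{\mathcal{T}}_{0,t}$ omits the step-size term $\E[\|\bar\theta_t-\bar\theta_{t-1}\|^2]$ that arises from bounding $\E[\|\hat\Delta^B(\theta_t^{(i)},\theta_{t-1}^{(i)})\|^2]$ (cf.\ Lemma~\ref{lem:DecByzPG-variance-bound}); this term is not controlled by cross-agent dispersion alone, but it is $\mathcal{O}(1)$ via the $C_g$ bound in Proposition~\ref{prop:smoothness}, so it merely inflates the additive constant in your recursion without affecting the conclusion. Finally, your observation that Lemma~\ref{lem:diam} is stated for $\mathcal{G}_t$ while the contraction acts on $\mathcal{G}_{t+1}$ is a genuine subtlety the paper glosses over; your proposed padding argument via $\mathcal{G}_t\cap\mathcal{H}_{t+1}$ is the right direction to close it.
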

\begin{proof}
We proceed by induction on $t$. For $t=0$, $\mathcal E_t^{\Delta}=0$ due to the common initialization $\theta^{(k)}=\theta_0$ for all $k \in [K]$. Suppose for some $t<T-1$, $\E[\mathcal E_t^{\Delta}] \leq \overline{\mathcal E}^{\Delta,\kappa} \leq \mathcal O \left( 2^{-\kappa} \right)$. In iteration $t$, applying Lemma \ref{lem:diam} and the bounds on $\widetilde{\mathcal T}_{0,t}$ and $\widetilde{\mathcal T}_{0,t}$ from Lemma \ref{lem:DecByzPG-variance-bound} (see Appendix \ref{app:DecByzPG-proofs}), one can observe that in both cases $c_{t-1}=1$ and $c_{t-1}=0$, the expected diameter of $\tilde{\theta}_t^{(i)}$'s for good agents $i \in \mathcal G_t$ is (loosely) bounded by $\mathcal O(1)$. Hence by the definition of averaging agreement, see Definition \ref{def:agree}, we have $\E \left[ \mathcal E_t^{\Delta} \right] \leq \overline{\mathcal E}^{\Delta,\kappa}$ with
\begin{align*}
\overline{\mathcal E}^{\Delta,\kappa} \leq \frac{\E \left[ \Delta_2 \left( \langle \tilde{\theta}_t^{(i)} \rangle_{i \in \mathcal G_t} \right) \right]}{2^{\kappa}} \leq \mathcal O \left( 2^{-\kappa} \right).
\end{align*}
\end{proof}

\section{Experiments}
\label{sec:exp}
In order to corroborate our theoretical findings, we empirically study the performance of the proposed methods w.r.t.\ the properties suggested by Corollary \ref{cor:DecByzPG}, i.e., (a) speed-up when increasing the number of agents $K$, and (b) resilience against various Byzantine attacks. We focus on our main contribution regarding the more challenging decentralized setting here (i.e.\ \textsc{DecByzPG}), and defer experiments for \textsc{ByzPG} to Appendix \ref{app:add-experiments}.

\textbf{Environments and Setup.} We consider two common RL benchmarks, CartPole \cite{barto1983neuronlike} and LunarLander. For all experiments, we report average returns of honest agents (y-axis) in terms of the trajectories that have been sampled per agent (x-axis). To visualize potential variance in our experiments, all plots show the respective mean and standard deviation across 10 independent runs. Further details, including hyperparameters, can be found in Appendix \ref{app:experiment-setup}.

\subsection{\textsc{DecByzPG} without Byzantine Agents}

\begin{figure}[h]
\centering
\includegraphics[width=\linewidth]{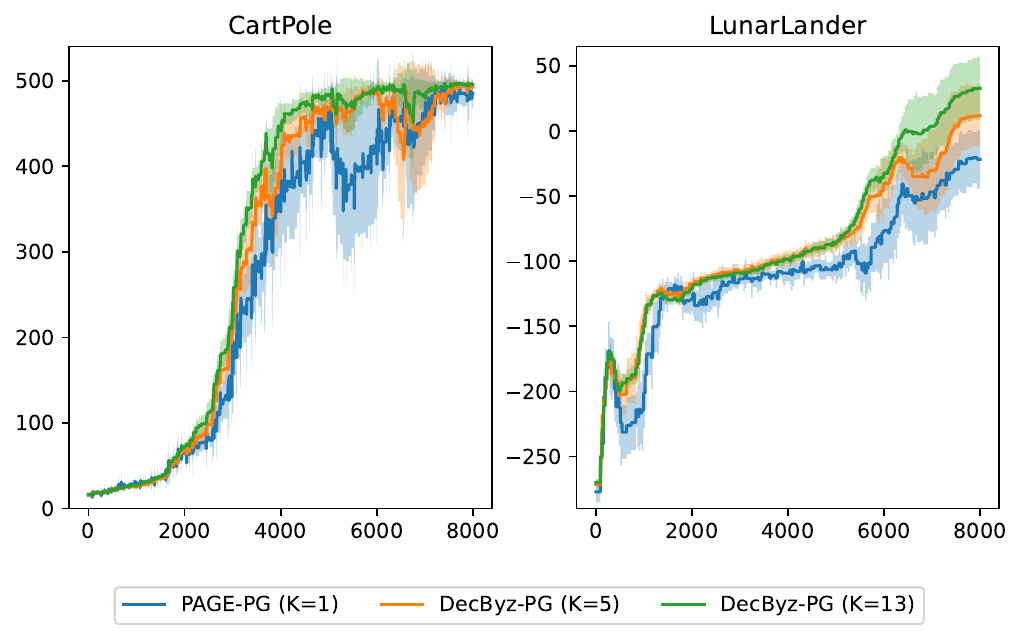}
\caption{Performance of \textsc{DecByzPG} for different federation sizes when all agents behave honestly (i.e.\ $\alpha=0$).}
\label{fig:speedup}
\Description{Performance of \textsc{DecByzPG} for different federation sizes when all agents behave honestly (i.e.\ $\alpha=0$).}
\end{figure}
In Figure \ref{fig:speedup}, we consider \textsc{DecByzPG} in the case $\alpha=0$, with $K=1$ (which is equivalent to \textsc{PAGE-PG} \cite{gargiani2022page}), $K=5$, and $K=13$. Speed-up with increasing number of agents is observable in both environments, as suggested by Corollary \ref{cor:DecByzPG}. Such faster convergence provides empirical evidence motivating agents to join a decentralized federation.

\subsection{\textsc{DecByzPG} under Attack}
\begin{figure*}
\centering
\includegraphics[width=0.9\linewidth]{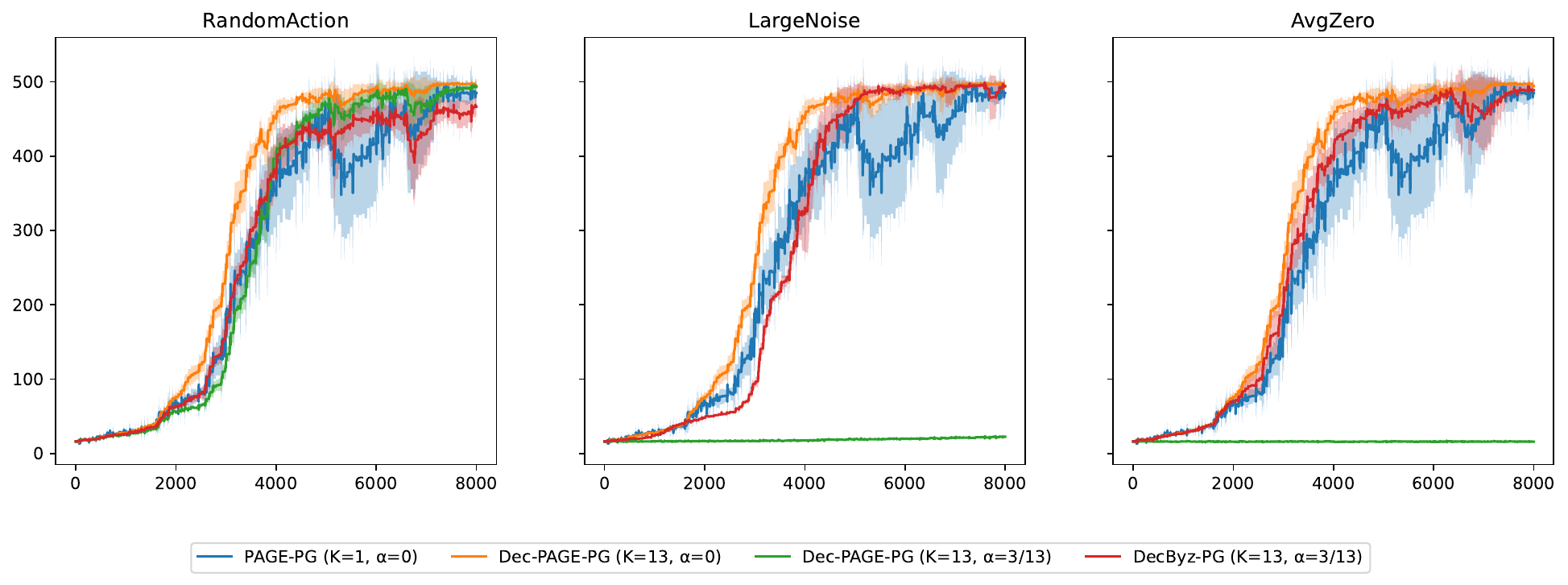}
\caption{Performance \& resilience of \textsc{DecByzPG} for CartPole w.r.t.\ our three attack types.}
\label{fig:cartpole-attack}
\Description{Performance \& resilience of \textsc{DecByzPG} for CartPole w.r.t.\ our three attack types.}
\end{figure*}

\begin{figure*}
\centering
\includegraphics[width=0.9\linewidth]{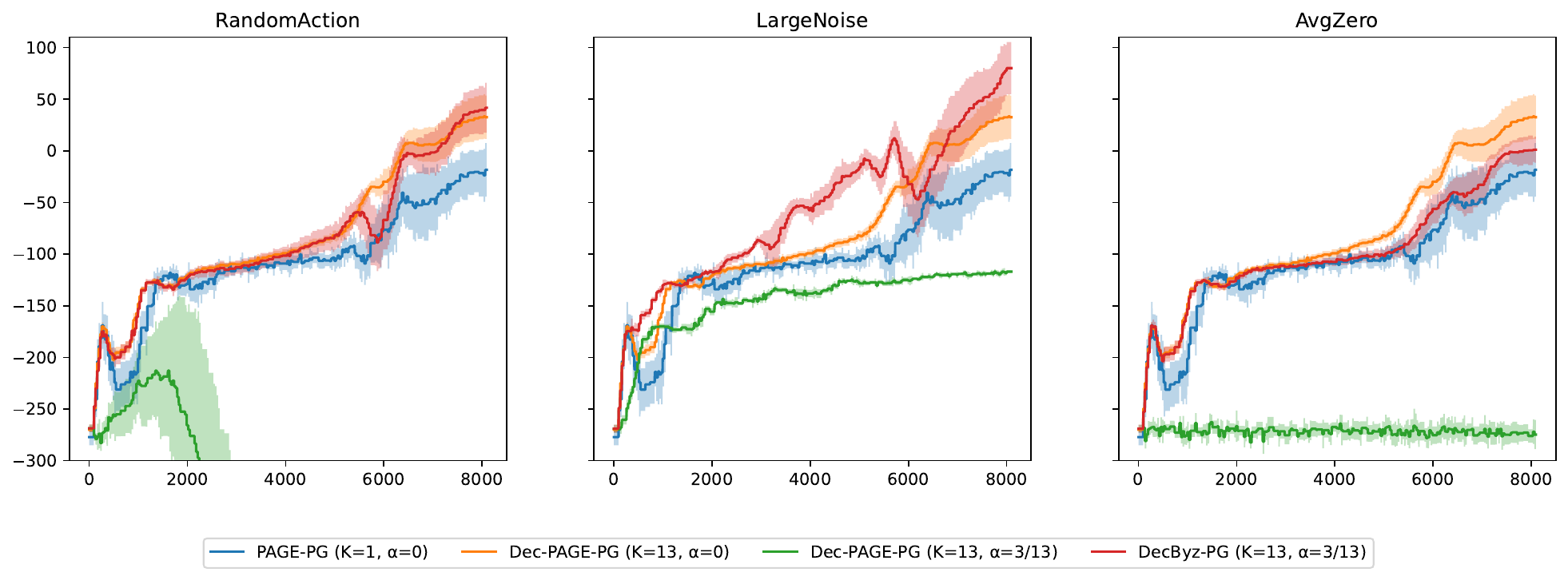}
\caption{Performance \& resilience of \textsc{DecByzPG} for LunarLander w.r.t.\ our three attack types.}
\label{fig:lunar-attack}
\Description{Performance \& resilience of \textsc{DecByzPG} for LunarLander w.r.t.\ our three attack types.}
\end{figure*}

\textbf{Choice of attacks.} In previous work \cite{fan2021fault}, Byzantine attacks are constructed by making random modifications to an agent's interaction with its environment, by e.g.\ choosing an action u.a.r.\ instead of following the current policy (here denoted \textbf{RandomAction}), adding noise to the reward, or randomly flipping the reward's sign. We find that in our setting, for simple environments such as CartPole, robustness to such attacks is often already given for naively collaborating agents. This behavior is exemplified by our experiments under the \textbf{RandomAction} attack. Thus, even though \textsc{DecByzPG} is also resilient to such attacks, a stronger adversary is needed to demonstrate \textsc{DecByzPG}'s advantage over naive methods. \textbf{LargeNoise} lets Byzantine agents directly send noise instead of gradients obtained from noisy interactions. Even though introducing noise may generally also have beneficial effects on convergence speed (e.g.\ due to improved exploration), by choosing the variance large enough, such benefits are outweighed. The third attack, \textbf{AvgZero} leverages the power of Byzantine knowledge and collaboration. Gradients sent by Byzantines are chosen such that when averaged with gradients sent by honest agents, the result will be close to zero.

In Figure \ref{fig:cartpole-attack} and \ref{fig:lunar-attack}, we compare \textsc{DecByzPG} under above attacks to (a) \textsc{PAGE-PG} \cite{gargiani2022page}, the SOTA single-agent PG method that \textsc{DecByzPG} reduces to when $K=1$, and (b) \textsc{Dec-PAGE-PG}, a naive decentralized (but not fault-tolerant) version of \textsc{PAGE-PG} where aggregation of gradients is done by averaging, and no agreement mechanism is used. Note that for experiments involving Byzantine agents, we choose their quantity to be the largest for which Assumption \ref{ass:byz-fraction}, and hence the guarantees of Theorem \ref{thm:DecByzPG}, still hold (i.e. 3 out of 13 agents are Byzantine).

For both environments and all attacks, we can observe that \textsc{DecByzPG} performs nearly on par with the unattacked \textsc{Dec-PAGE-PG}. This empirically supports the Byzantine fault-tolerance of \textsc{DecByzPG}. Furthermore, for CartPole, as expected, \textbf{LargeNoise} and \textbf{AvgZero} are highly effective against the non-fault-tolerant method, while as previously remarked, \textbf{RandomAction} barely shows any effect. For the more difficult task of LunarLander, already \textbf{RandomAction} breaks \textsc{Dec-PAGE-PG}. Lastly, we point out that in all cases \textsc{DecByzPG} with $K=13$ and $\alpha > 0$ outperforms \textsc{PAGE-PG} with $K=1$ (and $\alpha=0$), meaning that in our experiments, despite the presence of Byzantines, joining the federation is empirical beneficial for faster convergence.

\section{Conclusion}
\label{sec:conclusion}
We described and analyzed a federated decentralized Byzantine fault-tolerant PG algorithm. As a warm-up, we combined variance-reduced PG methods with results from Byzantine-tolerant non-convex optimization to obtain a new centralized algorithm under standard assumptions. We then use ideas from Byzantine robust aggregation and agreement to generalize our approach to the significantly more challenging decentralized setting. As a result, we obtained the first sample complexity guarantees for Byzantine fault-tolerant decentralized federated non-convex optimization. We thus believe that our technical contributions are more generally applicable and may therefore open up directions for future research. Moreover, the provided empirical results for standard RL benchmark tasks support our theory and promise practical relevance of our method.

\newpage

\bibliography{references}
\bibliographystyle{unsrtnat}

\newpage
\appendix
\onecolumn

\section*{\huge{Appendix}}

\section{Background in RL \& Details on Robust Aggregation and Agreement}

\subsection{Details on Gradient Estimation and Importance Sampling in PG Methods}
\label{app:sampling}
Throughout the paper, we have used $g\left( \tau \mid \theta \right)$ to denote an unbiased estimator of $\nabla J (\theta)$. It is known, see e.g.\ \cite{fan2021fault}, that in our setting of episodic MDP with trajectory horizon $H$, $g\left( \tau \mid \theta \right)$ can for example be implemented as REINFORCE \cite{williams1992REINFORCE},
\begin{align*}
g\left(\tau \mid \theta\right)=\left(\sum_{h=0}^{H-1} \nabla_{\theta} \log \pi_{\theta}\left(a_h \mid s_h\right)\right)\left(\sum_{h=0}^{H-1} \gamma^h \mathcal{R}\left(s_h, a_h\right)-C_b\right),
\end{align*}
or GPOMDP \cite{baxter2001GPOMDP},
\begin{align*}
g\left(\tau \mid \theta\right)=\sum_{h=0}^{H-1}\left(\sum_{t=0}^h \nabla_{\theta} \log \pi_{\theta}\left(a_t \mid s_t\right)\right)\left(\gamma^h r\left(s_h, a_h\right)-C_{b_h}\right).
\end{align*}
with $C_b$ and $C_{b_h}$ denoting the corresponding baselines. Under Assumption \ref{ass:bounded-policy}, Proposition \ref{prop:smoothness} is known to hold for both REINFORCE and GPOMDP \cite{xu2020improved}. For the experiments, our implementation will be based on GPOMDP, as it has generally been reported to have lower variance and thus yields better performance than REINFORCE \cite{papini2018stochastic,xu2020improved}.

Variance-reduced stochastic gradient descent methods such as SVRG \cite{johnson2013accelerating} and PAGE \cite{li2021page} rely on the ability to sample gradients at points that are not the current iterate. In PG however, the underlying sample distribution depends on the current parameters $\theta_t$. To overcome this challenge of non-stationarity, we employ the commonly used importance sampling technique and follow \cite{gargiani2022page} in defining, for any $\theta_t,\theta_{t-1} \in \R^d$, and any trajectory $\tau$,
\begin{align*}
g^{\omega_{\theta_t}} \left( \tau \mid \theta_{t-1} \right) := \omega \left( \tau \mid \theta_t,\theta_{t-1} \right) g \left( \tau \mid \theta_{t-1} \right)
\end{align*}
with importance weight $\omega \left( \tau \mid \theta_t,\theta_{t-1} \right):=\frac{p \left( \tau \mid \theta_{t-1} \right)}{p \left( \tau \mid \theta_t \right)}$. It can be shown \cite{papini2018stochastic} that this yields an unbiased estimate of $\nabla J \left( \theta_{t-1} \right)$ despite trajectory $\tau$ being sampled from the current policy based on $\theta_t$, i.e.,
\begin{align*}
\E_{\tau \sim p \left(\cdot\mid\theta_t\right)} \left[ g^{\omega_{\theta_t}} \left( \tau \mid \theta_{t-1} \right) \right] = \nabla J \left( \theta_{t-1} \right).
\end{align*}
At this point, we also introduce the following additional notation that will be used throughout the proofs of Theorem \ref{thm:ByzPG} and \ref{thm:DecByzPG}: For any $\theta_t, \theta_{t-1} \in \R^d$, and $M \in \mathbb N$, let
\begin{align*}
\hat{\Delta}^M \left( \theta_t,\theta_{t-1} \right) &:= \frac{1}{M} \sum_{j=1}^M g\left(\tau_{t,j} \mid \theta_t\right)-\frac{1}{M} \sum_{j=1}^M g^{\omega_{\theta_t}}\left( \tau_{t,j} \mid \theta_{t-1} \right)
\end{align*}
and
\begin{align*}
\Delta \left( \theta_t,\theta_{t-1} \right) &:= \nabla J \left( \theta_t \right) - \nabla J \left( \theta_{t-1} \right)
\end{align*}
where $\tau_{t,j} \sim p \left( \cdot \mid \theta_t \right)$ for $j=1,\dots,M$.

\subsection{Implementation of Robust Aggregation}
\label{app:impl-aggregation}

The notion of $(\alpha,C_{ra})$-robust aggregation, see Definition \ref{def:aggregate}, is adopted from \cite{gorbunov2023variance}, and has originally appeared in a similar form in \cite{karimireddy2020byzantine}. Known implementations satisfying our requirement of $C_{ra}=\mathcal O \left( 1 \right)$ include \emph{Krum} \cite{blanchard2017machine} and \emph{Robust Federated Averaging (RFA)} \cite{pillutla2022robust}, where both must be used in conjunction with \emph{bucketing} \cite{karimireddy2020byzantine}.

\textbf{Krum.} Denoting by $S_i$ the $\lceil (1-\alpha)K \rceil$ closest neighbors to $\theta^{(i)}$ among $\theta^{(1)},\dots,\theta^{(K)}$ in Euclidean norm, we let
\begin{align*}
\texttt{Krum} \left( \theta^{(1)},\dots,\theta^{(K)} \right) := \argmin_{\theta^{(i)} \,\text{s.t.}\, i \in [K]} \sum_{j \in S_i} \left\| \theta^{(j)}-\theta^{(i)} \right\|^2.
\end{align*}
Note that due to the computation of pairwise distances, Krum has runtime complexity $\mathcal O \left( K^2 \right)$.
\textbf{RFA.} We define
\begin{align*}
\texttt{RFA} \left( \theta^{(1)},\dots,\theta^{(K)} \right) := \argmin_{\theta \in \R^d} \sum_{i \in [K]} \left\| \theta-\theta^{(i)} \right\|^2
\end{align*}
which corresponds to finding the geometric median---a problem that does not have a closed form solution. However, efficient iterative approximation methods, such as the smoothed Weiszfeld algorithm \cite{weiszfeld1937point,pillutla2022robust}, exist.
\textbf{Bucketing.} Instead of directly aggregating the inputs vectors, \cite{karimireddy2020byzantine} proposes to apply existing aggregators to the means of buckets of size $\lfloor \alpha_{\max}/\alpha \rfloor$, where inputs are randomly assigned to buckets (see Algorithm 1 in \cite{karimireddy2020byzantine} for the detailed procedure). It has been shown that this can turn aggregation methods not satisfying Definition \ref{def:aggregate} into robust aggregators.

For completeness, we restate the relevant parts summarized by Theorem D.1 of \cite{gorbunov2023variance} in the following lemma.

\begin{lemma}[Implementations of $(\alpha,C_{ra})$-robust aggregation]
For $0<\alpha<\alpha_{\max}$, $C_{ra}=\mathcal O \left( 1 \right)$, and bucket size $\lfloor \alpha_{\max}/\alpha \rfloor$, it holds that
\begin{itemize}
\item bucketing with Krum is an $\left( \alpha,C_{ra} \right)$-robust aggregator for $\alpha_{\max}=1/4$, and
\item bucketing with RFA is an  $\left( \alpha,C_{ra} \right)$-robust aggregator for $\alpha_{\max}=1/2$.
\end{itemize}
\end{lemma}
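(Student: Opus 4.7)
The plan is to prove the lemma by reducing to established robustness results for Krum and RFA combined with the bucketing meta-algorithm of Karimireddy et al. The overall strategy has two pieces: (i) bucketing reduces the effective Byzantine fraction from $\alpha$ to $\alpha_{\max}$, so that (ii) known guarantees for Krum and RFA, which tolerate up to $\alpha_{\max}=1/4$ and $\alpha_{\max}=1/2$ Byzantine fractions respectively, can be applied to the bucket means to obtain the exact bound required by Definition~\ref{def:aggregate}.

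First, I would formalize the bucketing procedure: a uniformly random permutation partitions the $K$ inputs into $m = K/s$ buckets $B_1,\ldots,B_m$ of size $s = \lfloor \alpha_{\max}/\alpha \rfloor$, with per-bucket means $\mu_j = \frac{1}{s}\sum_{k \in B_j} \theta^{(k)}$ being passed to the inner aggregator. Since at most $\alpha K$ of the original inputs are Byzantine, at most $\alpha K$ of the $m$ buckets can be contaminated, so the effective contamination fraction among the bucket-inputs is at most $\alpha s \leq \alpha_{\max}$. This is the honest-majority amplification step that makes Krum and RFA applicable even when $\alpha$ alone would violate their tolerance thresholds.

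Next, I would bound $\E[\|\mu_j - \bar\theta\|^2]$ for a clean bucket $B_j$, where $\bar\theta = \frac{1}{|\mathcal H|}\sum_{i \in \mathcal H}\theta^{(i)}$. Because the assignment is a uniformly random partition, the inputs in any clean bucket form a simple random sample of size $s$ without replacement from the honest set, and a standard sampling-without-replacement variance calculation yields
\begin{align*}
\E \left[ \|\mu_j - \bar\theta\|^2 \mid B_j \text{ clean} \right] \leq \frac{1}{s} \cdot \frac{1}{|\mathcal H|(|\mathcal H|-1)} \sum_{i,l \in \mathcal H} \E \left[ \|\theta^{(i)}-\theta^{(l)}\|^2 \right].
\end{align*}
With the effective corruption fraction at $\alpha_{\max}$, I would then invoke the existing bounds for Krum and for RFA showing that $\E [\|\hat\theta - \bar\mu\|^2]$ is at most $\mathcal O(\alpha_{\max})$ times an analogous pairwise-distance average over clean bucket means. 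Composing this with the above clean-bucket variance bound and using $\alpha s = \Theta(\alpha_{\max})$ together with the triangle inequality (to relate $\bar\mu$ and $\bar\theta$) yields $\E [\|\hat\theta - \bar\theta\|^2] \leq C_{ra}\alpha \cdot \frac{1}{|\mathcal H|(|\mathcal H|-1)} \sum_{i,l \in \mathcal H} \E [\|\theta^{(i)}-\theta^{(l)}\|^2]$ with $C_{ra} = \mathcal O(1)$, which is exactly the conclusion.

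The main obstacle is handling the interplay between the bucketing randomness and the input randomness: the expectation in Definition~\ref{def:aggregate} is over the input distribution, but bucketing introduces an additional, independent layer of randomness that must be marginalized without loosening the bound, forcing a careful application of the tower property and a clean isolation of the event that a given bucket is clean. A secondary obstacle is a notational mismatch: the off-the-shelf guarantees for Krum and RFA are typically phrased in terms of maximum deviations from a central estimate or from a geometric median, so one must translate those bounds into the symmetric pairwise-squared-distance form used in Definition~\ref{def:aggregate} before composing with the clean-bucket variance bound.
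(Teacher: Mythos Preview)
The paper does not prove this lemma at all: it is stated explicitly as a restatement of Theorem~D.1 from \cite{gorbunov2023variance} (with the underlying bucketing idea credited to \cite{karimireddy2020byzantine}), and no proof is given beyond that citation. Your proposal is therefore not wrong, but it is strictly more ambitious than what the paper does.

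That said, your outline is essentially the standard bucketing analysis from \cite{karimireddy2020byzantine} and \cite{gorbunov2023variance}, and it is broadly sound: the amplification of honest-majority via bucket size $s=\lfloor \alpha_{\max}/\alpha\rfloor$, the sampling-without-replacement variance bound for clean bucket means, and the composition with the inner aggregator's guarantee are exactly the three steps in those references. The two obstacles you flag (tower property over bucketing randomness, and translating Krum/RFA guarantees into the symmetric pairwise form of Definition~\ref{def:aggregate}) are real and are precisely what the cited works handle; if you wanted a self-contained proof you would need to import those arguments verbatim. For the purposes of this paper, however, the citation suffices and your reconstruction is unnecessary.
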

Note that this means, in order to achieve the values $\alpha_{\max}$ stated in Assumption \ref{ass:byz-fraction}, i.e., $1/2$ in the centralized, and $1/4$ in the decentralized case, we may use RFA for \textsc{ByzPG}, and either RFA or Krum for \textsc{DecByzPG}.

\subsection{Implementation of Averaging Agreement}
\label{app:impl-agreement}

Averaging agreement, as in Definition \ref{def:agree}, has been introduced by \cite{el2021collaborative}, together with two possible implementations. One of them, \emph{Minimum Diameter Averaging (MDA)} also satisfies our stronger requirement of $C_{avg}=\mathcal O \left( 1 \right)$.

\textbf{MDA.} For each $k \in [K]$, let
\begin{align*}
\textsc{Byz}^{(k)} \left( \langle \theta^{(k')} \rangle_{k' \in [K]} \right)=\langle \textsc{Byz}^{(k,i)} \left( \langle \theta^{(k')} \rangle_{k' \in [K]} \right) \rangle_{i \in [K]}
\end{align*}
be the set of vectors received by agent $k$ (from agent $i$) after one round of all-to-all parameter broadcast of $\langle \theta^{(k')} \rangle_{k' \in [K]}$, subject to some Byzantine attack, under Assumption \ref{ass:byz-fraction}. Furthermore, let
\begin{align*}
\texttt{MDA} \left( \langle \theta^{(k)} \rangle_{k \in [K]} \right) := \langle \theta^{(k)} \rangle_{k \in S^{*}} \quad\text{where}\quad S^{*} = \argmin_{S \subset [K], |S| \geq (1-\bar\alpha)K} \Delta_2 \left( \langle \theta^{(k)} \rangle_{k \in S} \right).
\end{align*}
The MDA averaging agreement mechanism is then defined by running Algorithm \ref{alg:mda} at all agents $k \in [K]$ concurrently, for $\kappa$ iterations.

\begin{algorithm}[ht]
\caption{\textsc{MDA} at $k$-th agent}\label{alg:mda}
\begin{algorithmic}[1]
\State {\bfseries input:} $\theta^{(k)}$
\For{$\kappa$ iterations}
\State broadcast $\theta^{(k)}$ to all other agents and receive $\mathcal B:=\textsc{Byz}^{(k)} \left( \langle \theta^{(k')} \rangle_{k' \in [K]} \right)$
\State let $\mathcal M \leftarrow \texttt{MDA} \left( \mathcal B \right)$ \label{line:mda}
\State $\theta^{(k)} \leftarrow \frac{1}{|\mathcal M|} \sum_{\theta \in \mathcal M} \theta$
\EndFor
\State {\bfseries output: $\theta^{(k)}$}
\end{algorithmic}
\end{algorithm}

The following is a special case of \cite{el2021collaborative}'s Theorem 4 for our synchronous setting, restated here for convenience.
\begin{lemma}
\label{lem:mda-avg-agree}
Under Assumption \ref{ass:byz-fraction}, i.e., with $\alpha_{\max}=1/4$, and assuming synchronous communication, MDA as in Algorithm \ref{alg:mda} achieves $C_{avg}$-averaging agreement for $C_{avg}=\mathcal O(1)$.
\end{lemma}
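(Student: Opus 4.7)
The claim is essentially a restatement of Theorem~4 from \cite{el2021collaborative} specialized to the synchronous communication model with $\alpha_{\max}=1/4$, so the plan is to verify that Algorithm~\ref{alg:mda} meets both requirements of Definition~\ref{def:agree} in this regime. I would fix an arbitrary $\mathcal G_t \subseteq \mathcal H_t$ with $|\mathcal G_t| \geq (1-\bar\alpha)K$ and let $D_r$ denote the diameter $\Delta_2(\langle \theta^{(k)}\rangle_{k \in \mathcal G_t})$ after $r$ inner iterations. The goal is to prove (i) $D_\kappa \leq D_0 / 2^{\kappa}$ and (ii) $\|\bar{\theta}_\kappa - \bar\theta_0\| \leq C_{avg} D_0$, where $\bar\theta_r$ is the mean of good agents' values after $r$ iterations.

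For the contraction (i), the key combinatorial observations are: (a) at every honest agent $k$, the values received from the good agents in $\mathcal G_t$ are exactly the true $\theta^{(k')}$'s (since honest-to-honest links cannot be altered) and they form a set of size $\geq (1-\bar\alpha)K$ of diameter $D_r$, so the MDA selection $\mathcal M^{(k)}$ in Line~\ref{line:mda} has diameter at most $D_r$; (b) $|\mathcal M^{(k)}|=(1-\bar\alpha)K$ contains at most $\alpha K$ Byzantine entries, and after removing the at most $\bar\alpha K$ honest-but-not-good values, at least $(1-\alpha-\bar\alpha-\bar\alpha)K \geq (1-3\bar\alpha)K$ of the entries in $\mathcal M^{(k)}$ come from $\mathcal G_t$, a positive fraction because $\bar\alpha < 1/4$. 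Writing the updated value $\theta^{(k)}_{\text{new}}$ as a convex combination, applying (a), (b) to two good agents $i,l \in \mathcal G_t$, and exploiting that shared good vectors cancel in $\theta^{(i)}_{\text{new}} - \theta^{(l)}_{\text{new}}$, one obtains $\|\theta^{(i)}_{\text{new}}-\theta^{(l)}_{\text{new}}\| \leq \rho D_r$ for a constant $\rho<1$ depending only on $\bar\alpha$. The target constant $\rho=1/2$ is then achieved either through direct calculation in the $\bar\alpha<1/4$ regime or by batching a fixed number of inner MDA steps per outer iteration, matching the construction in \cite{el2021collaborative}.

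For the bias bound (ii), I would show that the shift $\|\bar\theta_{r+1}-\bar\theta_r\|$ is at most a constant times $D_r$: the new good-agent average is an average of convex combinations of values whose pairwise distances are bounded by $D_r$, so the displacement from $\bar\theta_r$ cannot exceed $\mathcal O(D_r)$, even accounting for Byzantine contamination within each $\mathcal M^{(k)}$ (bounded by $\bar\alpha K \cdot D_r$ contribution per agent, divided by $|\mathcal G_t|$). Telescoping over the $\kappa$ iterations and using the geometric decay $D_r \leq D_0 \cdot 2^{-r}$ from step (i) yields $\|\bar\theta_\kappa - \bar\theta_0\| \leq \sum_{r=0}^{\kappa-1} \mathcal O(D_r) \leq C_{avg}\,D_0$ for some absolute constant $C_{avg}=\mathcal O(1)$, as required.

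The main obstacle is the contraction step, and specifically getting the explicit factor of $2$ per round. The overlap $|\mathcal M^{(i)}\cap \mathcal M^{(l)}\cap \mathcal G_t|$ degrades as $\bar\alpha \uparrow 1/4$, so the linear combination used to express $\theta^{(i)}_{\text{new}} - \theta^{(l)}_{\text{new}}$ has coefficients whose $\ell_1$-norm must be carefully accounted for; naive bounds yield a contraction factor that stays bounded away from $1$ but not necessarily below $1/2$. Tightening this, or alternatively replacing each outer iteration by a constant number of MDA sweeps to drive the per-iteration factor below $1/2$, is the place where the analysis of \cite{el2021collaborative} must be invoked, and is where I would expect the proof to be most delicate.
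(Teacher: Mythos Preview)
The paper does not give a proof of this lemma at all. It is stated immediately after the sentence ``The following is a special case of \cite{el2021collaborative}'s Theorem 4 for our synchronous setting, restated here for convenience,'' and then the text moves on. So there is no proof in the paper to compare against; the lemma is imported wholesale from the cited reference.

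You correctly identify this in your first sentence. The sketch you then give is a plausible reconstruction of the argument in \cite{el2021collaborative}: the two-part decomposition into diameter contraction and mean drift, the observation that the good agents' values constitute a feasible candidate for the MDA minimization (hence the selected set has diameter $\leq D_r$), the overlap count to bound how many good values survive in $\mathcal M^{(k)}$, and the telescoping geometric sum for the bias. You also correctly flag the genuinely delicate point, namely that the raw per-round contraction factor need not be $1/2$ as $\bar\alpha \uparrow 1/4$, and that one may need to batch a constant number of MDA sweeps per outer iteration to force the factor below $1/2$; this is indeed how the original reference handles it. So your proposal goes well beyond what the paper itself does, and the outline is sound, with the caveat that the exact contraction constant requires the care you already note.
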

However, it needs to be pointed out that for any constant $\bar\alpha>0$, finding the subset that minimizes the diameter in Line \ref{line:mda} of Algorithm \ref{alg:mda} has computational complexity exponential in $K$, and for larger $K$ is therefore not suitable in practice. As proposed by \cite{el2021collaborative}, a computationally efficient alternative exists, which we call \emph{Greedy Diameter Averaging (GDA)}. We define
\begin{align*}
\texttt{GDA}_k \left( \langle \theta^{(k)} \rangle_{k \in [K]} \right) := \langle \theta^{(k)} \rangle_{k \in S^{*}} \quad\text{where}\quad S^{*} = \argmin_{S \subset [K], |S| \geq (1-\bar\alpha)K} \sum_{i \in S} \left\| \theta^{(i)}-\theta^{(k)} \right\|^2.
\end{align*}
Note that unlike for MDA, in the case of GDA, the set $S^{*}$ can be found in time $\mathcal O(K)$ by choosing the $\lceil (1-\bar\alpha)K \rceil$ parameter vectors closest to $\theta^{(k)}$. As mentioned in \cite{el2021collaborative}, replacing \texttt{MDA} with \texttt{GDA}$_k$ in Algorithm \ref{alg:mda} still achieves $C_{avg}$-averaging agreement with $C_{avg}=\mathcal O(1)$ but comes at a slight cost in the fraction of tolerable Byzantines, namely requiring $\alpha_{\max}=1/5$ instead of $1/4$ in Assumption \ref{ass:byz-fraction}.

\subsection{Useful Facts}
\label{app:useful}

In this section, we collect some simple relations and lemmas that will be helpful throughout our proofs. First, we recall some basic facts.
\begin{lemma}[Basic facts]\label{lem:basics}
For any $x,y,a_1,\dots,a_n \in \R^d$, $p \in (0,1]$, and $\beta > 0$, it holds that
\begin{align}
\langle x,y \rangle &= \frac{\left\| x \right\|^2}{2} + \frac{\left\| x \right\|^2}{2} - \frac{\left\| y-x \right\|^2}{2} \label{eqn:scalar-prod} \\
\left\| \sum_{i=1}^n a_i \right\|^2 &\leq n \sum_{i=1}^n \left\| a_i \right\|^2 \label{eqn:norm-of-sum} \\
\left\| x+y \right\|^2 &\leq \left( 1+\beta \right) \left\| x \right\|^2 + \left( 1+\beta^{-1} \right) \left\| y \right\|^2 \label{eqn:beta-norm-sum} \\
\left( 1-p \right) \left( 1+\frac{p}{2} \right) &\leq 1-\frac{p}{2} \label{eqn:p-prod-1} \\
1.1p \left( 1-p \right) + \left( 1-p \right)^2 \left( 1+\frac{p}{4} \right) &\leq 1 - \frac{p}{2} \label{eqn:p-prod-2}
\end{align}
\end{lemma}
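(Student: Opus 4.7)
The plan is to verify each of the five facts in Lemma \ref{lem:basics} individually by direct elementary calculation, since these are standard algebraic identities and polynomial inequalities. I do not expect any nontrivial obstacle. I note in passing that equation \eqref{eqn:scalar-prod} as written appears to contain a typographical error, since one of the two $\|x\|^2/2$ terms on the right should be $\|y\|^2/2$ for the identity to hold; the intended statement is the standard polarization identity, which follows immediately by expanding $\|y-x\|^2 = \|x\|^2 - 2\langle x,y\rangle + \|y\|^2$ and solving for $\langle x,y\rangle$.

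For the inequalities involving vectors, I would proceed as follows. For \eqref{eqn:norm-of-sum}, I would apply Cauchy--Schwarz to the vectors $(1,\ldots,1)$ and $(\|a_1\|,\ldots,\|a_n\|)$ in $\R^n$, which gives $\bigl(\sum_i \|a_i\|\bigr)^2 \leq n \sum_i \|a_i\|^2$, and then use the triangle inequality $\|\sum_i a_i\| \leq \sum_i \|a_i\|$; equivalently, one can invoke Jensen's inequality applied to the convex function $t \mapsto t^2$. For \eqref{eqn:beta-norm-sum}, I would expand $\|x+y\|^2 = \|x\|^2 + 2\langle x,y\rangle + \|y\|^2$ and bound the cross term by Young's inequality, $2\langle x,y\rangle \leq \beta\|x\|^2 + \beta^{-1}\|y\|^2$, which follows from $(\sqrt{\beta}\,x - y/\sqrt{\beta})^2 \geq 0$.

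The last two statements are scalar polynomial inequalities in $p \in (0,1]$. For \eqref{eqn:p-prod-1}, expansion gives $(1-p)(1+p/2) = 1 - p/2 - p^2/2$, and the claim follows since $p^2/2 \geq 0$. For \eqref{eqn:p-prod-2}, I would expand the left-hand side into a cubic in $p$ and show that the difference between $1-p/2$ and the left-hand side is non-negative on $(0,1]$; after collecting terms, this reduces to showing that a concrete quadratic in $p$ is non-negative on the interval, which can be verified either by computing its discriminant and checking that its real roots lie outside $[0,1]$, or by evaluating at the endpoints and invoking monotonicity. This is the step that will require the most careful bookkeeping of constants, but it remains a purely mechanical single-variable calculation.
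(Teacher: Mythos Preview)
Your proposal is correct. The paper itself offers no proof of this lemma at all --- it is stated as a collection of ``basic facts'' and immediately followed by the next lemma --- so there is nothing to compare against beyond noting that your verification supplies precisely the elementary details the authors chose to omit. Your identification of the typo in \eqref{eqn:scalar-prod} is accurate (the intended identity is $\langle x,y\rangle = \tfrac{1}{2}\|x\|^2 + \tfrac{1}{2}\|y\|^2 - \tfrac{1}{2}\|y-x\|^2$, as confirmed by how the paper applies it in Lemmas~\ref{lem:ByzPG-appl-smoothness} and~\ref{lem:DecByzPG-appl-smoothness}), and your handling of \eqref{eqn:p-prod-2} is sound: after expansion the difference $(1-\tfrac{p}{2}) - \text{LHS}$ equals $p(0.15 + 0.6p - 0.25p^2)$, and the quadratic factor $0.25p^2 - 0.6p - 0.15$ has roots at $p = 1.2 \pm 2\sqrt{0.51}$, both of which lie outside $[0,1]$, so the factor is negative throughout $(0,1]$.
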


Next, we show a simple bound on the distance to a mean vector in terms of average pairwise distances.
\begin{lemma}\label{lem:avg-vec-bound}
Let $\theta^{(1)},\dots,\theta^{(K)} \in \R^d$, and $\bar{\theta}=\frac{1}{K} \sum_{j \in [K]} \theta^{(j)}$. Then, for any $i \in [K]$,
\begin{align*}
\left\| \bar{\theta}-\theta^{(i)} \right\|^2 \leq \frac{1}{K} \sum_{j \in [K]} \left\| \theta^{(j)}-\theta^{(i)} \right\|^2
\end{align*}
\end{lemma}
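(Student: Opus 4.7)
The plan is to rewrite $\bar\theta - \theta^{(i)}$ as a single average of the differences $\theta^{(j)} - \theta^{(i)}$, and then apply a convexity/Jensen-type bound on the squared norm. Concretely, since $\bar\theta = \frac{1}{K}\sum_{j \in [K]} \theta^{(j)}$, subtracting $\theta^{(i)}$ (equivalently, subtracting it inside the mean) yields
\[
\bar\theta - \theta^{(i)} = \frac{1}{K} \sum_{j \in [K]} \bigl( \theta^{(j)} - \theta^{(i)} \bigr),
\]
so the left-hand side is exactly an average of the vectors whose norms appear on the right-hand side.

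Next I would invoke the basic inequality \eqref{eqn:norm-of-sum} from Lemma \ref{lem:basics} with $a_j := \theta^{(j)} - \theta^{(i)}$ and $n=K$. Applying it after pulling out the scalar $1/K$ (which produces a $1/K^2$ outside the squared norm) gives
\[
\left\| \bar\theta - \theta^{(i)} \right\|^2 = \frac{1}{K^2} \left\| \sum_{j \in [K]} \bigl( \theta^{(j)} - \theta^{(i)} \bigr) \right\|^2 \leq \frac{1}{K^2} \cdot K \sum_{j \in [K]} \left\| \theta^{(j)} - \theta^{(i)} \right\|^2,
\]
and the factor $K$ cancels one power of $K$ in the denominator, yielding the claimed bound.

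There is essentially no obstacle here; the statement is a direct consequence of the convexity of $\|\cdot\|^2$ and is already encoded in the paper's Lemma \ref{lem:basics}. The only thing to be mindful of is to apply the inequality to the \emph{differences} $\theta^{(j)}-\theta^{(i)}$ rather than to the $\theta^{(j)}$ themselves, so that the right-hand side matches the pairwise-distance form required by the lemma.
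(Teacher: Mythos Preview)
Your proof is correct and follows essentially the same approach as the paper: rewrite $\bar\theta-\theta^{(i)}$ as $\frac{1}{K}\sum_{j\in[K]}(\theta^{(j)}-\theta^{(i)})$, pull out the $1/K^2$, and apply inequality~\eqref{eqn:norm-of-sum} from Lemma~\ref{lem:basics}.
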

\begin{proof}
Observe that,
\begin{align*}
\left\| \bar{\theta}-\theta^{(i)} \right\|^2 &= \Bigg\| \frac{1}{K} \sum_{j \in [K]} \theta^{(j)}-\theta^{(i)} \Bigg\|^2 = \frac{1}{K^2} \Bigg\| \sum_{j \in [K]} \theta^{(j)}-\theta^{(i)} \Bigg\|^2
\end{align*}
from which the result follows using Lemma \ref{lem:basics}, (\ref{eqn:norm-of-sum}).
\end{proof}

The following lemma shows how repeated, independent sampling reduces the variance of gradient estimates.
\begin{lemma}\label{lem:indep-sampling-variance}
Let Assumption \ref{ass:fin-var} hold. Then, for any $M \in \N$, $\theta_t^{(i)} \in \R^d$, and $\tau_{t,j}^{(i)} \sim p \left( \cdot \mid \theta_t^{(i)} \right)$ for $j=1,\dots,M$,
\begin{align*}
\E \Bigg[ \Bigg\| \frac{1}{M}\sum_{j=1}^M g \left( \tau_{t,j}^{(i)} \mid \theta_t^{(i)} \right) - \nabla J \left( \theta_t^{(i)} \right) \Bigg\|^2 \Bigg] \leq \frac{\sigma^2}{M}.
\end{align*}
\end{lemma}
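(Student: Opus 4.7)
The plan is to use the standard variance-reduction-by-averaging argument, which follows directly once we recognize that the summands are i.i.d.\ and mean-zero after centering.

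First I would define, for $j = 1, \dots, M$, the centered estimator
\[
X_j := g\bigl(\tau_{t,j}^{(i)} \mid \theta_t^{(i)}\bigr) - \nabla J\bigl(\theta_t^{(i)}\bigr).
\]
The key structural observations are: (a) each $g(\tau_{t,j}^{(i)} \mid \theta_t^{(i)})$ is an unbiased estimator of $\nabla J(\theta_t^{(i)})$ (as recalled in Section~\ref{sec:bg} where the REINFORCE/GPOMDP estimators are introduced), so $\E[X_j] = 0$; (b) the trajectories $\tau_{t,j}^{(i)}$ for $j = 1, \dots, M$ are sampled independently from the same distribution $p(\cdot \mid \theta_t^{(i)})$, so the $X_j$ are i.i.d.; and (c) by Assumption~\ref{ass:fin-var}, $\E[\|X_j\|^2] = \var[g(\tau_{t,j}^{(i)} \mid \theta_t^{(i)})] \leq \sigma^2$.

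Next, I would expand the squared norm of the average by bilinearity of the inner product:
\[
\Bigl\| \frac{1}{M} \sum_{j=1}^M X_j \Bigr\|^2 = \frac{1}{M^2} \sum_{j=1}^M \|X_j\|^2 + \frac{1}{M^2} \sum_{j \neq k} \langle X_j, X_k \rangle.
\]
Taking expectations, the cross terms vanish: for $j \neq k$, independence and the zero-mean property give $\E[\langle X_j, X_k \rangle] = \langle \E[X_j], \E[X_k] \rangle = 0$. The diagonal terms contribute at most $M \cdot \sigma^2$ in total by (c), so the whole expectation is bounded by $\sigma^2 / M$, as required.

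There is no real obstacle here; this is a textbook calculation. The only small thing to be careful about is ensuring that unbiasedness of $g$ is indeed part of our setup (it is, by the definition of the estimators in Section~\ref{sec:bg} and Appendix~\ref{app:sampling}) and that ``variance'' in Assumption~\ref{ass:fin-var} is interpreted as $\E\|g(\tau\mid\theta) - \nabla J(\theta)\|^2$, which matches exactly the term $\E\|X_j\|^2$ appearing in the diagonal.
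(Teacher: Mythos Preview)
Your proposal is correct and matches the paper's proof essentially step for step: the paper likewise uses independence of the centered estimators to drop the cross terms and then bounds each diagonal term by $\sigma^2$ via Assumption~\ref{ass:fin-var}. Your write-up is slightly more explicit about why the cross terms vanish, but the argument is identical.
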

\begin{proof}
Using independence of the gradient estimates in the first step, we get
\begin{align*}
\E \Bigg[ \Bigg\| \frac{1}{M}\sum_{j=1}^M g \left( \tau_{t,j}^{(i)} \mid \theta_t^{(i)} \right) - \nabla J \left( \theta_t^{(i)} \right) \Bigg\|^2 \Bigg] = \frac{1}{M^2}\sum_{j=1}^M \underbrace{\E \left[ \left\| g \left( \tau_{t,j}^{(i)} \mid \theta_t^{(i)} \right) - \nabla J \left( \theta_t^{(i)} \right) \right\|^2 \right]}_{\leq \;\sigma^2} \leq \frac{\sigma^2}{M}
\end{align*}
where the inequality in the last step is by Assumption \ref{ass:fin-var}.
\end{proof}

Finally, we give two lemmas that are slight variations of classical concentration bounds. The former is Markov's inequality, stated in terms of the random variable's second moment. The latter is a Chernoff bound for the case in which only an upper bound on the expectation is known.
\begin{lemma}[Second moment version of Markov's inequality]
\label{lem:markov}
Let $X$ be a non-negative random variable and $a>0$. Then
\begin{align*}
\Pr[X \geq a] \leq \frac{\E [X^2]}{a^2}.
\end{align*}
\end{lemma}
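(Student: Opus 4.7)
The plan is to reduce this statement to the standard form of Markov's inequality applied not to $X$ itself, but to the non-negative random variable $X^2$. The key observation is that since $X \geq 0$ and $a > 0$, the events $\{X \geq a\}$ and $\{X^2 \geq a^2\}$ are identical: squaring is a monotone bijection on $[0,\infty)$. So the probability on the left-hand side can be rewritten as $\Pr[X^2 \geq a^2]$ without any change in value.

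With that reformulation in hand, I would apply classical Markov's inequality to the non-negative random variable $Y := X^2$ at threshold $a^2 > 0$, yielding $\Pr[Y \geq a^2] \leq \E[Y]/a^2 = \E[X^2]/a^2$. Combining the two steps gives the claim. There is no real obstacle here, since the lemma is just a convenient repackaging of Markov's inequality; the only thing worth flagging is the need for $X \geq 0$ (used to identify the two events) and $a > 0$ (so that dividing by $a^2$ is legal and the squared threshold is meaningful). Note that this form is exactly what is used in the proof of Lemma~\ref{lem:diam}, where it is applied with $X = \| \bar{\tilde{v}}_t - v^{(i)}_t \|$ and $a = \sqrt{(2/\bar\epsilon)\,\E[\|\bar{\tilde{v}}_t - v^{(i)}_t\|^2]}$ to bound $\Pr[X_i=1]$ by $\bar\epsilon/2$.
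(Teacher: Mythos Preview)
Your proposal is correct and matches the paper's own proof essentially verbatim: the paper also observes $\Pr[X \geq a] = \Pr[X^2 \geq a^2]$ and then applies standard Markov's inequality to $X^2$.
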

\begin{proof}
By Markov's inequality,
\begin{align*}
\Pr[X \geq a] = \Pr[X^2 \geq a^2] \leq \frac{\E [X^2]}{a^2}.
\end{align*}
\end{proof}

\begin{lemma}[Chernoff with bounded probabilities]
\label{lem:chernoff}
For $i \in [n]$, let $Y_i \stackrel{i.i.d.}{\sim}\text{Ber}(p_i)$ where $p_i \leq \hat{p} \in [0,1]$ for all $i \in [n]$. With $Y:=\sum_{i=1}^n Y_i$, we have $\E \left[ Y \right] \leq n\hat{p}$, and for any $\delta \geq 0$,
\begin{align*}
\Pr \left[ Y \geq (1+\delta) n\hat{p} \right] \leq \exp \left( \frac{-\delta^2 n \hat{p}}{2+\delta} \right).
\end{align*}
\end{lemma}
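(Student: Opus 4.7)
The plan is to reduce this to the standard Chernoff bound for i.i.d.\ Bernoulli variables with a common parameter, via a simple monotone coupling. The expectation bound $\E[Y] = \sum_{i=1}^n p_i \leq n\hat{p}$ is immediate from linearity, so the substantive claim is the tail inequality.

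First, I would introduce i.i.d.\ auxiliary variables $U_i \sim \mathcal{U}(0,1)$ for $i \in [n]$ and redefine $Y_i := \mathbb{1}[U_i \leq p_i]$ and $Y_i' := \mathbb{1}[U_i \leq \hat{p}]$. Since $p_i \leq \hat{p}$, we have $Y_i \leq Y_i'$ pointwise, hence $Y \leq Y' := \sum_{i=1}^n Y_i'$ almost surely, where the $Y_i'$ are i.i.d.\ $\text{Ber}(\hat{p})$ and $\E[Y'] = n\hat{p}$. Consequently, for any threshold $t$,
\begin{align*}
\Pr[Y \geq t] \leq \Pr[Y' \geq t],
\end{align*}
so it suffices to prove the tail bound for $Y'$ in place of $Y$.

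Next, I would invoke the standard Chernoff upper tail bound for a sum of i.i.d.\ Bernoullis: for $Y' = \sum_{i=1}^n Y_i'$ with $\mu := \E[Y'] = n\hat{p}$ and any $\delta \geq 0$,
\begin{align*}
\Pr[Y' \geq (1+\delta)\mu] \leq \exp\!\left( \frac{-\delta^2 \mu}{2+\delta} \right).
\end{align*}
This is a textbook fact derived via Markov's inequality applied to $e^{sY'}$ with the moment generating function bound $\E[e^{sY_i'}] = 1-\hat{p}+\hat{p}e^s \leq \exp(\hat{p}(e^s-1))$, followed by the optimal choice $s = \log(1+\delta)$ and the elementary inequality $(1+\delta)\log(1+\delta) - \delta \geq \delta^2/(2+\delta)$ for $\delta \geq 0$. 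Substituting $\mu = n\hat{p}$ and combining with the coupling inequality above yields the claimed bound.

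The proof is essentially routine; there is no real obstacle, since the only subtlety — that the individual Bernoulli parameters differ — is handled cleanly by the coupling, which turns the problem into an application of the standard Chernoff inequality without needing to re-derive it via the Chernoff--Hoeffding method directly on the heterogeneous MGF $\prod_{i=1}^n (1-p_i+p_i e^s)$. If one preferred to avoid invoking the coupling, the same MGF-based derivation goes through after using $\prod_{i=1}^n (1-p_i+p_i e^s) \leq (1-\hat{p}+\hat{p}e^s)^n$, leading to the same conclusion.
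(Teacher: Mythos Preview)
Your proof is correct and follows essentially the same approach as the paper: introduce auxiliary i.i.d.\ $\text{Ber}(\hat p)$ variables that stochastically dominate the $Y_i$, then apply the standard Chernoff bound. If anything, your explicit coupling via uniforms $U_i$ is a more careful justification of the dominance step than the paper's one-line appeal to $\Pr[X_i=1]\geq\Pr[Y_i=1]$.
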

\begin{proof}
For $i \in [n]$, let $X_i \stackrel{i.i.d.}{\sim}\text{Ber}(\hat{p})$. We then have
\begin{align*}
\Pr \left[ Y \geq (1+\delta) n\hat{p} \right] &\stackrel{(a)}{\leq} \Pr \left[ X \geq (1+\delta) n\hat{p} \right] \\
&\stackrel{(b)}{\leq} \exp \left( \frac{-\delta^2 n \hat{p}}{2+\delta} \right)
\end{align*}
where (a) is because by definition, $\Pr[X_i=1] \geq \Pr[Y_i=1]$ for all $i \in [n]$, and (b) is the standard Chernoff bound applied to $X_1,\dots,X_n$.
\end{proof}

\newpage
\section{Proofs for Section \ref{sec:ByzPG}}
\label{app:ByzPG-proofs}

In this section, we prove Theorem \ref{thm:ByzPG} and Corollary \ref{cor:ByzPG}. We start with the main results, and then proceed with the required technical lemmas.

\subsection{Main Result}

\begin{proof}[Proof of Theorem \ref{thm:ByzPG}]
Following the strategy of \cite{gorbunov2023variance} applied to our policy gradient setting, we let
\begin{align*}
\Phi_t:=J^{*} - J \left( \theta_t \right)+\frac{\eta}{p} \left\| v_t-\nabla J \left( \theta_t \right) \right\|^2.
\end{align*}

Then, applying first Lemma \ref{lem:ByzPG-appl-smoothness}, followed by Lemma \ref{lem:ByzPG-distortion}, we derive
\begin{align*}
\E \left[ \Phi_{t+1} \right] &\leq \E \bigg[ J^{*}-J \left( \theta_t \right)-\frac{\eta}{2} \left\| \nabla J \left( \theta_t \right) \right\|^2 - \left( \frac{1}{2\eta}-\frac{L}{2} \right) \left\| \theta_{t+1}-\theta_t \right\|^2 + \frac{\eta}{2} \left\| v_t-\nabla J \left( \theta_t \right) \right\|^2 \\
&\quad+ \frac{\eta}{p} \left\| v_{t+1}-\nabla J \left( \theta_{t+1} \right) \right\|^2 \bigg]\\
&\leq \E \left[ J^{*}-J \left( \theta_t \right) - \left( \frac{1}{2\eta}-\frac{L}{2} \right) \left\| \theta_{t+1}-\theta_t \right\|^2 + \frac{\eta}{2} \left\| v_t-\nabla J \left( \theta_t \right) \right\|^2 \right] -\frac{\eta}{2} \E\left\| \nabla J \left( \theta_t \right) \right\|^2\\
&\quad+ \left( \frac{\eta}{p}-\frac{\eta}{2} \right) \E \left[ \left\| v_t-\nabla J \left( \theta_t \right) \right\|^2 \right] + \frac{\eta A}{2} \E \left[ \left\| \theta_{t+1}-\theta_t \right\|^2 \right] + \frac{\eta C \sigma^2}{2N} \left( \alpha+\frac{1}{K} \right)\\
&\leq \E \left[ \Phi_t \right] - \frac{\eta}{2} \E \left[ \left\| \nabla J \left( \theta_t \right) \right\|^2 \right] + \frac{\eta C \sigma^2}{2N} \left( \alpha+\frac{1}{K} \right)-\frac{1}{2\eta} \left( 1-L\eta-A\eta^2 \right) \E \left[ \left\| \theta_{t+1}-\theta_t \right\|^2 \right] \\
&\leq \E \left[ \Phi_t \right] - \frac{\eta}{2} \E \left[ \left\| \nabla J \left( \theta_t \right) \right\|^2 \right] + \frac{\eta C \sigma^2}{2N} \left( \alpha+\frac{1}{K} \right)
\end{align*}
where in the last step we use Lemma C.1 from \cite{gorbunov2023variance}, our choice of $\eta$, and the fact that $A=\Theta \left( p^{-1}K^{-1} \right)$, see Lemma \ref{lem:ByzPG-distortion}. Summing over $t$, we obtain
\begin{align*}
\frac{1}{T} \sum_{t=0}^{T-1} \E \left[ \left\| \nabla J \left( \theta_t \right) \right\|^2 \right] &\leq \frac{2}{\eta T} \sum_{t=0}^{T-1} \left( \E \left[
\Phi_t \right] - \left[ \Phi_{t+1} \right] \right) + \frac{C \sigma^2}{N} \left( \alpha+\frac{1}{K} \right) \\
&\leq \frac{2 \E \left[ \Phi_0 \right]}{\eta T} + \frac{C \sigma^2}{N} \left( \alpha+\frac{1}{K} \right)
\end{align*}
where in the second step we simplify the telescoping sum and use the fact that $\Phi_T \geq 0$. Note that the LHS is exactly $\E \left[ \left\| \nabla J \left( \theta_{\widehat{T}} \right) \right\|^2 \right]$ with $\widehat{T}$ chosen uniformly at random from $[T]$.
\end{proof}

\begin{proof}[Proof of Corollary \ref{cor:ByzPG}]
In order to achieve
\begin{align*}
\underbrace{\frac{2 \E \left[ \Phi_0 \right]}{\eta T}}_{(a)} + \underbrace{\frac{C \sigma^2}{N} \left( \alpha+\frac{1}{K} \right)}_{(b)} \leq \epsilon^2,
\end{align*}
we set
\begin{align*}
B=\Theta(1), \quad T=\Theta \left( \frac{1}{\epsilon^2 \sqrt{pK}} \right), \quad\text{and}\quad
N=\Theta \left( \epsilon^{-2} \left( \alpha+\frac{1}{K} \right) \right).
\end{align*}
Since $\eta=\Theta \left( \min \left\{ \sqrt{pK},1/L \right\} \right)$, above choice of $T$ ensures that term (a) is bounded by $\epsilon^2/2$. We treat $\sigma^2$ as constant, hence our choice of $N$ also ensures a $\epsilon^2/2$-bound for term (b).

To conclude the proof, we want to count the number of trajectories sampled at an agent: Per iteration, with probability $p$ we sample $N$ times, and with probability $1-p$ we sample $B$ times. Hence over all $T$ iterations, choosing $p=\frac{1}{N}=\frac{\epsilon^2}{\alpha+1/K}$, the expected number of sampled trajectories is given by
\begin{align*}
T \left( pN+(1-p)B \right) \,\leq\, \mathcal O(TpN) \stackrel{p=\frac{1}{N}}= \mathcal O \left( \epsilon^{-2}\frac{1}{\sqrt{pK}} \right) &= \mathcal O \left( \epsilon^{-2} \sqrt{\frac{\alpha+1/K}{K\epsilon^2}} \right) \\
&\leq \mathcal O \left( \frac{\alpha^{1/2}}{K^{1/2}\epsilon^3}+\frac{1}{K \epsilon^3} \right).
\end{align*}
\end{proof}

\subsection{Technical Lemmas}
Here, we give proofs of the technical lemmas used in above proofs. The first is a standard application of smoothness, also seen in a similar form e.g.\ in \cite{gargiani2022page,li2021page}.

\begin{lemma}\label{lem:ByzPG-appl-smoothness}
Let Assumptions \ref{ass:bounded-policy}, \ref{ass:smoothness}, \ref{ass:fin-var}, and \ref{ass:fin-imp-var} hold. Then, for $\theta_{t+1}=\theta_t+\eta v_t$,
\begin{align*}
J \left( \theta_{t+1} \right) \geq J \left( \theta_t \right) + \frac{\eta}{2} \left\| \nabla J \left( \theta_t \right) \right\|^2 + \left( \frac{1}{2\eta}-\frac{L}{2} \right)\left\| \theta_{t+1}-\theta_t \right\|^2 - \frac{\eta}{2} \left\| v_t-\nabla J \left( \theta_t \right) \right\|^2.
\end{align*}
\end{lemma}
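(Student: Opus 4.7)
The plan is to invoke the standard descent-type inequality that follows from $L$-smoothness of $J$ (guaranteed by Proposition \ref{prop:smoothness}, item 3), and then reorganise the inner product via the polarization identity. All steps are routine; the statement is a clean bookkeeping lemma rather than the site of any real technical difficulty.

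Concretely, I would first recall that $L$-smoothness of $J$ (i.e.\ $L$-Lipschitzness of $\nabla J$) yields the two-sided quadratic bound
\begin{equation*}
\bigl| J(\theta_{t+1}) - J(\theta_t) - \langle \nabla J(\theta_t),\,\theta_{t+1}-\theta_t\rangle \bigr| \,\leq\, \tfrac{L}{2}\|\theta_{t+1}-\theta_t\|^2,
\end{equation*}
and in particular the lower-side inequality
\begin{equation*}
J(\theta_{t+1}) \,\geq\, J(\theta_t) + \langle \nabla J(\theta_t),\,\theta_{t+1}-\theta_t\rangle - \tfrac{L}{2}\|\theta_{t+1}-\theta_t\|^2.
\end{equation*}
Since $\theta_{t+1}-\theta_t = \eta v_t$, the cross term equals $\eta\langle \nabla J(\theta_t),v_t\rangle$.

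Next I would apply the polarization identity (\ref{eqn:scalar-prod}) with $x=\nabla J(\theta_t)$ and $y=v_t$ to rewrite
\begin{equation*}
\eta\langle \nabla J(\theta_t),v_t\rangle = \tfrac{\eta}{2}\|\nabla J(\theta_t)\|^2 + \tfrac{\eta}{2}\|v_t\|^2 - \tfrac{\eta}{2}\|v_t - \nabla J(\theta_t)\|^2.
\end{equation*}
The middle piece can then be re-expressed using $\|\theta_{t+1}-\theta_t\|^2 = \eta^2\|v_t\|^2$, so that $\tfrac{\eta}{2}\|v_t\|^2 = \tfrac{1}{2\eta}\|\theta_{t+1}-\theta_t\|^2$. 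Substituting this back into the smoothness inequality and collecting the two $\|\theta_{t+1}-\theta_t\|^2$ terms into the single coefficient $\tfrac{1}{2\eta}-\tfrac{L}{2}$ yields exactly the claimed bound.

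There is no real obstacle here: no Byzantine argument, no aggregation, and no reinforcement-learning-specific reasoning are needed, because Proposition \ref{prop:smoothness} has already packaged the dependence on $H$, $R$, $G$, $M$, $\gamma$ into a single constant $L$. The only thing to be careful about is the sign convention (we are maximising $J$, so we use the lower half of the quadratic sandwich rather than the upper half) and the correct application of the polarization identity so that the $-\tfrac{\eta}{2}\|v_t-\nabla J(\theta_t)\|^2$ term emerges with the right sign. I would state the proof in exactly these four steps and keep it to a handful of lines.
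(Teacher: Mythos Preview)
Your proposal is correct and follows essentially the same route as the paper: apply the $L$-smoothness lower bound from Proposition~\ref{prop:smoothness}, substitute $\theta_{t+1}-\theta_t=\eta v_t$, expand the inner product via the polarization identity~(\ref{eqn:scalar-prod}), and rewrite $\tfrac{\eta}{2}\|v_t\|^2$ as $\tfrac{1}{2\eta}\|\theta_{t+1}-\theta_t\|^2$. The paper's proof is line-for-line the same argument.
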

\begin{proof}
By smoothness of $J$, i.e., Proposition \ref{prop:smoothness},
\begin{align*}
J \left( \theta_{t+1} \right) &\geq J \left( \theta_t \right) + \langle \nabla J \left( \theta_t \right), \theta_{t+1}-\theta_t \rangle - \frac{L}{2} \left\| \theta_{t+1}-\theta_t \right\|^2\\
&= J \left( \theta_t \right) + \eta \langle \nabla J \left( \theta_t \right), v_t \rangle - \frac{\eta^2 L}{2} \left\| v_t \right\|^2\\
&\stackrel{(a)}{=} J \left( \theta_t \right) + \frac{\eta}{2} \left\| \nabla J \left( \theta_t \right) \right\|^2 + \frac{\eta}{2} \left\| v_t \right\|^2 - \frac{\eta}{2} \left\| v_t-\nabla J \left( \theta_t \right) \right\|^2 - \frac{\eta^2L}{2} \left\| v_t \right\|^2\\
&= J \left( \theta_t \right) + \frac{\eta}{2} \left\| \nabla J \left( \theta_t \right) \right\|^2 + \left( \frac{1}{2\eta}-\frac{L}{2} \right)\left\| \theta_{t+1}-\theta_t \right\|^2 - \frac{\eta}{2} \left\| v_t-\nabla J \left( \theta_t \right) \right\|^2
\end{align*}
where (a) is due to Lemma \ref{lem:basics}, (\ref{eqn:scalar-prod}).
\end{proof}

As a next preparatory step, we want to show that the gradient estimates $\tilde{v}_t^{(i)}$ do not vary too much across different agents. In particular, we bound the average of pairwise variances, denoted by $\widetilde{\mathcal T}$. Note that we slightly abuse notation and omit adding an index $t$ to $\widetilde{\mathcal T}$ since the bound below does not depend on $t$.
\begin{lemma}\label{lem:ByzPG-variance-bound}
Let Assumption \ref{ass:byz-fraction} and \ref{ass:fin-var} hold, i.e., $\mathcal H_t \subset K$ is the set of honest agents in iteration $t$. Then,
\begin{align*}
\widetilde{\mathcal T} &:= \frac{1}{|\mathcal H_t|(|\mathcal H_t|-1)} \sum_{i,l \in \mathcal H_t} \E \left[ \left\| \tilde{v}_t^{(i)}-\tilde{v}_t^{(l)} \right\|^2 \mid c_t=1 \right] \leq \frac{4\sigma^2}{N}
\end{align*}
and we will use $\widetilde{\mathcal T}$ also in subsequent lemmas to refer to this term.
\end{lemma}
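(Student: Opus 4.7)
The plan is to bound each summand $\E[\|\tilde v_t^{(i)}-\tilde v_t^{(l)}\|^2 \mid c_t=1]$ uniformly by $4\sigma^2/N$ and then observe that the normalization constant makes the averaged bound collapse to the same value. In the centralized setting, all honest agents hold the identical broadcast parameter $\theta_t$, so on the event $c_t=1$ each $\tilde v_t^{(i)} = \frac{1}{N}\sum_{j=1}^{N} g(\tau_{t,j}^{(i)} \mid \theta_t)$ is an unbiased estimator of the common gradient $\nabla J(\theta_t)$, computed from $N$ independently sampled trajectories local to agent $i$.

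First, I would split each pairwise difference around the common mean $\nabla J(\theta_t)$ by writing
\begin{align*}
\tilde v_t^{(i)} - \tilde v_t^{(l)} = \bigl(\tilde v_t^{(i)} - \nabla J(\theta_t)\bigr) + \bigl(\nabla J(\theta_t) - \tilde v_t^{(l)}\bigr)
\end{align*}
and then apply Lemma \ref{lem:basics}, equation \eqref{eqn:norm-of-sum}, with $n=2$ to obtain
\begin{align*}
\|\tilde v_t^{(i)} - \tilde v_t^{(l)}\|^2 \leq 2\|\tilde v_t^{(i)} - \nabla J(\theta_t)\|^2 + 2\|\tilde v_t^{(l)} - \nabla J(\theta_t)\|^2.
\end{align*}
Taking expectations and invoking Lemma \ref{lem:indep-sampling-variance} with $M=N$ for each of the two terms yields $\E[\|\tilde v_t^{(i)} - \tilde v_t^{(l)}\|^2 \mid c_t=1] \leq 4\sigma^2/N$ for every pair $(i,l)$ with $i,l\in\mathcal H_t$.

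Finally, I would plug this uniform bound into the definition of $\widetilde{\mathcal T}$. The diagonal terms ($i=l$) contribute zero and the remaining $|\mathcal H_t|(|\mathcal H_t|-1)$ off-diagonal terms are each bounded by $4\sigma^2/N$, which after dividing by $|\mathcal H_t|(|\mathcal H_t|-1)$ gives the claimed bound. There is essentially no obstacle here: we do not need independence across agents to obtain the factor-$4$ bound, and we do not even need the cross-term cancellation (which would improve the constant to $2$ but is not required for subsequent uses of $\widetilde{\mathcal T}$). The one thing to be careful about is that the homogeneity of $\theta_t$ across honest agents is what allows a single reference vector $\nabla J(\theta_t)$ to serve in the splitting—this is exactly the feature that will \emph{not} hold in the decentralized analysis, where an analogous lemma must additionally absorb a diameter term.
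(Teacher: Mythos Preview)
Your proposal is correct and follows essentially the same approach as the paper: both split each pairwise difference around the common reference $\nabla J(\theta_t)$ via Lemma~\ref{lem:basics}~\eqref{eqn:norm-of-sum}, then invoke Lemma~\ref{lem:indep-sampling-variance} to bound each deviation by $\sigma^2/N$. The only cosmetic difference is that the paper collapses the symmetric double sum into a single sum before applying the variance bound, whereas you bound each summand uniformly and then average; the result and the ideas are identical.
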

\begin{proof}
By plugging in the definition of $\tilde{v}_t^{(i)}$ as in Algorithm \ref{alg:ByzPG}, and using basic fact (\ref{eqn:norm-of-sum}) from Lemma \ref{lem:basics}, we get
\begin{align*}
\widetilde{\mathcal T} &= \frac{1}{|\mathcal H_t|(|\mathcal H_t|-1)} \sum_{i,l \in \mathcal H_t} \E \left[ \left\| \tilde{v}_t^{(i)}-\tilde{v}_t^{(l)} \right\|^2 \mid c_t=1 \right] \\
&= \frac{1}{|\mathcal H_t|(|\mathcal H_t|-1)} \sum_{i,l \in \mathcal H_t} \E \left[ \left\| \frac{1}{N} \sum_{j=1}^N g \left( \tau_{t,j}^{(i)} \mid \theta_t \right) - \frac{1}{N} \sum_{j=1}^N g \left( \tau_{t,j}^{(l)} \mid \theta_t \right) \right\|^2 \right] \\
&\leq \frac{1}{|\mathcal H_t|(|\mathcal H_t|-1)} \sum_{i,l \in \mathcal H_t} \E \left[ 2\left\| \frac{1}{N} \sum_{j=1}^N g \left( \tau_{t,j}^{(i)} \mid \theta_t \right) - \nabla J \left( \theta_t \right) \right\|^2 + 2\left\| \frac{1}{N} \sum_{j=1}^N g \left( \tau_{t,j}^{(l)} \mid \theta_t \right) - \nabla J \left( \theta_t \right) \right\|^2 \right] \\
&= \frac{4}{|\mathcal H_t|} \sum_{i \in \mathcal H_t} \underbrace{\E \left[ \left\| \frac{1}{N}\sum_{j=1}^N g \left( \tau_{t,j}^{(i)} \mid \theta_t \right) - \nabla J \left( \theta_t \right) \right\|^2 \right]}_{\stackrel{(a)}{\leq} \;\frac{\sigma^2}{N}} \\
&\leq \frac{4\sigma^2}{N}
\end{align*}
where (a) is by Lemma \ref{lem:indep-sampling-variance}.
\end{proof}

Next, we want to bound the deviation of our potentially aggregated gradient estimate $v_t$ from $\nabla J \left( \theta_t \right)$ which is what we aim to estimate. Note that in case $c_t=1$, this involves bounding both the bias introduced by aggregation, as well as the variance due to sampling.
\begin{lemma}
\label{lem:ByzPG-distortion-aux}
In the setting of Theorem \ref{thm:ByzPG}, the following bounds hold:
\begin{align*}
\mathcal T_{1,t}^{(1)} &:= \E \left[ \left\| v_t-\bar{\tilde{v}}_t \right\|^2 \mid c_t=1 \right] \leq C_{ra}\alpha \widetilde{\mathcal T}, \\
\mathcal T_{1,t}^{(2)} &:= \E \left[ \left\| \bar{\tilde{v}}_t-\nabla J \left( \theta_t \right) \right\|^2 \mid c_t=1 \right] \leq \frac{2\sigma^2}{KN}, \text{ and} \\
\mathcal T_{0,t}^{(1)} &:= \E \left[ \left\| v_t-\nabla J \left( \theta_t \right) \right\|^2 \mid c_t=0 \right] \leq \E \left[ \left\| v_{t-1}-\nabla J \left( \theta_{t-1} \right) \right\|^2 \right] + \left( \frac{C_p}{BK} + \frac{L^2}{K} \right) \E \left[ \left\| \theta_t - \theta_{t-1} \right\|^2 \right],
\end{align*}
where $\mathcal T_{1,t}^{(1)}$, $\mathcal T_{1,t}^{(2)}$ are the errors w.r.t.\ aggregation and sampling, respectively, in case $c_t=1$, and $\mathcal T_{0,t}^{(1)}$ is the error due to sampling in case $c_t=0$ (no aggregation done in this case). We will use this notation also to refer to the respective terms in subsequent lemmas.
\end{lemma}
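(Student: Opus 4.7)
The lemma states three separate bounds, each concerning a different source of error in the gradient estimate $v_t$, so my plan is to handle them independently using the tools already developed in the paper.

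For the first bound on $\mathcal T_{1,t}^{(1)}$, I would simply instantiate Definition~\ref{def:aggregate}. Since $c_t = 1$, line~9 of Algorithm~\ref{alg:ByzPG} sets $v_t = \textbf{Aggregate}(\langle \tilde v_t^{(k)} \rangle_{k=1}^K)$, and Assumption~\ref{ass:byz-fraction} guarantees $|\mathcal H_t| \geq (1-\alpha)K$. The mean over honest agents is precisely $\bar{\tilde v}_t$, so the $(\alpha,C_{ra})$-robust aggregator property immediately yields $\E\|v_t - \bar{\tilde v}_t\|^2 \leq C_{ra}\alpha \widetilde{\mathcal T}$ with $\widetilde{\mathcal T}$ as in Lemma~\ref{lem:ByzPG-variance-bound}. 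For the second bound on $\mathcal T_{1,t}^{(2)}$, I would expand $\bar{\tilde v}_t = \frac{1}{|\mathcal H_t| N}\sum_{i \in \mathcal H_t}\sum_{j=1}^N g(\tau_{t,j}^{(i)} \mid \theta_t)$; the $|\mathcal H_t|N$ estimators are mutually independent and each unbiased for $\nabla J(\theta_t)$ with per-sample variance at most $\sigma^2$ by Assumption~\ref{ass:fin-var}, so an application of Lemma~\ref{lem:indep-sampling-variance} (with the combined batch) gives $\E\|\bar{\tilde v}_t - \nabla J(\theta_t)\|^2 \leq \sigma^2/(|\mathcal H_t|N)$, and bounding $|\mathcal H_t| \geq (1-\alpha)K \geq K/2$ via Assumption~\ref{ass:byz-fraction} delivers the factor $2\sigma^2/(KN)$.

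The third bound on $\mathcal T_{0,t}^{(1)}$ is the technical heart of the lemma and will be where I expect to spend most of the effort. In the $c_t=0$ branch, writing the update in the notation of Appendix~\ref{app:sampling}, I have
\begin{align*}
v_t - \nabla J(\theta_t) = \bigl[v_{t-1} - \nabla J(\theta_{t-1})\bigr] + \bigl[\hat\Delta^B(\theta_t,\theta_{t-1}) - \Delta(\theta_t,\theta_{t-1})\bigr].
\end{align*}
Conditioning on the filtration through iteration $t-1$, the first bracket is deterministic while the second has conditional mean zero by unbiasedness of the importance-weighted estimator $g^{\omega_{\theta_t}}$; hence the cross term vanishes and
\begin{align*}
\E\|v_t - \nabla J(\theta_t)\|^2 = \E\|v_{t-1} - \nabla J(\theta_{t-1})\|^2 + \E\|\hat\Delta^B(\theta_t,\theta_{t-1}) - \Delta(\theta_t,\theta_{t-1})\|^2.
\end{align*}
The last term I would bound by independence of the $B$ samples as $\frac{1}{B}\E\|g(\tau\mid\theta_t) - g^{\omega_{\theta_t}}(\tau\mid\theta_{t-1})\|^2$ (dropping the $-\Delta$ only shrinks the variance). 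Then I split this single-sample quantity using $\|x+y\|^2 \leq 2\|x\|^2 + 2\|y\|^2$ into (i) $\|g(\tau\mid\theta_t) - g(\tau\mid\theta_{t-1})\|^2$, controlled by $L_g^2\|\theta_t - \theta_{t-1}\|^2$ via point~2 of Proposition~\ref{prop:smoothness}, and (ii) $\|(1-\omega(\tau\mid\theta_t,\theta_{t-1})) g(\tau\mid\theta_{t-1})\|^2$, controlled by combining the uniform bound $\|g\| \leq C_g$ with the importance-weight variance bound $W\|\theta_t-\theta_{t-1}\|^2$ from Assumption~\ref{ass:fin-imp-var}. The smoothness of $J$ (point~3 of Proposition~\ref{prop:smoothness}) enters to absorb a residual $\|\nabla J(\theta_t) - \nabla J(\theta_{t-1})\|^2 \leq L^2\|\theta_t-\theta_{t-1}\|^2$ term that arises from rearranging.

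The main obstacle is thus the careful bookkeeping in the third bound: both the importance-sampling variance and the gradient-estimator smoothness contribute, and one must trace how the constants $L_g$, $C_g$, $W$ combine into the single constant $C_p$ appearing in the statement. The factors of $1/K$ in the stated inequality are looser than what the per-iteration derivation alone would yield for the single-server sample in Algorithm~\ref{alg:ByzPG}; since the stated bound is only weakened by this factor, the analysis above still implies it, and the form is presumably chosen so that it dovetails cleanly with the subsequent potential-function argument of Lemma~\ref{lem:ByzPG-distortion} and Theorem~\ref{thm:ByzPG}.
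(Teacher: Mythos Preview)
Your treatment of $\mathcal T_{1,t}^{(1)}$ and $\mathcal T_{1,t}^{(2)}$ is exactly the paper's: direct application of Definition~\ref{def:aggregate} for the first, and Lemma~\ref{lem:indep-sampling-variance} over the $|\mathcal H_t|N$ independent samples followed by $|\mathcal H_t|\geq K/2$ for the second. For $\mathcal T_{0,t}^{(1)}$, your decomposition into $X_1=v_{t-1}-\nabla J(\theta_{t-1})$ and $X_2=\hat\Delta^B-\Delta$, with the cross term vanishing via the tower rule, is again precisely the paper's step~(a). Where the paper simply cites Lemma~B.1 of \cite{gargiani2022page} for the bound $\E\|\hat\Delta^B-\Delta\|^2\leq \tfrac{C_p}{B}\E\|\theta_t-\theta_{t-1}\|^2$, you sketch the ingredients (Proposition~\ref{prop:smoothness}(2), the $C_g$ bound, and Assumption~\ref{ass:fin-imp-var}); this is a reasonable expansion of the same argument.

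There is, however, a genuine slip in your final paragraph. You write that the $1/K$ factors in the stated bound are ``looser'' and that your analysis ``still implies it''. This is backwards: dividing by $K$ makes the right-hand side \emph{smaller}, hence the stated inequality is \emph{stronger} than what your single-server derivation yields. In the $c_t=0$ branch of Algorithm~\ref{alg:ByzPG} only the central server samples $B$ trajectories, so the natural bound is $\tfrac{C_p}{B}\E\|\theta_t-\theta_{t-1}\|^2$ with no $1/K$. The paper's own proof defers this step to the decentralized analogue (Lemma~\ref{lem:DecByzPG-distortion-aux}, part~6), where the $1/K$ is legitimately produced by averaging over $|\mathcal G_t|\geq K/4$ independently sampling agents; that mechanism is absent here. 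So the $1/K$ in the stated lemma appears to be an artifact of copying the decentralized bound rather than something your argument is failing to capture---but you should not claim that a weaker bound implies a stronger one.
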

\begin{proof}
We proceed by showing each of the claimed bounds separately, as follows:
\begin{enumerate}
\item First, by applying Definition \ref{def:aggregate},
\begin{align*}
\mathcal T_{1,t}^{(1)}
&= \E \left[ \left\| \bar{v}_t-\bar{\tilde{v}}_t \right\|^2 \mid c_t=1 \right] \\
&\leq \frac{C_{ra} \alpha}{|\mathcal H_t|(|\mathcal H_t|-1)} \sum_{i,l \in \mathcal H_t} \E \left[ \left\| \tilde{v}_t^{(i)}-\tilde{v}_t^{(l)} \right\|^2 \mid c_t=1 \right] \\
&\leq C_{ra}\alpha \widetilde{\mathcal T}
\end{align*}
where the second step is the result of Lemma \ref{lem:ByzPG-variance-bound}.

\item Next, by plugging in the definition of $\bar{\tilde{v}}_t$ and then applying Lemma \ref{lem:indep-sampling-variance}, we get
\begin{align*}
\mathcal T_{1,t}^{(2)} &= \E \left[ \left\| \bar{\tilde{v}}_t-\nabla J \left( \theta_t \right) \right\|^2 \mid c_t=1 \right] \\
&\leq \E \left[ \left\| \frac{1}{|\mathcal H_t|N} \sum_{i \in \mathcal H_t} \sum_{j=1}^N g \left( \tau_{t,j}^{(i)} \mid \theta_t \right)-\nabla J \left( \theta_t \right) \right\|^2 \right] \\
&\leq \frac{\sigma^2}{|\mathcal H_t|N} \\
&\leq \frac{2\sigma^2}{KN}
\end{align*}
where in the final step we use the fact that according to Assumption \ref{ass:byz-fraction}, we have $\alpha_{\max} \leq 1/2$ and therefore $|\mathcal H_t| \geq (1-\alpha)K \geq (1-\alpha_{\max})K \geq K/2$.

\item We have
\begin{align*}
\mathcal T_{0,t}^{(1)}&=\E \left[ \left\| v_t - \nabla J \left( \theta_t \right) \right\|^2 \mid c_t=0 \right] \\
&=\E \left[ \left\| \hat{\Delta}^B \left( \theta_t,\theta_{t-1} \right) + v_{t-1} - \nabla J \left( \theta_t \right) \right\|^2 \right] \\
&=\E \left[ \left\| \hat{\Delta}^B \left( \theta_t,\theta_{t-1} \right) + v_{t-1} + \nabla J \left( \theta_{t-1} \right) - \nabla J \left( \theta_{t-1} \right) - \nabla J \left( \theta_t \right) \right\|^2 \right] \\
&=\E \Bigg[ \Big\| \underbrace{\left( v_{t-1}- \nabla J \left( \theta_{t-1} \right) \right)}_{=: X_1} + \underbrace{\left( \hat{\Delta}^B \left( \theta_t,\theta_{t-1} \right) - \Delta \left( \theta_t,\theta_{t-1} \right) \right)}_{=:X_2} \Big\|^2 \Bigg] \\
&\stackrel{(a)}{=} \E \left[ \left\| v_{t-1}- \nabla J \left( \theta_{t-1} \right) \right\|^2 \right] + \E \left[ \left\| \hat{\Delta}^B \left( \theta_t,\theta_{t-1} \right) - \Delta \left( \theta_t,\theta_{t-1} \right) \right\|^2 \right] \\
&\stackrel{(b)}{\leq} \E \left[ \left\| v_{t-1}-\nabla J \left( \theta_{t-1} \right) \right\|^2 \right] + \left( \frac{C_p}{BK} + \frac{L^2}{K} \right) \E \left[ \left\| \theta_t - \theta_{t-1} \right\|^2 \right]
\end{align*}
Steps (a) and (b) require further justification but appear again equivalently in the last part of the proof of Lemma \ref{lem:DecByzPG-distortion} where we explain the details.
\end{enumerate}
\end{proof}

Finally, we combine the individual bounds of Lemma \ref{lem:ByzPG-distortion-aux} into an overall distortion bound, as follows.
\begin{lemma}\label{lem:ByzPG-distortion}
In the setting of Lemma \ref{lem:ByzPG-distortion-aux}, we have
\begin{align*}
\E \left\| v_t-\nabla J \left( \theta_t \right) \right\|^2 &\leq \left( 1-p \right) \E \left[ \left\| v_{t-1}-\nabla J \left( \theta_{t-1} \right) \right\|^2 \right] + \frac{Ap}{2} \E \left[ \left\| \theta_t - \theta_{t-1} \right\|^2 \right] + \frac{C p \sigma^2}{2N} \left( \alpha+\frac{1}{K} \right)
\end{align*}
where $C>0$ is a constant, and $A = \Theta \left( \frac{1}{pK} \right)$.
\end{lemma}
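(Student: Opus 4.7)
The plan is to condition on the Bernoulli switch $c_t$ and invoke the per-case bounds from Lemma \ref{lem:ByzPG-distortion-aux}. Using the law of total expectation,
\begin{align*}
\E\|v_t-\nabla J(\theta_t)\|^2 = p\cdot\E[\|v_t-\nabla J(\theta_t)\|^2\mid c_t=1] + (1-p)\cdot\E[\|v_t-\nabla J(\theta_t)\|^2\mid c_t=0],
\end{align*}
so I would bound each of the two conditional expectations separately and then recombine.

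For the case $c_t=1$, $v_t$ is obtained by aggregating the worker estimates $\tilde v_t^{(k)}$, so I would introduce the honest average $\bar{\tilde v}_t$ as a pivot and apply Lemma \ref{lem:basics}, (\ref{eqn:norm-of-sum}) with $n=2$:
\begin{align*}
\E[\|v_t-\nabla J(\theta_t)\|^2\mid c_t=1] \leq 2\,\mathcal T_{1,t}^{(1)} + 2\,\mathcal T_{1,t}^{(2)} \leq 2C_{ra}\alpha\widetilde{\mathcal T} + \frac{4\sigma^2}{KN}.
\end{align*}
Then I would substitute the bound $\widetilde{\mathcal T}\leq 4\sigma^2/N$ from Lemma \ref{lem:ByzPG-variance-bound} to get a term of the form $\frac{\sigma^2}{N}\bigl(8C_{ra}\alpha+\tfrac{4}{K}\bigr)$, which after multiplication by $p$ already matches the last term in the claim for a suitable absolute constant $C$.

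For the case $c_t=0$, no aggregation takes place, so the bound $\mathcal T_{0,t}^{(1)}$ from Lemma \ref{lem:ByzPG-distortion-aux} applies directly:
\begin{align*}
(1-p)\,\E[\|v_t-\nabla J(\theta_t)\|^2\mid c_t=0] \leq (1-p)\,\E[\|v_{t-1}-\nabla J(\theta_{t-1})\|^2] + (1-p)\Bigl(\tfrac{C_p}{BK}+\tfrac{L^2}{K}\Bigr)\E[\|\theta_t-\theta_{t-1}\|^2].
\end{align*}
The first summand here produces exactly the recursive $(1-p)$-term in the claim, which is where the geometric contraction comes from; it is crucial that the large-batch case does not contribute to this recursion because its error is controlled entirely by sampling and aggregation rather than by the previous iterate.

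The one step requiring some care is matching the coefficient $\tfrac{Ap}{2}$ of $\E\|\theta_t-\theta_{t-1}\|^2$ with $A=\Theta(1/(pK))$. Since $B$ is constant and $(1-p)\leq 1$, the $c_t=0$ contribution to this coefficient is $\mathcal O(1/K)$; comparing with $\tfrac{Ap}{2}=\Theta(1/K)$, we see that by choosing the hidden constant in $A$ sufficiently large (absorbing $C_p/B$ and $L^2$), the bound $(1-p)(\tfrac{C_p}{BK}+\tfrac{L^2}{K})\leq \tfrac{Ap}{2}$ holds. I do not expect any genuine obstacle here; the argument is essentially bookkeeping once the two conditional bounds are in place, and the only subtlety is verifying that the scaling $A=\Theta(1/(pK))$ comfortably absorbs the constants coming from the small-batch sampling error and the smoothness-induced term $L^2/K$.
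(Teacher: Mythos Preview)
Your proposal is correct and follows essentially the same approach as the paper's proof: both condition on $c_t$, split the $c_t=1$ case via the honest-average pivot $\bar{\tilde v}_t$ using Lemma~\ref{lem:basics}, (\ref{eqn:norm-of-sum}), apply the per-case bounds of Lemma~\ref{lem:ByzPG-distortion-aux} and Lemma~\ref{lem:ByzPG-variance-bound}, and then absorb the resulting constants (using $1-p\le 1$, $B\ge 1$, and treating $C_p,L,C_{ra}$ as constants) into $C$ and $A=\Theta(1/(pK))$. Your discussion of why the $\tfrac{Ap}{2}$ coefficient suffices is slightly more explicit than the paper's one-line ``the result follows,'' but the argument is the same.
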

\begin{proof}
By the law of total expectation and using Lemma \ref{lem:basics}, (\ref{eqn:norm-of-sum}) in the first step, we have
\begin{align*}
\E \left[ \left\| v_t-\nabla J \left( \theta_t \right) \right\|^2 \right] &\leq \left( 1-p \right) \mathcal T_{0,t}^{(1)} + p \left( 2\mathcal T_{1,t}^{(1)}+2\mathcal T_{1,t}^{(2)} \right) \\
&\leq \left( 1-p \right) \left( \E \left[ \left\| v_{t-1}-\nabla J \left( \theta_{t-1} \right) \right\|^2 \right] + \left( \frac{2 C_p}{BK} + \frac{2L^2}{K} \right) \E \left[ \left\| \theta_t - \theta_{t-1} \right\|^2 \right] \right) \\
&\quad+\frac{8pC_{ra}\alpha\sigma^2}{N} + \frac{4p\sigma^2}{KN}.
\end{align*}
where in the second step we have plugged in the results of Lemma \ref{lem:ByzPG-distortion-aux}.
Since $1-p \leq 1, B \geq 1$, and treating $C_p,L,$ and $C_{ra}$ as constant, the result follows.
\end{proof}

\section{Proofs for Section \ref{sec:DecByzPG}}
\label{app:DecByzPG-proofs}
First, we define (or recall from previous definitions), that for any $k \in [K]$:
\begin{align*}
\hat{v}^{(k)}_t &:= \frac{1}{\eta} \left( \theta_{t+1}^{(k)}-\theta_t^{(k)} \right), \\
\bar{\hat{v}}_t &:= \frac{1}{|\mathcal G_t|} \sum_{i \in \mathcal G_t} \hat{v}^{(i)}_t, \quad\quad \bar{\tilde{v}}_t := \frac{1}{|\mathcal G_t|} \sum_{i \in \mathcal G_t} \tilde{v}^{(i)}_t \\
\bar\theta_t &:= \frac{1}{|\mathcal G_t|} \sum_{i \in \mathcal G_t} \theta^{(i)}_t, \quad\quad \bar{\tilde{\theta}}_t := \frac{1}{|\mathcal G_t|} \sum_{i \in \mathcal G_t} \tilde{\theta}^{(i)}_t.
\end{align*}

In the remainder of this section, we prove Theorem \ref{thm:DecByzPG} and Corollary \ref{cor:DecByzPG}. As in Section \ref{app:ByzPG-proofs}, we first show the main results, and then provide proofs of the required technical lemmas.

\subsection{Main Result}

\begin{proof}[Proof of Theorem \ref{thm:DecByzPG}]
Following the proof strategy from \cite{gorbunov2023variance} applied to our decentralized policy gradient setting, we let
\begin{align*}
\Phi_t:=J^{*} - J \left( \bar{\theta}_t \right)+\frac{2\eta}{p} \Bigg\| \bar{\hat{v}}_t-\frac{1}{|\mathcal G_t|}\sum_{i \in \mathcal G_t}\nabla J \left( \theta_t^{(i)} \right) \Bigg\|^2.
\end{align*}

Then, applying first Lemma \ref{lem:DecByzPG-appl-smoothness} and then Lemma \ref{lem:DecByzPG-distortion}, we derive
\begin{align*}
\E \left[ \Phi_{t+1} \right] &\leq \E \Biggl[ J^{*}-J \left( \bar{\theta}_t \right)-\frac{\eta}{2} \left\| \nabla J \left( \bar{\theta}_t \right) \right\|^2 - \left( \frac{1}{2\eta}-\frac{L}{2} \right) \left\| \bar{\theta}_{t+1}-\bar{\theta}_t \right\|^2 + \eta \biggl\| \bar{\hat{v}}_t-\frac{1}{|\mathcal G_t|}\sum_{i \in \mathcal G_t}\nabla J \left( \theta_t^{(i)} \right) \biggr\|^2 \\
&\quad+ \eta \mathcal E^{\Delta}_t L^2 + \frac{2\eta}{p} \biggl\| \bar{\hat{v}}_{t+1}-\frac{1}{|\mathcal G_t|}\sum_{i \in \mathcal G_t}\nabla J \left( \theta_{t+1}^{(i)} \right) \biggr\|^2 \Biggr]\\
&\leq \E \left[ J^{*}-J \left( \bar{\theta}_t \right) - \left( \frac{1}{2\eta}-\frac{L}{2} \right) \Big\| \bar{\theta}_{t+1}-\bar{\theta}_t \Big\|^2 + \eta \Bigg\| \bar{\hat{v}}_t-\frac{1}{|\mathcal G_t|}\sum_{i \in \mathcal G_t}\nabla J \left( \theta_t^{(i)} \right) \Bigg\|^2 \right] + \eta L^2 \E[\mathcal E^{\Delta}_t] \\
&\quad- \frac{\eta}{2} \E \left[ \left\| \nabla J \left( \bar{\theta}_t \right) \right\|^2 \right] + \frac{2\eta}{p} \left( 1-\frac{p}{2} \right) \E \left[ \Bigg\| \bar{\hat{v}}_t-\frac{1}{|\mathcal G_t|}\sum_{i \in \mathcal G_t}\nabla J \left( \theta_t^{(i)} \right) \Bigg\|^2 \right] + \frac{2\eta Ap}{4p} \E \left[ \left\| \bar{\theta}_{t+1}-\bar{\theta}_t \right\|^2 \right] \\
&\quad+ \frac{2\eta C_1\sigma^2p\alpha}{4Np} + \frac{2\eta C_2\sigma^2p}{4KNp} \\
&\leq \E \left[ \Phi_t \right] - \frac{\eta}{2} \E \left[ \left\| \nabla J \left( \bar{\theta}_t \right) \right\|^2 \right] + \frac{\eta C_1\sigma^2\alpha}{2N} + \frac{\eta C_2\sigma^2}{2KN}-\frac{1}{2\eta} \left( 1-L\eta-A\eta^2 \right) \E \left[ \left\| \bar{\theta}_{t+1}-\bar{\theta}_t \right\|^2 \right] + \mathcal O \left( \eta 2^{-\kappa} \right)\\
&\leq \E \left[ \Phi_t \right] - \frac{\eta}{2} \E \left[ \left\| \nabla J \left( \bar{\theta}_t \right) \right\|^2 \right] + \frac{\eta C_1\sigma^2\alpha}{2N} + \frac{\eta C_2\sigma^2}{2KN} + \mathcal O \left( \eta 2^{-\kappa} \right)
\end{align*}
where in the second to last step we use Lemma \ref{lem:diam-error} and in last step we use Lemma C.1 from \cite{gorbunov2023variance} and our choice of $\eta$. We now sum over $t$ to obtain
\begin{align*}
\frac{1}{T} \sum_{t=0}^{T-1} \E \left[ \left\| \nabla J \left( \bar{\theta}_t \right) \right\|^2 \right] &\leq \frac{2}{\eta T} \sum_{t=0}^{T-1} \left( \E \left[
\Phi_t \right] - \left[ \Phi_{t+1} \right] \right) + \frac{C_1\sigma^2\alpha}{N} + \frac{C_2\sigma^2}{KN}+\mathcal O \left( 2^{-\kappa} \right)\\
&\leq \frac{2 \E \left[ \Phi_0 \right]}{\eta T} + \frac{C\sigma^2}{2N} \left( \alpha+\frac{1}{K} \right)+\mathcal O \left( 2^{-\kappa} \right)
\end{align*}
where $C:= \frac{1}{2}\max \left( C_1,C_2 \right)$. In the second step, we simplify the telescoping sum and use the fact that $\Phi_T \geq 0$. Note that the LHS is exactly $\E \left[ \left\| \nabla J \left( \bar{\theta}_{\widehat{T}} \right) \right\|^2 \right]$ with $\widehat{T}$ chosen u.a.r.\ from $[T]$. To finish the proof, notice that for any $k \in \mathcal G_t$,
\begin{align*}
\E \left[ \left\| \nabla J \left( \theta^{(k)}_{\widehat{T}} \right) \right\|^2 \right]
&\stackrel{(a)}{\leq} 2\E \left[ \left\| \nabla J \left( \bar{\theta}_{\widehat{T}} \right) \right\|^2 \right] + 2\E \left[ \left\| \nabla J \left( \theta^{(k)}_{\widehat{T}} \right)-\nabla J \left( \bar{\theta}_{\widehat{T}} \right) \right\|^2 \right] \\
&\stackrel{(b)}{\leq} 2\E \left[ \left\| \nabla J \left( \bar{\theta}_{\widehat{T}} \right) \right\|^2 \right] + 2L^2\E \left[ \mathcal E^\Delta_{\widehat{T}} \right] \\
&\stackrel{(c)}{\leq} \frac{4 \E \left[ \Phi_0 \right]}{\eta T} + \frac{C\sigma^2}{N} \left( \alpha+\frac{1}{K} \right)+\mathcal O \left( 2^{-\kappa} \right)
\end{align*}
where (a) is by Lemma \ref{lem:basics}, (\ref{eqn:norm-of-sum}), (b) easily follows from $L$-smoothness of $J(\cdot)$ (i.e.\ Proposition \ref{prop:smoothness}) and (\ref{eqn:diam-bound}), and (c) is by Lemma \ref{lem:diam-error}.
\end{proof}

\begin{proof}[Proof of Corollary \ref{cor:DecByzPG}]
In order to achieve
\begin{align*}
\underbrace{\frac{4 \E \left[ \Phi_0 \right]}{\eta T}}_{(a)} + \underbrace{\frac{C\sigma^2}{N} \left( \alpha+\frac{1}{K} \right)}_{(b)} + \underbrace{\mathcal O \left( 2^{-\kappa} \right)}_{(c)} \leq \epsilon^2,
\end{align*}
we set
\begin{align*}
B=\Theta(1), \quad T=\Theta \left( \epsilon^{-2} \cdot \max \left\{ \frac{\sqrt{\alpha}}{p}, \frac{1}{\sqrt{pK}} \right\} \right), \quad\text{and}\quad
N=\Theta \left( \epsilon^{-2} \left( \alpha+\frac{1}{K} \right) \right).
\end{align*}
Since, in the statement of Theorem \ref{thm:DecByzPG}, we require
\begin{align*}
\eta \leq \mathcal O \left( \frac{1}{\sqrt{A}} \right) = \mathcal O \left( \min \left\{ \frac{p}{\sqrt{\alpha}},\sqrt{pK} \right\} \right),
\end{align*}
above choice of $T$ can ensure that term (a) is at most $\epsilon^2/3$. We treat $C_{ra},C_1,C_2,\sigma^2$ as constants, hence our choice of $N$ also ensures a $\epsilon^2/3$-bound for term (b), and the same holds for (c) due to our choice of $\kappa \leq \Theta \left( \log \epsilon^{-1} \right)$.

To conclude the proof, we want to count the number of trajectories sampled at an agent: Per iteration, with probability $p$ we sample $N$ times, and with probability $1-p$ we sample $B$ times. Hence over all $T$ iterations, choosing $p=\frac{1}{N}=\frac{\epsilon^2}{\alpha+1/K}$, the expected number of sampled trajectories is given by
\begin{align*}
T \left( pN+(1-p)B \right) &\leq \mathcal O(TpN)\\
&\stackrel{p=\frac{1}{N}}= \mathcal O \left( \epsilon^{-2} \cdot \max \left\{ \frac{\sqrt{\alpha}}{p}, \frac{1}{\sqrt{pK}} \right\} \right)\\
&\leq \mathcal O \left( \epsilon^{-2} \left( \frac{\alpha^{1/2} \left( \alpha+1/K \right)}{\epsilon^2} + \frac{\sqrt{\alpha+1/K}}{K^{1/2}\epsilon} \right) \right)\\
&\leq \mathcal O \left( \frac{\alpha^{3/2}}{\epsilon^4}+\frac{\alpha^{1/2}}{K\epsilon^4}+\frac{\alpha^{1/2}}{K^{1/2}\epsilon^3}+\frac{1}{K\epsilon^3} \right).
\end{align*}
\end{proof}

\subsection{Technical Lemmas}

Similar to Lemma \ref{lem:ByzPG-appl-smoothness} in the centralized case, we start with a standard result following from smoothness, now adjusted to the decentralized setting.
\begin{lemma}\label{lem:DecByzPG-appl-smoothness}
Under Assumptions \ref{ass:bounded-policy}, \ref{ass:smoothness}, \ref{ass:fin-var}, and \ref{ass:fin-imp-var}, and for $\bar{\hat{v}}_t := \frac{1}{\eta} \left( \bar{\theta}_{t+1}-\bar{\theta}_t \right)$, we have
\begin{align*}
J \left( \bar{\theta}_{t+1} \right) \geq J \left( \bar{\theta}_t \right) + \frac{\eta}{2} \left\| \nabla J \left( \bar{\theta}_t \right) \right\|^2 + \left( \frac{1}{2\eta}-\frac{L}{2} \right)\left\| \bar{\theta}_{t+1}-\bar{\theta}_t \right\|^2 - \eta \left\| \bar{\hat{v}}_t-\frac{1}{|\mathcal G_t|}\sum_{i \in \mathcal G_t}\nabla J \left( \theta_t^{(i)} \right) \right\|^2 + \eta L^2 \mathcal E^{\Delta}_t.
\end{align*}
\end{lemma}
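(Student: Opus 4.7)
The plan is to mimic the proof of Lemma~\ref{lem:ByzPG-appl-smoothness}, but applied to the averaged good-agent parameters $\bar\theta_t, \bar\theta_{t+1}$ in place of the centrally maintained iterates, with one extra reduction at the end to absorb the gap between $\nabla J(\bar\theta_t)$ and the average of per-agent gradients $\tfrac{1}{|\mathcal G_t|}\sum_{i\in\mathcal G_t}\nabla J(\theta_t^{(i)})$, which is the quantity that appears naturally in the decentralized analysis.

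First, I would apply $L$-smoothness of $J$ (Proposition~\ref{prop:smoothness}, item 3) to the pair $(\bar\theta_t,\bar\theta_{t+1})$ and use the definition $\bar\theta_{t+1}-\bar\theta_t=\eta\bar{\hat v}_t$ to obtain a lower bound of the form
\begin{equation*}
J(\bar\theta_{t+1}) \geq J(\bar\theta_t) + \eta\langle\nabla J(\bar\theta_t),\bar{\hat v}_t\rangle - \tfrac{\eta^2 L}{2}\|\bar{\hat v}_t\|^2.
\end{equation*}
Then, exactly as in the centralized warm-up, I would expand the inner product via identity~\eqref{eqn:scalar-prod} from Lemma~\ref{lem:basics} and regroup the two resulting $\|\bar{\hat v}_t\|^2$ terms using $\eta^2\|\bar{\hat v}_t\|^2=\|\bar\theta_{t+1}-\bar\theta_t\|^2$. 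This yields the centralized-style inequality
\begin{equation*}
J(\bar\theta_{t+1}) \geq J(\bar\theta_t) + \tfrac{\eta}{2}\|\nabla J(\bar\theta_t)\|^2 + \Bigl(\tfrac{1}{2\eta}-\tfrac{L}{2}\Bigr)\|\bar\theta_{t+1}-\bar\theta_t\|^2 - \tfrac{\eta}{2}\|\bar{\hat v}_t-\nabla J(\bar\theta_t)\|^2,
\end{equation*}
so it only remains to re-express the last term in the form claimed by the lemma.

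The only genuinely new step is therefore to upper bound $\|\bar{\hat v}_t-\nabla J(\bar\theta_t)\|^2$ by the decentralized distortion term plus a multiple of $\mathcal E_t^\Delta$. I would do this by inserting $\pm\tfrac{1}{|\mathcal G_t|}\sum_{i\in\mathcal G_t}\nabla J(\theta_t^{(i)})$ and applying Lemma~\ref{lem:basics}~\eqref{eqn:norm-of-sum} to split the squared norm into two pieces. The first piece is exactly the decentralization error appearing in the lemma statement. For the second piece, I would use convexity of $\|\cdot\|^2$ (Jensen) to pull the average out, then invoke $L$-smoothness of $J$ to bound $\|\nabla J(\theta_t^{(i)})-\nabla J(\bar\theta_t)\|^2 \leq L^2\|\theta_t^{(i)}-\bar\theta_t\|^2$, and finally apply Lemma~\ref{lem:avg-vec-bound} together with the definition~\eqref{eqn:diam-bound} of $\mathcal E_t^\Delta$ to conclude that $\|\theta_t^{(i)}-\bar\theta_t\|^2 \leq \mathcal E_t^\Delta$ uniformly for $i\in\mathcal G_t$. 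Combining yields the desired bound.

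None of the steps is a real obstacle; the computation is essentially the centralized proof plus one triangle-inequality decomposition. The only judgment call is the choice of intermediate reference point in the last step: picking $\tfrac{1}{|\mathcal G_t|}\sum_{i\in\mathcal G_t}\nabla J(\theta_t^{(i)})$ (rather than, say, $\nabla J(\bar\theta_t)$ itself or one specific $\nabla J(\theta_t^{(k)})$) is what makes the resulting bound interface cleanly with the distortion recursion of Lemma~\ref{lem:DecByzPG-distortion} and with the diameter control of Lemma~\ref{lem:diam-error} used in the main convergence proof.
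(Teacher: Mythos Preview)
Your proposal is correct and follows essentially the same route as the paper's proof: $L$-smoothness, the polarization identity~\eqref{eqn:scalar-prod}, then the split via~\eqref{eqn:norm-of-sum} with the intermediate point $\tfrac{1}{|\mathcal G_t|}\sum_{i\in\mathcal G_t}\nabla J(\theta_t^{(i)})$, and finally $L$-smoothness together with the diameter bound~\eqref{eqn:diam-bound} to control the remaining piece. (Note that both your argument and the paper's actually produce $-\eta L^2\mathcal E_t^\Delta$ rather than $+\eta L^2\mathcal E_t^\Delta$; the sign in the stated inequality is a typo, and it is the $-\eta L^2\mathcal E_t^\Delta$ version that is used, with the sign flipped, in the proof of Theorem~\ref{thm:DecByzPG}.)
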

\begin{proof}
By smoothness of $J$, i.e., Proposition \ref{prop:smoothness},
\begin{align*}
J \left( \bar{\theta}_{t+1} \right) &\geq J \left( \bar{\theta}_t \right) + \langle \nabla J \left( \bar{\theta}_t \right), \bar{\theta}_{t+1}-\bar{\theta}_t \rangle - \frac{L}{2} \left\| \bar{\theta}_{t+1}-\bar{\theta}_t \right\|^2\\
&= J \left( \bar{\theta}_t \right) + \eta \langle \nabla J \left( \bar{\theta}_t \right), \bar{\hat{v}}_t \rangle - \frac{\eta^2 L}{2} \left\| \bar{\hat{v}}_t \right\|^2\\
&\stackrel{(a)}{=} J \left( \bar{\theta}_t \right) + \frac{\eta}{2} \left\| \nabla J \left( \bar{\theta}_t \right) \right\|^2 + \frac{\eta}{2} \left\| \bar{\hat{v}}_t \right\|^2 - \frac{\eta}{2} \left\| \bar{\hat{v}}_t-\nabla J \left( \bar{\theta}_t \right) \right\|^2 - \frac{\eta^2L}{2} \left\| \bar{\hat{v}}_t \right\|^2\\
&= J \left( \bar{\theta}_t \right) + \frac{\eta}{2} \left\| \nabla J \left( \bar{\theta}_t \right) \right\|^2 + \left( \frac{1}{2\eta}-\frac{L}{2} \right)\left\| \bar{\theta}_{t+1}-\bar{\theta}_t \right\|^2 - \frac{\eta}{2} \left\| \bar{\hat{v}}_t-\nabla J \left( \bar{\theta}_t \right) \right\|^2 \\
&\stackrel{(b)}{=} J \left( \bar{\theta}_t \right) + \frac{\eta}{2} \left\| \nabla J \left( \bar{\theta}_t \right) \right\|^2 + \left( \frac{1}{2\eta}-\frac{L}{2} \right)\left\| \bar{\theta}_{t+1}-\bar{\theta}_t \right\|^2 - \eta \left\| \bar{\hat{v}}_t-\frac{1}{|\mathcal G_t|}\sum_{i \in \mathcal G_t}\nabla J \left( \theta_t^{(i)} \right) \right\|^2 \\
&\quad+ \eta \left\| \nabla J \left( \bar{\theta}_t \right)- \frac{1}{|\mathcal G_t|}\sum_{i \in \mathcal G_t}\nabla J \left( \theta_t^{(i)} \right) \right\|^2 \\
&\stackrel{(c)}{=} J \left( \bar{\theta}_t \right) + \frac{\eta}{2} \left\| \nabla J \left( \bar{\theta}_t \right) \right\|^2 + \left( \frac{1}{2\eta}-\frac{L}{2} \right)\left\| \bar{\theta}_{t+1}-\bar{\theta}_t \right\|^2 - \eta \left\| \bar{\hat{v}}_t-\frac{1}{|\mathcal G_t|}\sum_{i \in \mathcal G_t}\nabla J \left( \theta_t^{(i)} \right) \right\|^2 + \eta L^2 \mathcal E^{\Delta}_t
\end{align*}
where (a) is due to Lemma \ref{lem:basics},(\ref{eqn:scalar-prod}), (b) is by Lemma \ref{lem:basics}, (\ref{eqn:norm-of-sum}), and (c) easily follows from $L$-smoothness of $J$, i.e., Proposition \ref{prop:smoothness}, and the error bound (\ref{eqn:diam-bound}).
\end{proof}

As a next preparatory step, we want to show that the overall gradient estimates $\tilde{v}_t^{(i)}$ do not vary too much across different agents. In particular, we bound the average of pairwise variances as follows.
\begin{lemma}\label{lem:DecByzPG-variance-bound}
Let Assumption \ref{ass:byz-fraction} and \ref{ass:fin-var} hold. Then, there exists a constant $C_p>0$ such that
\begin{align*}
\widetilde{\mathcal T}_{1,t} &:= \frac{1}{|\mathcal G_t|(|\mathcal G_t|-1)} \sum_{i,l \in \mathcal G_t} \E \left[ \left\| \tilde{v}_t^{(i)}-\tilde{v}_t^{(l)} \right\|^2 \mid c_t=1 \right] \leq 8 \left( \frac{\sigma^2}{N} + L^2\E[\mathcal E^{\Delta}_t] \right) \\
\widetilde{\mathcal T}_{0,t} &:= \frac{1}{|\mathcal G_t|(|\mathcal G_t|-1)} \sum_{i,l \in \mathcal G_t} \E \left[ \left\| \tilde{v}_t^{(i)}-\tilde{v}_t^{(l)} \right\|^2 \mid c_t=0 \right] \leq \frac{12\overline{\mathcal E}^{\Delta,\kappa}}{\eta^2} + \frac{36 C_p \E\left[\mathcal E^{\Delta}_t\right]}{B} + \frac{18 C_p}{B} \E \left[ \left\| \bar{\theta}_t-\bar{\theta}_{t-1} \right\|^2 \right]
\end{align*}
and we will use $\widetilde{\mathcal T}_{1,t},\widetilde{\mathcal T}_{0,t}$ also in subsequent lemmas to refer to these terms.
\end{lemma}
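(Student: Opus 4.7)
My strategy is to handle the two branches of the probabilistic switch separately. In both branches, I decompose the pairwise difference $\tilde v_t^{(i)} - \tilde v_t^{(l)}$ into (i) per-agent noise terms, controlled by the variance of the relevant estimator, and (ii) a cross-agent bias, controlled by $L$-smoothness of $\nabla J$ (Proposition \ref{prop:smoothness}) together with the diameter $\mathcal E^\Delta_t$ from (\ref{eqn:diam-bound}). Whenever an earlier-iteration diameter $\mathcal E^\Delta_{t-1}$ shows up, the induction hypothesis $\E[\mathcal E^\Delta_{t-1}] \leq \overline{\mathcal E}^{\Delta,\kappa}$ of Lemma \ref{lem:diam-error} takes care of it.

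\textbf{Case $c_t=1$.} Since $\tilde v_t^{(i)} = \frac{1}{N}\sum_{j=1}^N g(\tau_{t,j}^{(i)} \mid \theta_t^{(i)})$, I write
\begin{align*}
\tilde v_t^{(i)} - \tilde v_t^{(l)} = (\tilde v_t^{(i)} - \nabla J(\theta_t^{(i)})) - (\tilde v_t^{(l)} - \nabla J(\theta_t^{(l)})) + (\nabla J(\theta_t^{(i)}) - \nabla J(\theta_t^{(l)}))
\end{align*}
and square via Lemma \ref{lem:basics}, equation (\ref{eqn:norm-of-sum}). The two noise pieces have expectation at most $\sigma^2/N$ by Lemma \ref{lem:indep-sampling-variance}, and the third piece is at most $L^2\|\theta_t^{(i)}-\theta_t^{(l)}\|^2 \leq L^2 \mathcal E^\Delta_t$ by smoothness and the definition of $\mathcal E^\Delta_t$. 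Averaging over $i,l \in \mathcal G_t$ yields the first bound up to absorbable constants.

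\textbf{Case $c_t=0$.} Writing $\tilde v_t^{(i)} = \hat\Delta^B(\theta_t^{(i)},\theta_{t-1}^{(i)}) + \hat v_{t-1}^{(i)}$ with $\hat v_{t-1}^{(i)} = \eta^{-1}(\theta_t^{(i)}-\theta_{t-1}^{(i)})$, (\ref{eqn:norm-of-sum}) gives
\begin{align*}
\E\|\tilde v_t^{(i)} - \tilde v_t^{(l)}\|^2 \leq 2\,\E\|\hat\Delta^B(\theta_t^{(i)},\theta_{t-1}^{(i)}) - \hat\Delta^B(\theta_t^{(l)},\theta_{t-1}^{(l)})\|^2 + 2\,\E\|\hat v_{t-1}^{(i)} - \hat v_{t-1}^{(l)}\|^2.
\end{align*}
For the second summand, $\hat v_{t-1}^{(i)} - \hat v_{t-1}^{(l)} = \eta^{-1}[(\theta_t^{(i)}-\theta_t^{(l)})-(\theta_{t-1}^{(i)}-\theta_{t-1}^{(l)})]$ has squared norm at most $2\eta^{-2}(\mathcal E^\Delta_t + \mathcal E^\Delta_{t-1})$; applying the induction hypothesis to both diameters yields the $\overline{\mathcal E}^{\Delta,\kappa}/\eta^2$ term of the stated bound. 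For the first summand, I split further into the per-agent variance of $\hat\Delta^B - \Delta$ at $i$ and $l$ plus a cross-agent bias $\|\Delta(\theta_t^{(i)},\theta_{t-1}^{(i)}) - \Delta(\theta_t^{(l)},\theta_{t-1}^{(l)})\|^2$. The per-agent variance satisfies the PAGE-style bound
\begin{align*}
\E\|\hat\Delta^B(\theta_t^{(i)},\theta_{t-1}^{(i)}) - \Delta(\theta_t^{(i)},\theta_{t-1}^{(i)})\|^2 \leq \frac{C_p}{B}\,\E\|\theta_t^{(i)}-\theta_{t-1}^{(i)}\|^2,
\end{align*}
derived from Assumption \ref{ass:fin-imp-var} and Proposition \ref{prop:smoothness} exactly as in step (b) of Lemma \ref{lem:ByzPG-distortion-aux}. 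Inserting $\pm\bar\theta_t \pm\bar\theta_{t-1}$ and applying (\ref{eqn:norm-of-sum}) rewrites
\begin{align*}
\|\theta_t^{(i)}-\theta_{t-1}^{(i)}\|^2 \leq 3(\mathcal E^\Delta_t + \|\bar\theta_t-\bar\theta_{t-1}\|^2 + \mathcal E^\Delta_{t-1}),
\end{align*}
so, using the induction hypothesis on the $\mathcal E^\Delta_{t-1}$ piece, I recover the terms $\frac{C_p \E[\mathcal E^\Delta_t]}{B}$ and $\frac{C_p}{B}\,\E\|\bar\theta_t-\bar\theta_{t-1}\|^2$ of the stated bound. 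The cross-agent bias is at most $2L^2(\mathcal E^\Delta_t + \mathcal E^\Delta_{t-1})$ by smoothness and is absorbed into the existing $\overline{\mathcal E}^{\Delta,\kappa}/\eta^2$ term via $\eta \leq 1/L$ (from the step-size choice in Theorem \ref{thm:DecByzPG}) together with the induction hypothesis.

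\textbf{Main obstacle.} The hard part is the $\eta^{-1}$-scaled realized gradient difference: over arbitrary honest agents, its pairwise norm would exceed every other term in the bound by a factor of $\eta^{-2}$. The whole point of restricting the sum to the good set $\mathcal G_t$ is that its parameters at both times $t$ and $t-1$ already satisfy the exponentially small diameter bound $\overline{\mathcal E}^{\Delta,\kappa}=\mathcal O(2^{-\kappa})$ delivered by $\kappa$ rounds of averaging agreement (Lemma \ref{lem:diam-error}), which is exactly what makes this $\eta^{-2}$ factor harmless. A secondary subtlety is that $\hat\Delta^B$ is evaluated at \emph{different} parameter pairs across agents, so the standard PAGE variance bound has to be combined with an $L$-smoothness-based control of the cross-agent bias, and $\|\theta_t^{(i)}-\theta_{t-1}^{(i)}\|^2$ must be split into precisely the three quantities $\mathcal E^\Delta_t$, $\mathcal E^\Delta_{t-1}$, and $\|\bar\theta_t-\bar\theta_{t-1}\|^2$ in which the rest of the convergence analysis is written.
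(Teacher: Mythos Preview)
Your proposal is correct and follows the paper's argument closely; the $c_t=1$ case is identical up to the pivot point ($\nabla J(\theta_t^{(i)})$ versus the paper's $\nabla J(\bar\theta_t)$). The one place you diverge is in the $c_t=0$ branch: you decompose $\|\hat\Delta^B_i - \hat\Delta^B_l\|^2$ into two per-agent variances $\|\hat\Delta^B-\Delta\|^2$ plus a cross-agent bias $\|\Delta_i-\Delta_l\|^2$, whereas the paper bypasses this by doing a single three-way split $3\|\hat\Delta^B_i\|^2 + 3\|\hat\Delta^B_l\|^2 + 3\|\hat v_{t-1}^{(i)}-\hat v_{t-1}^{(l)}\|^2$ and bounding the \emph{second moment} $\E\|\hat\Delta^B(\theta_t^{(i)},\theta_{t-1}^{(i)})\|^2 \leq \frac{C_p}{B}\E\|\theta_t^{(i)}-\theta_{t-1}^{(i)}\|^2$ directly (citing Lemma~B.1 of \cite{gargiani2022page}). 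This eliminates your cross-agent bias term and the attendant appeal to $\eta \leq 1/L$, and it delivers the stated constants $12,36,18$ without further absorption. Your route is perfectly valid but produces somewhat larger constants (e.g.\ in front of $\overline{\mathcal E}^{\Delta,\kappa}/\eta^2$) and leaves an extra $C_p\overline{\mathcal E}^{\Delta,\kappa}/B$ term to be swallowed; since $C_p$ is free and the downstream analysis in Lemma~\ref{lem:DecByzPG-distortion} is asymptotic, this is harmless.
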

\begin{proof}
For the first bound, we plug in the definition of $\tilde{v}_t^{(i)}$ as in Algorithm \ref{alg:ByzPG}, and use basic fact (\ref{eqn:norm-of-sum}) from Lemma \ref{lem:basics}, to get
\begin{align*}
\widetilde{\mathcal T}_{1,t} &= \frac{1}{|\mathcal G_t|(|\mathcal G_t|-1)} \sum_{i,l \in \mathcal G_t} \E \left[ \left\| \tilde{v}_t^{(i)}-\tilde{v}_t^{(l)} \right\|^2 \mid c_t=1 \right] \\
&= \frac{1}{|\mathcal G_t|(|\mathcal G_t|-1)} \sum_{i,l \in \mathcal G_t} \E \left[ \left\| \frac{1}{N} \sum_{j=1}^N g \left( \tau_{t,j}^{(i)} \mid \theta^{(i)}_t \right) - \frac{1}{N} \sum_{j=1}^N g \left( \tau_{t,j}^{(l)} \mid \theta^{(l)}_t \right) \right\|^2 \right] \\
&\leq \frac{1}{|\mathcal G_t|(|\mathcal G_t|-1)} \sum_{i,l \in \mathcal G_t} \E \left[ 2\left\| \frac{1}{N} \sum_{j=1}^N g \left( \tau_{t,j}^{(i)} \mid \theta^{(i)}_t \right) - \nabla J \left( \bar\theta_t \right) \right\|^2 + 2\left\| \frac{1}{N} \sum_{j=1}^N g \left( \tau_{t,j}^{(l)} \mid \theta^{(l)}_t \right) - \nabla J \left( \bar\theta_t \right) \right\|^2 \right] \\
&= \frac{4}{|\mathcal G_t|} \sum_{i \in \mathcal G_t} \E \left[ \left\| \frac{1}{N} \sum_{j=1}^N g \left( \tau_{t,j}^{(i)} \mid \theta^{(i)}_t \right) - \nabla J \left( \bar\theta_t \right) \right\|^2 \right] \\
&\leq \frac{4}{|\mathcal G_t|} \sum_{i \in \mathcal G_t} 2\underbrace{\E \left[ \left\| \frac{1}{N}\sum_{j=1}^N g \left( \tau_{t,j}^{(i)} \mid \theta_t^{(i)} \right) - \nabla J \left( \theta_t^{(i)} \right) \right\|^2 \right]}_{\stackrel{(a)}{\leq} \;\frac{\sigma^2}{N}} + 2\underbrace{\E \left[\left\|  \nabla J \left( \theta_t^{(i)} \right) - \nabla J \left( \bar{\theta}_t \right) \right\|^2 \right]}_{\stackrel{(b)}{\leq} \;L^2 \E\left[\mathcal E^{\Delta}_t\right]} \\
&\leq 8 \left( \frac{\sigma^2}{N} + L^2 \E\left[\mathcal E^{\Delta}_t\right] \right)
\end{align*}
where (a) is by Lemma \ref{lem:indep-sampling-variance} and (b) easily follows from $L$-smoothness of $J$, i.e., Proposition \ref{prop:smoothness}, and error bound (\ref{eqn:diam-bound}).

Then, the second bound follows from
\begin{align*}
\widetilde{\mathcal T}_{0,t} &= \frac{1}{|\mathcal G_t|(|\mathcal G_t|-1)} \sum_{i,l \in \mathcal G_t} \E \left[ \left\| \tilde{v}_t^{(i)}-\tilde{v}_t^{(l)} \right\|^2 \mid c_t=0 \right] \\
&= \frac{1}{|\mathcal G_t|(|\mathcal G_t|-1)} \sum_{i,l \in \mathcal G_t} \E \left[ \left\| \left( \hat{\Delta}^B \left( \theta_t^{(i)},\theta_{t-1}^{(i)} \right) + \hat{v}_{t-1}^{(i)} \right) - \left( \hat{\Delta}^B \left( \theta_t^{(l)},\theta_{t-1}^{(l)} \right) + \hat{v}_{t-1}^{(l)} \right) \right\|^2 \right] \\
&\leq \frac{1}{|\mathcal G_t|(|\mathcal G_t|-1)} \sum_{i,l \in \mathcal G_t} 3\,\E \left[ \left\| \hat{\Delta}^B \left( \theta_t^{(i)},\theta_{t-1}^{(i)} \right) \right\|^2 \right] + 3\,\E \left[ \left\| \hat{\Delta}^B \left( \theta_t^{(l)},\theta_{t-1}^{(l)} \right) \right\|^2 \right] + 3\,\E \left[ \left\| \hat{v}_{t-1}^{(i)} -  \hat{v}_{t-1}^{(l)} \right\|^2 \right] \\
&\stackrel{(a)}{\leq} \frac{12\overline{\mathcal E}^{\Delta,\kappa}}{\eta^2} + \frac{1}{|\mathcal G_t|(|\mathcal G_t|-1)} \sum_{i,l \in \mathcal G_t} \frac{3C_p}{B} \E \left[ \left\| \theta_t^{(i)}-\theta_{t-1}^{(i)} \right\|^2 \right] + \frac{3C_p}{B} \E \left[ \left\| \theta_t^{(l)}-\theta_{t-1}^{(l)} \right\|^2 \right] \\
&\leq \frac{12\overline{\mathcal E}^{\Delta,\kappa}}{\eta^2} + \frac{6C_p}{B|\mathcal G_t|}\sum_{i \in \mathcal G_t} \E \left[ \left\| \theta_t^{(i)}-\theta_{t-1}^{(i)} \right\|^2 \right] \\
&\leq \frac{12 \overline{\mathcal E}^{\Delta,\kappa}}{\eta^2} + \frac{6C_p}{B|\mathcal G_t|}\sum_{i \in \mathcal G_t} \E \left[ 3 \left\| \theta_t^{(i)}-\bar{\theta}_t \right\|^2 + 3 \left\| \bar{\theta}_t-\bar{\theta}_{t-1} \right\|^2 + 3 \left\| \bar{\theta}_{t-1}-\theta_{t-1}^{(i)} \right\|^2 \right] \\
&\stackrel{(b)}{\leq} \frac{12\overline{\mathcal E}^{\Delta,\kappa}}{\eta^2} + \frac{36 C_p \E \left[ \mathcal E^\Delta_t \right]}{B} + \frac{18 C_p}{B} \E \left[ \left\| \bar{\theta}_t-\bar{\theta}_{t-1} \right\|^2 \right].
\end{align*}
Step (a) is due to
\begin{align*}
\frac{1}{|\mathcal G_t| \left( |\mathcal G_t|-1 \right)} \sum _{i,l \in \mathcal G_t} \E \left[ \left\| \hat{v}_{t-1}^{(i)} -  \hat{v}_{t-1}^{(l)} \right\|^2 \right] &\leq \frac{1}{|\mathcal G_t| \left( |\mathcal G_t|-1 \right)} \sum _{i,l \in \mathcal G_t} \E \left[ \left\| \frac{1}{\eta} \left[ \left( \theta_t^{(i)}-\theta_t^{(l)} \right) + \left( \theta_{t-1}^{(i)}-\theta_{t-1}^{(l)} \right) \right] \right\|^2 \right] \\
&\leq \frac{2 \E \left[ \mathcal E^\Delta_t \right] + 2 \E \left[ \mathcal E^\Delta_{t-1} \right]}{\eta^2} \\
&\leq \frac{4 \overline{\mathcal E}^{\Delta,\kappa}}{\eta^2}
\end{align*}
where in the last step we use our bound $\overline{\mathcal E}^{\Delta,\kappa}$ from Lemma \ref{lem:diam-error} that is independent of $t$. The second fact used in (a) is that there exists constant $C_p>0$ such that
\begin{align*}
\E \left[ \left\| \hat{\Delta}^B \left( \theta_t^{(i)},\theta_{t-1}^{(i)} \right) \right\|^2 \right] \leq \frac{C_p}{B} \E \left[ \left\| \theta_t^{(i)}-\theta_{t-1}^{(i)} \right\|^2 \right]
\end{align*}
which follows similarly as Lemma B.1 in \cite{gargiani2022page}.
Step (b) is because, using Lemma \ref{lem:avg-vec-bound} in the first step, we have
\begin{align*}
\frac{1}{|\mathcal G_t|} \sum _{i \in \mathcal G_t} \left\| \theta_t^{(i)}-\bar{\theta}_t \right\|^2 \leq \frac{1}{|\mathcal G_t| \left( |\mathcal G_t|-1 \right)} \sum _{i,l \in \mathcal G_t} \left\| \theta_t^{(i)}-\theta^{(l)}_t \right\|^2 \leq \mathcal E^{\Delta}_t.
\end{align*}
\end{proof}

Next, we want to see how good our \emph{realized} average gradient estimates $\bar{\hat{v}}_t$ are, i.e., how much they deviate from $\nabla J \left( \bar{\theta}_t \right)$ which is what we aim to estimate. Note that this is a key component since both the aggregation and averaging steps introduce bias that we need to control here.
\begin{lemma}
\label{lem:DecByzPG-distortion-aux}
In the setting of Theorem \ref{thm:DecByzPG}, the following bounds hold:
\begin{align*}
\mathcal T_{x1,t}^{(1)} &:= \E \left[ \left\| \bar{\hat{v}}_t-\bar{v}_t \right\|^2 \mid c_{t-1}=1 \right] \leq \frac{C_{avg}^2}{\eta^2}\, \left[ 2 \E[\mathcal E^\Delta_t] + \frac{10 \eta^2C_{ra}\alpha}{\bar{\epsilon}}\widetilde{\mathcal T}_{1,t} \right], \\
\mathcal T_{x0,t}^{(1)} &:= \E \left[ \left\| \bar{\hat{v}}_t-\bar{v}_t \right\|^2 \mid c_{t-1}=0 \right] \leq \frac{C_{avg}^2}{\eta^2}\, \left[ 2 \E[\mathcal E^\Delta_t] + \frac{10 \eta^2 C_{ra}\alpha}{\bar{\epsilon}} \widetilde{\mathcal T}_{0,t} \right], \\
\mathcal T_{1x,t}^{(2)} &:= \E \left[ \left\| \bar{v}_t-\bar{\tilde{v}}_t \right\|^2 \mid c_t=1 \right] \leq C_{ra}\alpha \widetilde{\mathcal T}_{1,t}, \\
\mathcal T_{0x,t}^{(2)} &:= \E \left[ \left\| \bar{v}_t-\bar{\tilde{v}}_t \right\|^2 \mid c_t=0 \right] \leq C_{ra}\alpha \widetilde{\mathcal T}_{0,t}, \\
\mathcal T_{1x,t}^{(3)} &:= \E \left[ \left\| \bar{\tilde{v}}_t-\frac{1}{|\mathcal G_t|} \sum_{i \in \mathcal G_t}  \nabla J \left( \theta_t^{(i)} \right) \right\|^2 \mid c_t=1 \right] \leq \frac{4\sigma^2}{KN}, \quad\text{and}, \\
\mathcal T_{0x,t}^{(3)} &:= \E \left[ \left\| \bar{\tilde{v}}_t - \frac{1}{|\mathcal G_t|} \sum_{i \in \mathcal G_t}  \nabla J \left( \theta_t^{(i)} \right) \right\|^2 \mid c_t=0 \right] \\ &\leq \E \left[ \left\| \bar{\hat{v}}_{t-1}-\frac{1}{|\mathcal G_t|} \sum_{i \in \mathcal G_t}  \nabla J \left( \theta_{t-1}^{(i)} \right) \right\|^2 \right] + \left( \frac{8 C_p}{BK} + \frac{8L^2}{K} \right) \E \left[ \left\| \bar{\theta}_t - \bar{\theta}_{t-1} \right\|^2 \right]
\end{align*}
where $\mathcal T_{ab,t}^{(i)}$ denotes the error in iteration $t$ with $c_t=a, c_{t-1}=b$, and due to averaging agreement for $i=1$, due to aggregation for $i=2$, and due to the randomness of sampling for $i=3$.
\end{lemma}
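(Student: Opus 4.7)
The plan is to prove each of the six bounds separately, grouped by the source of error: averaging-agreement bias for $\mathcal T_{\cdot\cdot,t}^{(1)}$, aggregation bias for $\mathcal T_{\cdot\cdot,t}^{(2)}$, and sampling noise for $\mathcal T_{\cdot\cdot,t}^{(3)}$. Each piece reduces to invoking the appropriate structural guarantee (Definition~\ref{def:agree}, Definition~\ref{def:aggregate}, or Lemma~\ref{lem:indep-sampling-variance}) once the right algebraic rewrite is in place; only $\mathcal T_{0x,t}^{(3)}$ requires a careful PAGE-style decomposition mirroring the centralized Lemma~\ref{lem:ByzPG-distortion-aux}.

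For $\mathcal T_{x1,t}^{(1)}$ and $\mathcal T_{x0,t}^{(1)}$, the starting observation is that $\hat v_t^{(k)}=\frac{1}{\eta}(\theta_{t+1}^{(k)}-\theta_t^{(k)})$ and $\tilde\theta_{t+1}^{(k)}=\theta_t^{(k)}+\eta v_t^{(k)}$, so averaging over $\mathcal G_t$ gives $\bar{\hat v}_t-\bar v_t=\frac{1}{\eta}(\bar\theta_{t+1}-\bar{\tilde\theta}_{t+1})$. Since $\theta_{t+1}^{(k)}$ is the output of \textbf{Avg-Agree}$_\kappa$ applied to the $\tilde\theta_{t+1}^{(k')}$, Definition~\ref{def:agree} gives $\|\bar\theta_{t+1}-\bar{\tilde\theta}_{t+1}\|\le C_{avg}\,\Delta_2(\langle\tilde\theta_{t+1}^{(i)}\rangle_{i\in\mathcal G_t})$. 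Squaring and conditioning on the appropriate value of $c_t$, the two claimed bounds follow by directly substituting Lemma~\ref{lem:diam}. For $\mathcal T_{1x,t}^{(2)}$ and $\mathcal T_{0x,t}^{(2)}$, I write $\bar v_t-\bar{\tilde v}_t=\frac{1}{|\mathcal G_t|}\sum_{i\in\mathcal G_t}(v_t^{(i)}-\bar{\tilde v}_t)$, push the $\|\cdot\|^2$ inside the sum with Jensen, and apply Definition~\ref{def:aggregate} to each $v_t^{(i)}$ with $\mathcal G_t$ in the role of the reference honest set (exactly as in the closing step of the proof of Lemma~\ref{lem:diam}), yielding $\E\|v_t^{(i)}-\bar{\tilde v}_t\|^2\le C_{ra}\alpha\widetilde{\mathcal T}_{c_t,t}$; averaging over $i$ preserves the bound.

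The sampling term $\mathcal T_{1x,t}^{(3)}$ is immediate: when $c_t=1$, the $\tilde v_t^{(i)}$ are independent unbiased estimates of the respective $\nabla J(\theta_t^{(i)})$ with per-agent conditional variance $\le\sigma^2/N$ by Lemma~\ref{lem:indep-sampling-variance}, so averaging over $\mathcal G_t$ yields a variance at most $\sigma^2/(|\mathcal G_t|N)\le 4\sigma^2/(KN)$ using $|\mathcal G_t|\ge(1-\bar\alpha)K\ge K/2$. For $\mathcal T_{0x,t}^{(3)}$, I substitute the variance-reduced expression $\tilde v_t^{(i)}=\hat\Delta^B(\theta_t^{(i)},\theta_{t-1}^{(i)})+\hat v_{t-1}^{(i)}$, add and subtract $\frac{1}{|\mathcal G_t|}\sum_i\nabla J(\theta_{t-1}^{(i)})$, and split the residual as $X_1+X_2$ where $X_1=\bar{\hat v}_{t-1}-\frac{1}{|\mathcal G_t|}\sum_i\nabla J(\theta_{t-1}^{(i)})$ is measurable w.r.t.\ the history prior to iteration~$t$'s sampling, and $X_2=\frac{1}{|\mathcal G_t|}\sum_i[\hat\Delta^B(\theta_t^{(i)},\theta_{t-1}^{(i)})-\Delta(\theta_t^{(i)},\theta_{t-1}^{(i)})]$ is mean-zero conditional on that history (by unbiasedness of the importance-weighted estimator). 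The cross term vanishes, leaving $\E\|X_1\|^2+\E\|X_2\|^2$; the second moment of $X_2$ is then controlled via (i) independence of per-agent samples (dividing by $|\mathcal G_t|$), (ii) the per-agent PAGE-style bound $\E\|\hat\Delta^B_i-\Delta_i\|^2\le(C_p/B)\|\theta_t^{(i)}-\theta_{t-1}^{(i)}\|^2$ inherited from Proposition~\ref{prop:smoothness}, and (iii) the triangle-type bound $\|\theta_t^{(i)}-\theta_{t-1}^{(i)}\|^2\le 3\|\theta_t^{(i)}-\bar\theta_t\|^2+3\|\bar\theta_t-\bar\theta_{t-1}\|^2+3\|\bar\theta_{t-1}-\theta_{t-1}^{(i)}\|^2$ combined with Lemma~\ref{lem:avg-vec-bound}, which converts individual displacements into a single global $\|\bar\theta_t-\bar\theta_{t-1}\|^2$ plus $\mathcal E^\Delta_t,\mathcal E^\Delta_{t-1}$ remnants; the latter are then absorbed through Lemma~\ref{lem:diam-error} into the $\mathcal O(2^{-\kappa})$ error budget that Theorem~\ref{thm:DecByzPG} tolerates.

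I expect the main obstacle to be $\mathcal T_{0x,t}^{(3)}$: pinning down the correct $\sigma$-algebra against which $X_1$ is measurable and $X_2$ is mean-zero so the cross term truly cancels (noting that $\mathcal G_t$ is determined before iteration~$t$'s sampling), and tracking how the per-agent movements $\|\theta_t^{(i)}-\theta_{t-1}^{(i)}\|$ collapse to a single global $\|\bar\theta_t-\bar\theta_{t-1}\|$ while retaining the $1/K$ gain that drives the decentralized speed-up. The other five bounds are essentially mechanical applications of the definitions once the averages are written out carefully.
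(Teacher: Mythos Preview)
Your proposal is correct and follows essentially the same approach as the paper: the identical rewrite $\bar{\hat v}_t-\bar v_t=\frac{1}{\eta}(\bar\theta_{t+1}-\bar{\tilde\theta}_{t+1})$ plus Definition~\ref{def:agree} and Lemma~\ref{lem:diam} for the $\mathcal T^{(1)}$ terms, Lemma~\ref{lem:avg-vec-bound} (your Jensen step) plus Definition~\ref{def:aggregate} for the $\mathcal T^{(2)}$ terms, independence across agents for $\mathcal T_{1x,t}^{(3)}$, and the same $X_1+X_2$ split with conditionally mean-zero $X_2$ for $\mathcal T_{0x,t}^{(3)}$. The only cosmetic differences are that the paper uses the looser $|\mathcal G_t|\ge K/4$ instead of your $K/2$, and it absorbs the $\mathcal E^\Delta_t,\mathcal E^\Delta_{t-1}$ remnants in $\mathcal T_{0x,t}^{(3)}$ silently whereas you flag them explicitly.
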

\begin{proof}
We proceed by showing each of the claimed bounds separately, as follows:
\begin{enumerate}
\item We have
\begin{align*}
\mathcal T_{x1,t}^{(1)}
&= \E \left[ \left\| \bar{\hat{v}}_t-\bar{v}_t \right\|^2 \mid c_{t-1}=1 \right] \\
&= \E \left[ \left\| \frac{1}{\eta} \left( \bar{\theta}_{t+1}-\bar{\theta}_{t} \right) - \frac{1}{\eta} \left( \bar{\tilde{\theta}}_{t+1}-\bar{\theta}_{t} \right) \right\|^2 \mid c_{t-1}=1 \right] \\
&= \frac{1}{\eta^2} \E \left[ \left\| \bar{\theta}_{t+1}-\bar{\tilde{\theta}}_{t+1} \right\|^2 \mid c_{t-1}=1 \right] \\
&\stackrel{(a)}{\leq} \frac{C_{avg}^2}{\eta^2}\, \E \left[ \Delta_2 \left( \langle \tilde{\theta}_t^{(i)} \rangle_{i \in \mathcal G_t} \right)^2 \mid c_{t-1}=1 \right] \\
&\stackrel{(b)}{\leq} \frac{C_{avg}^2}{\eta^2}\, \left[ 2 \E[\mathcal E^\Delta_t] + \frac{10 \eta^2}{\bar{\epsilon}} \widetilde{\mathcal T}_{1,t} \right]
\end{align*}
where (a) is by the definition of averaging agreement, i.e., Definition \ref{def:agree}, and (b) follows from Lemma \ref{lem:diam}.

\item This follows equivalently to 1. --- conditioning the expectation on $c_{t-1}=0$ instead of $c_{t-1}=1$ replaces $\widetilde{\mathcal T}_{1,t} $ by $\widetilde{\mathcal T}_{0,t}$ in the result.
\item First, by applying Lemma \ref{lem:avg-vec-bound} and Definition \ref{def:aggregate},
\begin{align*}
\mathcal T_{1x,t}^{(2)} = \E \left[ \left\| \bar{v}_t-\bar{\tilde{v}}_t \right\|^2 \mid c_t=1 \right]
&\leq \frac{1}{|\mathcal G_t|} \sum_{j \in \mathcal G_t} \E \left[ \left\| v^{(j)}_t-\bar{\tilde{v}}_t \right\|^2 \mid c_t=1 \right] \\
&\leq \frac{1}{|\mathcal G_t|} \sum_{j \in \mathcal G_t} \frac{C_{ra} \alpha}{|\mathcal G_t|(|\mathcal G_t|-1)} \sum_{i,l \in \mathcal G_t} \E \left[ \left\| \tilde{v}_t^{(i)}-\tilde{v}_t^{(l)} \right\|^2 \mid c_t=1 \right] \\
&= \frac{C_{ra} \alpha}{|\mathcal G_t|(|\mathcal G_t|-1)} \sum_{i,l \in \mathcal G_t} \E \left[ \left\| \tilde{v}_t^{(i)}-\tilde{v}_t^{(l)} \right\|^2 \mid c_t=1 \right] \\
&\leq C_{ra}\alpha \widetilde{\mathcal T}_{1,t}
\end{align*}
where the final step is the result of Lemma \ref{lem:DecByzPG-variance-bound}.
\item Again, this is equivalent to 3. --- conditioning the expectation on $c_{t}=0$ instead of $c_{t}=1$ replaces $\widetilde{\mathcal T}_{1,t} $ by $\widetilde{\mathcal T}_{0,t}$ in the result.
\item By plugging in the definition of $\bar{\tilde{v}}_t$, we get
\begin{align*}
\mathcal T_{1x,t}^{(3)} &= \E \left[ \left\| \bar{\tilde{v}}_t- \frac{1}{|\mathcal G_t|} \sum_{i \in \mathcal G_t} \nabla J \left( \theta^{(i)}_t \right) \right\|^2 \mid c_t=1 \right] \\
&\leq \E \left[ \left\| \frac{1}{|\mathcal G_t|} \sum_{i \in \mathcal G_t} \frac{1}{N} \sum_{j=1}^N g \left( \tau_{t,j}^{(i)} \mid \theta_t^{(i)} \right) - \frac{1}{|\mathcal G_t|} \sum_{i \in \mathcal G_t} \nabla J \left( \theta^{(i)}_t \right) \right\|^2 \right] \\
&= \E \left[ \left\| \frac{1}{|\mathcal G_t|N} \sum_{i \in \mathcal G_t} \sum_{j=1}^N g \left( \tau_{t,j}^{(i)} \mid \theta_t^{(i)} \right) - \nabla J \left( \theta^{(i)}_t \right) \right\|^2 \right] \\
&\leq \frac{\sigma^2}{|\mathcal G_t|N} \\
&\leq \frac{4\sigma^2}{KN}
\end{align*}
where the second to last step follows by an argument similar to Lemma \ref{lem:indep-sampling-variance}. The last step then is due to $|\mathcal G_t| \geq (1-\bar\alpha)K \geq (1-\alpha_{\max})K \geq K/4$.
\item First, observe that $\hat{\Delta}^M \left( \theta_t^{(k)},\theta_{t-1}^{(k)} \right)$ is an unbiased estimate of $\Delta \left( \theta_t^{(k)},\theta_{t-1}^{(k)} \right)$.
Then,
\begin{align*}
\mathcal T_{0x,t}^{(3)} &= \E \left[ \left\| \bar{\tilde{v}}_t - \frac{1}{|\mathcal G_t|} \sum_{i \in \mathcal G_t}  \nabla J \left( \theta_t^{(i)} \right) \right\|^2 \mid c_t=0 \right] \\
&= \E \left[ \left\| \frac{1}{|\mathcal G_t|} \sum_{i \in \mathcal G_t} \hat{\Delta}_i^B \left( \theta_t^{(i)},\theta_{t-1}^{(i)} \right) + \hat{v}_{t-1}^{(i)} - \nabla J \left( \theta_t^{(i)} \right) \right\|^2 \right] \\
&= \E \left[ \left\| \frac{1}{|\mathcal G_t|} \sum_{i \in \mathcal G_t} \hat{\Delta}_i^B \left( \theta_t^{(i)},\theta_{t-1}^{(i)} \right) + \hat{v}_{t-1}^{(i)} + \nabla J \left( \theta_{t-1}^{(i)} \right) - \nabla J \left( \theta_{t-1}^{(i)} \right) - \nabla J \left( \theta_t^{(i)} \right) \right\|^2 \right] \\
&= \E \Biggr[ \Biggr\| \underbrace{\Bigg( \bar{\hat{v}}_{t-1}- \frac{1}{|\mathcal G_t|} \sum_{i \in \mathcal G_t} \nabla J \left( \theta^{(i)}_{t-1} \right) \Bigg)}_{=: X_1} + \underbrace{\Bigg( \frac{1}{|\mathcal G_t|} \sum_{i \in \mathcal G_t} \hat{\Delta}_i^B \left( \theta_t^{(i)},\theta_{t-1}^{(i)} \right) - \Delta_i \left( \theta_t^{(i)},\theta_{t-1}^{(i)} \right) \Bigg)}_{=:X_2} \Biggr\|^2 \Biggr] \\
&\stackrel{(a)}{=} \E \left[ \left\| \bar{\hat{v}}_{t-1}- \frac{1}{|\mathcal G_t|} \sum_{i \in \mathcal G_t} \nabla J \left( \theta^{(i)}_{t-1} \right) \right\|^2 \right] + \E \left[ \left\| \frac{1}{|\mathcal G_t|} \sum_{i \in \mathcal G_t} \hat{\Delta}_i^B \left( \theta_t^{(i)},\theta_{t-1}^{(i)} \right) - \Delta_i \left( \theta_t^{(i)},\theta_{t-1}^{(i)} \right) \right\|^2 \right] \\
&\stackrel{(b)}{\leq} \E \left[ \left\| \bar{\hat{v}}_{t-1}- \frac{1}{|\mathcal G_t|} \sum_{i \in \mathcal G_t} \nabla J \left( \theta^{(i)}_{t-1} \right) \right\|^2 \right] + \frac{4}{K^2} \sum_{i \in \mathcal G_t} \E \left[ \left\| \hat{\Delta}_i^B \left( \theta_t^{(i)},\theta_{t-1}^{(i)} \right) - \Delta_i \left( \theta_t^{(i)},\theta_{t-1}^{(i)} \right) \right\|^2 \right] \\
&\stackrel{(c)}{\leq} \E \left[ \left\| \bar{\hat{v}}_{t-1}- \frac{1}{|\mathcal G_t|} \sum_{i \in \mathcal G_t} \nabla J \left( \theta^{(i)}_{t-1} \right) \right\|^2 \right] + \left( \frac{8 C_p}{BK} + \frac{8L^2}{K} \right) \E \left[ \left\| \bar{\theta}_t - \bar{\theta}_{t-1} \right\|^2 \right]
\end{align*}

Step (a) requires further justification: Note that $X_1$ does not depend on any randomness at step $t$, and also $\E \left[ X_2 \right]=0$. Hence, denoting by $\E_t$ the expectation over the randomness in step $t$, we can use the tower rule to get
\begin{align*}
\E \left[ \left\| X_1+X_2 \right\|^2 \right] &= \E \left[ \E_t \left[ \left\| X_1+X_2 \right\|^2 \right] \right] \\
&= \E \Bigg[ \left\| X_1 \right\|^2 + 2 \langle X_1, \underbrace{\E_t \left[ X_2 \right]}_{=0} \rangle + \E_t \left[ \left\| X_2 \right\|^2 \right] \Bigg] \\[4pt]
&= \E \left[ \left\| X_1 \right\|^2 \right]  + \E \left[ \left\| X_2 \right\|^2 \right]
\end{align*}
which then justifies (a). Step (b) follows from a similar argument on a sum of $|\mathcal G_t|$ random variables, the fact that $\E \left[ \hat{\Delta}^M \left( \theta_t^{(k)},\theta_{t-1}^{(k)} \right) \right] = \Delta \left( \theta_t^{(k)},\theta_{t-1}^{(k)} \right)$, and $|\mathcal G_t| \geq K/4$ as noted above. Step (c) then follows from the same argument used in the second part of the proof of Lemma \ref{lem:DecByzPG-variance-bound}.
\end{enumerate}
\end{proof}

\begin{lemma}\label{lem:DecByzPG-distortion}
In the setting of Lemma \ref{lem:DecByzPG-distortion-aux}, let $\mathcal T_t:=\left\| \bar{\hat{v}}_{t}-\frac{1}{|\mathcal G_t|} \sum_{i \in \mathcal G_t} \nabla J \left( \theta^{(i)}_t \right) \right\|^2$. Then, we have
\begin{align*}
\E \left[ \left\| \mathcal T_t \right\|^2 \right] &\leq \left( 1-\frac{p}{2} \right) \E \left[ \left\| \mathcal T_{t-1} \right\|^2 \right] + \frac{Ap}{4} \E \left[ \left\| \bar{\theta}_t - \bar{\theta}_{t-1} \right\|^2 \right] + \frac{C_1\sigma^2p\alpha}{4N} + \frac{C_2\sigma^2p}{4KN}
\end{align*}
where $C_1,C_2>0$ are constants, and $A = \Theta \left( \frac{\alpha}{p^2B} + \frac{1}{pK} \right)$.
\end{lemma}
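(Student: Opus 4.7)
The plan is to apply the tower property of expectation to the independent switch $c_t$, writing $\E[\mathcal T_t] = p\,\E[\mathcal T_t \mid c_t=1] + (1-p)\,\E[\mathcal T_t \mid c_t=0]$, and then in each conditional branch decompose the residual $\bar{\hat v}_t - \tfrac{1}{|\mathcal G_t|}\sum_{i\in\mathcal G_t}\nabla J(\theta_t^{(i)})$ into three summands: the averaging-agreement bias $\bar{\hat v}_t - \bar v_t$, the robust-aggregation bias $\bar v_t - \bar{\tilde v}_t$, and the sampling error $\bar{\tilde v}_t - \tfrac{1}{|\mathcal G_t|}\sum_{i\in\mathcal G_t}\nabla J(\theta_t^{(i)})$. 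Each of these is controlled term-by-term by the six per-component bounds in Lemma~\ref{lem:DecByzPG-distortion-aux}, with the variance-diameter quantities $\widetilde{\mathcal T}_{0,t}$ and $\widetilde{\mathcal T}_{1,t}$ further expanded via Lemma~\ref{lem:DecByzPG-variance-bound}, and all occurrences of $\E[\mathcal E^\Delta_t]$ absorbed through Lemma~\ref{lem:diam-error}, which guarantees $\E[\mathcal E^\Delta_t] \leq \overline{\mathcal E}^{\Delta,\kappa} \leq \mathcal O(2^{-\kappa})$.

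In the branch $c_t=1$, none of the three components depends on $\mathcal T_{t-1}$: $\mathcal T_{1x,t}^{(3)}$ yields $\mathcal O(\sigma^2/(KN))$, $\mathcal T_{1x,t}^{(2)}$ yields $\mathcal O(C_{ra}\alpha\sigma^2/N)$ plus an $\overline{\mathcal E}^{\Delta,\kappa}$ residual, and $\mathcal T_{x1,t}^{(1)}$ contributes $\mathcal O(C_{avg}^2\overline{\mathcal E}^{\Delta,\kappa}/\eta^2)$ plus a lower-order variance piece. After multiplying by $p$, this branch feeds purely into the stated $C_1\sigma^2 p\alpha/(4N) + C_2\sigma^2 p/(4KN)$ noise floor, with the $\overline{\mathcal E}^{\Delta,\kappa}$ terms swallowed by the choice of $\kappa$ dictated by Theorem~\ref{thm:DecByzPG}. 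The recursion arises entirely from the branch $c_t=0$, where $\mathcal T_{0x,t}^{(3)}$ bounds the sampling component by $\E[\mathcal T_{t-1}] + \mathcal O\bigl(1/(BK)+L^2/K\bigr)\E[\|\bar\theta_t-\bar\theta_{t-1}\|^2]$. To pull out a clean coefficient in front of $\E[\mathcal T_{t-1}]$, I apply inequality~(\ref{eqn:beta-norm-sum}) of Lemma~\ref{lem:basics} with $\beta=p/4$ to the three-way decomposition, pairing the sampling summand (which produces the recursion) with coefficient $1+p/4$ and the combined agreement-plus-aggregation summand with coefficient $1+4/p$. The $(1+4/p)$-inflation is tolerable because the agreement-plus-aggregation contribution, through Lemma~\ref{lem:DecByzPG-variance-bound}'s expansion of $\widetilde{\mathcal T}_{0,t}$, itself carries a factor $1/B$ scaled by $\alpha$; the composite coefficient on $\E[\|\bar\theta_t-\bar\theta_{t-1}\|^2]$ ends up of order $\mathcal O(\alpha/(pB))$ after multiplication by $(1-p)$, which, matched against the $Ap/4$ form on the right-hand side, delivers the $\alpha/(p^2 B)$ component of $A$, while the analogous chain applied to the $\sigma^2/(KN)$ branch yields the $1/(pK)$ component.

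Combining the two branches, the coefficient of $\E[\mathcal T_{t-1}]$ comes solely from the $c_t=0$ branch and is at most $(1-p)(1+p/4)\le 1-p/2$; inequality~(\ref{eqn:p-prod-2}) of Lemma~\ref{lem:basics} is held in reserve in case a secondary $(1+\beta)$-split is needed to rewrite the $\mathcal G_{t-1}$-indexed quantity $\mathcal T_{t-1}$ in terms of the $\mathcal G_t$-indexed expression appearing in $\mathcal T_{0x,t}^{(3)}$, in which case the stronger identity $1.1p(1-p)+(1-p)^2(1+p/4)\le 1-p/2$ is precisely what is needed. The main obstacle will be the bookkeeping of these coefficients: I must tune $\beta$ tightly enough that the $(1+4/p)$-inflation of the agreement-plus-aggregation errors, combined with the $1/B$-scaling in $\widetilde{\mathcal T}_{0,t}$ and the $1/\eta^2$-scaling from averaging agreement, lines up exactly with the declared $A = \Theta(\alpha/(p^2 B) + 1/(pK))$, while simultaneously ensuring that the $\overline{\mathcal E}^{\Delta,\kappa}$ residuals from both branches are subsumed either by the sampling noise floor or by the $Ap/4\cdot\E[\|\bar\theta_t-\bar\theta_{t-1}\|^2]$ term through a sufficiently large choice of $\kappa$.
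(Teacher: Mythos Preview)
Your plan is essentially the paper's strategy: the same three-way split into agreement bias $\bar{\hat v}_t-\bar v_t$, aggregation bias $\bar v_t-\bar{\tilde v}_t$, and sampling error $\bar{\tilde v}_t-\tfrac{1}{|\mathcal G_t|}\sum_i\nabla J(\theta_t^{(i)})$; the same invocation of Lemmas~\ref{lem:DecByzPG-distortion-aux}, \ref{lem:DecByzPG-variance-bound}, and~\ref{lem:diam-error}; and the same $(1+\beta)$-splitting to keep the coefficient on $\E[\mathcal T_{t-1}]$ below $1-p/2$. The one structural difference is that the paper conditions on \emph{both} $c_t$ and $c_{t-1}$, because in Lemma~\ref{lem:DecByzPG-distortion-aux} the agreement-error bounds $\mathcal T^{(1)}_{x1,t}$ and $\mathcal T^{(1)}_{x0,t}$ are indexed by $c_{t-1}$ rather than $c_t$. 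This produces three branches with distinct $\beta$-splits---coefficients $(22,22,1.1)$ when $c_t=0,\,c_{t-1}=1$ and $(2+8/p,\,2+8/p,\,1+p/4)$ when $c_t=0,\,c_{t-1}=0$---and the recursion coefficient is then assembled via inequality~(\ref{eqn:p-prod-2}): $1.1p(1-p)+(1-p)^2(1+p/4)\le 1-p/2$. Your single-level conditioning with a uniform $\beta=p/4$ throughout the $c_t=0$ branch yields the cleaner $(1-p)(1+p/4)\le 1-p/2$ directly and still delivers the stated $A=\Theta(\alpha/(p^2B)+1/(pK))$, so it is if anything a simplification. Your hedge that (\ref{eqn:p-prod-2}) might be needed to reconcile a $\mathcal G_{t-1}$-versus-$\mathcal G_t$ indexing is not what actually happens in the paper; it enters solely because of the extra $c_{t-1}$ layer in the tower.
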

\begin{proof}
By the law of total expectation,
\begin{align*}
\E \left[ \mathcal T_t \right] &= p \E \left[ \mathcal T_t \mid c_t=1 \right]+ (1-p)\E \left[ \mathcal T_t \mid c_t=0 \right] \\
&= p\E \left[ \mathcal T_t \mid c_t=1 \right] + (1-p)\Big( p\E \left[ \mathcal T_t \mid c_t=0 \land c_{t-1}=1 \right] + (1-p) \E \left[ \mathcal T_t \mid c_t=0 \land c_{t-1}=0 \right] \Big).
\end{align*}
By applying (\ref{eqn:beta-norm-sum}) and (\ref{eqn:norm-of-sum}) of Lemma \ref{lem:basics}, we further choose the following way of bounding:
\begin{align*}
\E \left[ \mathcal T_t \mid c_t=1 \right] &\leq 3 \left( p \mathcal T_{x1,t}^{(1)} + (1-p) \mathcal T_{x0,t}^{(1)} \right) + 3 \mathcal T_{1x,t}^{(2)} + 3 \mathcal T_{1x,t}^{(3)}, \\
\E \left[ \mathcal T_t \mid c_t=0 \land c_{t-1}=1 \ \right] &\leq 22 \mathcal T_{x1,t}^{(1)} + 22 \mathcal T_{0x,t}^{(2)} + 1.1 \mathcal T_{0x,t}^{(3)}, \quad\text{and} \\
\E \left[ \mathcal T_t \mid c_t=0 \land c_{t-1}=0 \right] &\leq \left( 2+\frac{8}{p} \right) \mathcal T_{x0,t}^{(1)} + \left( 2+\frac{8}{p} \right) \mathcal T_{0x,t}^{(2)} + \left( 1+\frac{p}{4} \right) \mathcal T_{0x,t}^{(3)}.
\end{align*}
Putting everything together, we get
\begin{align*}
\E \left[ \mathcal T_t \right] &\leq \left( 3p^2+22p \left( 1-p \right) \right) \mathcal T_{x1,t}^{(1)} + \left( 3p \left( 1-p \right) + \left( 1-p \right)^2 \left( 2+\frac{8}{p} \right) \right) \mathcal T_{x0,t}^{(1)} + 3p \mathcal T_{1x,t}^{(2)} \\
&\;+ \left( 22p \left( 1-p \right) + \left( 1-p \right)^2 \left( 2+\frac{8}{p} \right) \right) \mathcal T_{0x,t}^{(2)} + 3p \mathcal T_{1x,t}^{(3)} \\&+ \left( 1.1p \left( 1-p \right) + \left( 1-p \right)^2 \left( 1+\frac{p}{4} \right) \right) \mathcal T_{0x,t}^{(3)} \\
&\leq 22p \mathcal T_{x1,t}^{(1)} + \frac{8}{p} \mathcal T_{x0,t}^{(1)} + 3p \mathcal T_{1x,t}^{(2)} + \frac{22}{p} \mathcal T_{0x,t}^{(2)} + 3p \mathcal T_{1x,t}^{(3)} + \left( 1-\frac{p}{2} \right) \mathcal T_{0x,t}^{(3)}
\end{align*}
where the last step follows from Lemma \ref{lem:basics}, (\ref{eqn:p-prod-2}).

After plugging in the results of Lemma \ref{lem:DecByzPG-distortion-aux}, rearranging, and choosing large enough constants $C_1,C_2>0$, we get
\begin{align*}
&\E \left[ \mathcal T_t \right] \\[4pt]
\leq &\;\widetilde{\mathcal T}_{1,t} \left[ \frac{220p c \alpha C_{avg}^2}{\bar{\epsilon}}+3pC_{ra}\alpha \right] + \widetilde{\mathcal T}_{0,t} \left[ \frac{80C_{ra}\alpha C_{avg}^2}{p \bar{\epsilon}}+\frac{22C_{ra}\alpha}{p} \right] + \frac{4C_{avg}^2\overline{\mathcal E}^{\Delta,\kappa}}{\eta^2}\left( 11p+\frac{4}{p} \right) \\
&\quad+ \frac{12p\sigma^2}{KN} + \left( 1-\frac{p}{2} \right) \mathcal T_{0x,t}^{(3)}\\[6pt]
= &\;\overline{\mathcal E}^{\Delta,\kappa} \cdot \underbrace{\left[ \frac{4C_{avg}^2}{\eta^2}\left( 11p+\frac{4}{p} \right)+8L^2 \left( \frac{220p c \alpha C_{avg}^2}{\bar{\epsilon}}+3pC_{ra}\alpha \right) + \left( \frac{12}{\eta^2} + \frac{36 C_p}{B} \right) \left( \frac{80C_{ra}\alpha C_{avg}^2}{p \bar{\epsilon}}+\frac{22C_{ra}\alpha}{p} \right) \right]}_{=: \mathcal Z}\\
&\quad + \frac{8\sigma^2}{N} \left( \frac{220p c \alpha C_{avg}^2}{\bar{\epsilon}}+3pC_{ra}\alpha + \frac{3p}{2K} \right) + \frac{18 C_p}{B} \E\left\| \bar{\theta}_t - \bar{\theta}_{t-1} \right\|^2 \left( \frac{80C_{ra}\alpha C_{avg}^2}{p \bar{\epsilon}}+\frac{22C_{ra}\alpha}{p} \right)\\
&\quad+ \left( 1-\frac{p}{2} \right) \left[ \E \left[ \mathcal T_{t-1} \right] + \left( \frac{8 C_p}{BK} + \frac{8L^2}{K} \right) \E \left[ \left\| \bar{\theta}_t - \bar{\theta}_{t-1} \right\|^2 \right] \right] \\
\stackrel{(a)}{\leq} &\;\frac{\sigma^2}{4N} \left( C_1 p \alpha + \frac{C_2 p}{K} \right) + \mathcal O \left( \frac{\alpha}{p} \right) \E\left\| \bar{\theta}_t - \bar{\theta}_{t-1} \right\|^2 + \left( 1-\frac{p}{2} \right) \left[ \E \left[ \mathcal T_{t-1} \right] + \mathcal O \left( \frac{1}{K} \right) \E \left[ \left\| \bar{\theta}_t - \bar{\theta}_{t-1} \right\|^2 \right] \right].
\end{align*}
In step (a) we switch to asymptotic analysis, treating $\sigma,C_{avg},\eta,\bar{\epsilon},L,C_{ra}$ and $C_p$ as constants, and using $\alpha,p \leq 1$, and $B \geq 1$. Moreover, because of Lemma \ref{lem:diam-error}, we know $\overline{\mathcal E}^{\Delta,\kappa} \leq \mathcal O \left( 2^{-\kappa} \right)$, and further, one can see that $\mathcal Z = \Theta \left( 1/p \right)$. Therefore, with our choice of $\kappa=\Theta \left( \log\frac{KN}{p^2} \right)$, we have $\overline{\mathcal E}^{\Delta,\kappa} \cdot \mathcal Z \leq \mathcal O \left( \frac{p}{KN} \right)$ and this term hence gets swallowed by the constant $C_2$. From the last line above, it becomes clear that for
\begin{align*}
A = \Theta \left( \frac{\alpha}{p^2B} + \frac{1}{pK} \right)
\end{align*}
the result of the lemma follows.
\end{proof}

\section{Experimental setup}
\label{app:experiment-setup}

\subsection{Hyperparameters}
Throughout all experiments, policies are parameterized as neural networks, and we use the Adam \cite{kingma2014adam} optimizer during training. We summarize the chosen hyperparameters, some of which are adopted from \cite{fan2021fault} and \cite{gargiani2022page}, in Table \ref{tab:params}, and refer to the repository provided as supplementary material for instructions on how to run the script that specifies the seeds and reproduces the experiments contained in this paper.

\subsection{Compute Requirements}
Our experiments have been conducted on a 4-core \emph{Intel(R) Xeon(R) CPU E3-1284L v4} clocked at 2.90GHz and equipped with 8Gbs of memory. Since no GPUs have been used, all experiments are fully reproducible using the specified seeds. Moreover, each individual experiment can terminate within less than 5 hours for CartPole tasks, and less than 15 hours for LunarLander tasks, on the specified hardware.

\begin{table}[ht]
\centering
\begin{tabular}{@{}ccc@{}}
\hline
Hyperparameters & CartPole & LunarLander \\ \hline
NN policy & Categorical MLP & Categorical MLP \\
NN hidden weights & 16,16 & 64,64 \\
NN activation & ReLU & Tanh \\
NN output activation & Tanh & Tanh \\
Step size (Adam) $\eta$ & 5e-3 & 1e-3 \\
Discount factor $\gamma$ & 0.999 & 0.999 \\
Task horizon $H$ & 500 & 1000 \\
Small batch size $B$ & 4 & 32 \\
Large batch size $N$ & 50 & 96 \\
Switching probability $p$ & 0.2 & 0.2
\end{tabular}
\bigskip
\caption{Hyperparameters used in our experiments.}
\label{tab:params}
\end{table}

\section{Additional experiments}
\label{app:add-experiments}

\subsection{Experiments for \textsc{ByzPG}}

\begin{figure}[h]
\centering
\includegraphics[width=.55\linewidth]{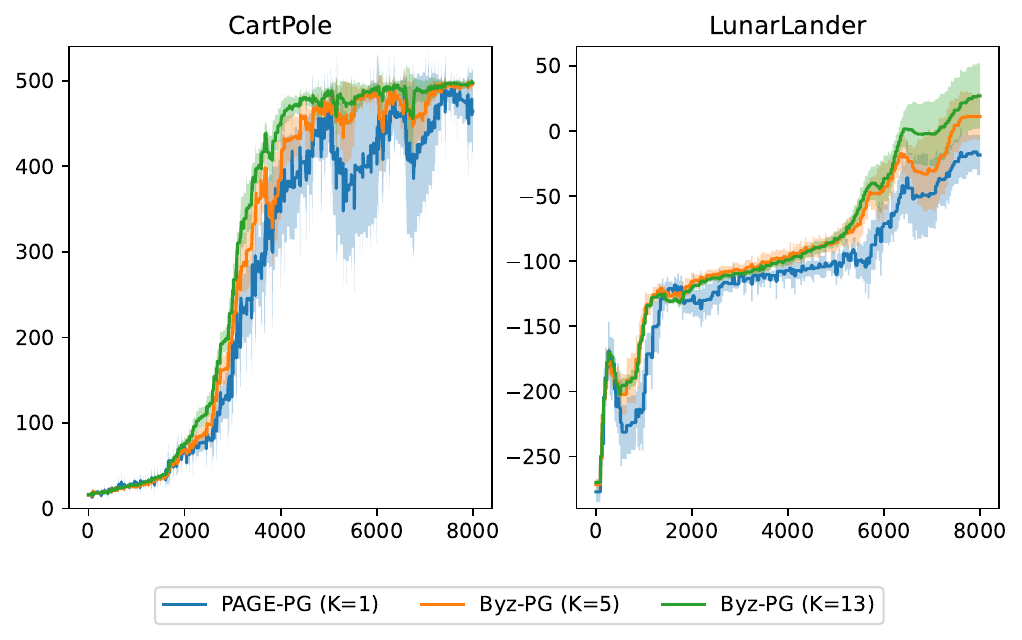}
\caption{Performance of \textsc{ByzPG} for different federation sizes when all agents behave honestly (i.e.\ $\alpha=0$).}
\Description{Performance of \textsc{ByzPG} for different federation sizes when all agents behave honestly (i.e.\ $\alpha=0$).}
\label{fig:speedup-byzPG}
\end{figure}
In Figure \ref{fig:speedup-byzPG}, we consider \textsc{ByzPG} in the case $\alpha=0$, with $K=1$ (which is equivalent to \textsc{PAGE-PG} \cite{gargiani2022page}), $K=5$, and $K=13$. Speed-up with increasing number of agents is observable in both environments, as suggested by Corollary \ref{cor:ByzPG}. Such faster convergence provides empirical evidence motivating agents to join a (centralized) federation.

In Figure \ref{fig:cartpole-attack-byzPG} and \ref{fig:lunar-attack-byzPG}, we compare \textsc{ByzPG} under the same attacks as described in Section \ref{sec:exp} to (a) \textsc{PAGE-PG} \cite{gargiani2022page}, the SOTA single-agent PG method that \textsc{ByzPG} reduces to when $K=1$, and (b) \textsc{Fed-PAGE-PG}, a naive centralized federated (but not fault-tolerant) version of \textsc{PAGE-PG} where aggregation of gradients is done by averaging.

\begin{figure}[H]
\centering
\includegraphics[width=0.9\linewidth]{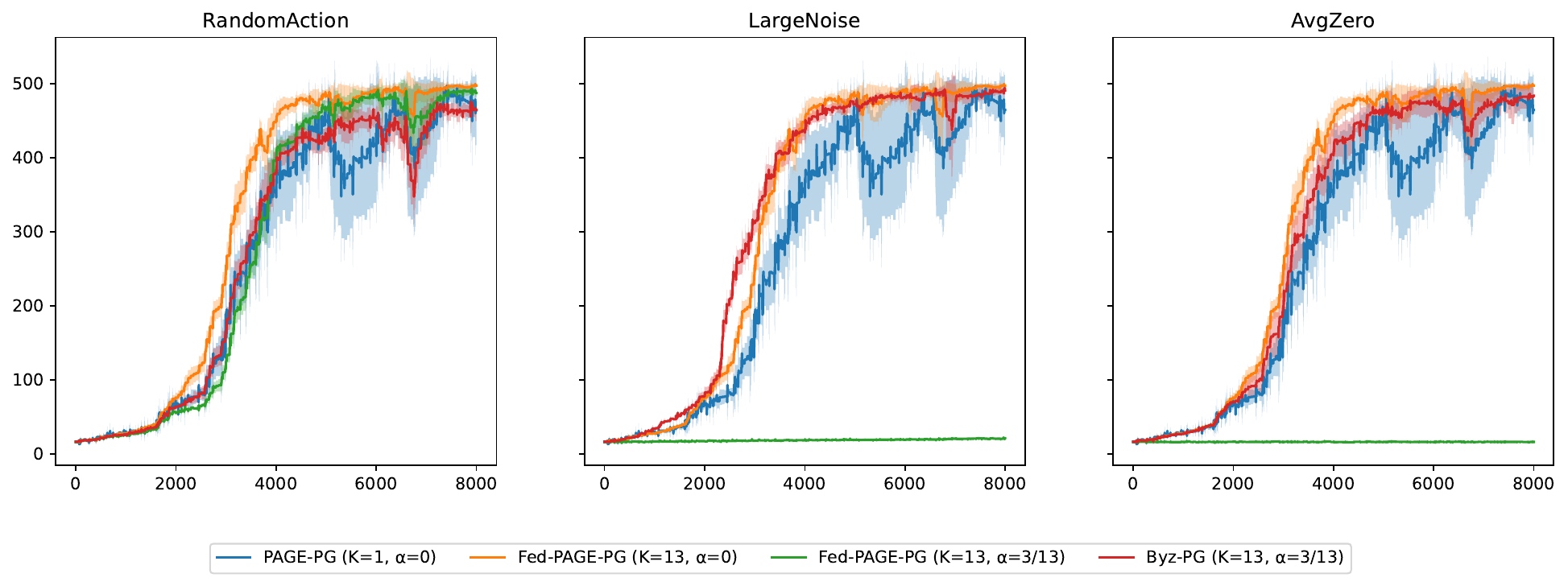}
\caption{Performance \& resilience of \textsc{ByzPG} for CartPole w.r.t.\ our three attack types.}
\label{fig:cartpole-attack-byzPG}
\Description{Performance \& resilience of \textsc{ByzPG} for CartPole w.r.t.\ our three attack types.}
\end{figure}

\begin{figure}[H]
\centering
\includegraphics[width=0.9\linewidth]{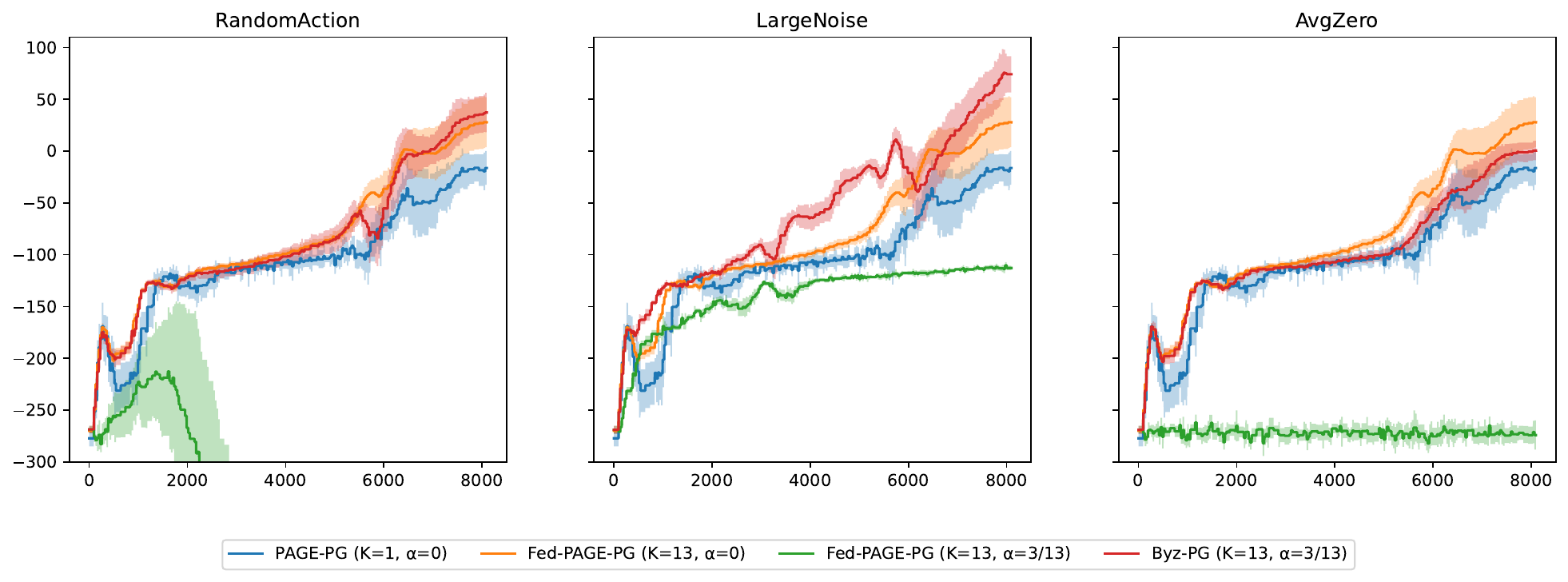}
\caption{Performance \& resilience of \textsc{ByzPG} for LunarLander w.r.t.\ our three attack types.}
\label{fig:lunar-attack-byzPG}
\Description{Performance \& resilience of \textsc{ByzPG} for LunarLander w.r.t.\ our three attack types.}
\end{figure}

We observe that for both environments and all attacks, \textsc{ByzPG} performs nearly on par with the unattacked \textsc{PAGE-PG}. This empirically supports the Byzantine fault-tolerance of \textsc{ByzPG}. Furthermore, for CartPole, as expected, \textbf{LargeNoise} and \textbf{AvgZero} are highly effective against the non-fault-tolerant method, while \textbf{RandomAction} barely shows any effect, similar to our observation for the decentralized case in Section \ref{sec:exp}. For the more difficult task of LunarLander, already \textbf{RandomAction} breaks \textsc{PAGE-PG}. Finally, we conclude by pointing out that in all cases \textsc{ByzPG} with $K=13$ and $\alpha > 0$ outperforms \textsc{PAGE-PG} with $K=1$ (and $\alpha=0$), meaning that in our experiments, despite the presence of Byzantines, joining the (centralized) federation is empirical beneficial for faster convergence.

\end{document}